\newcommand{\true}{{\tt true}}
\newcommand{\false}{{\tt false}}
\newcommand{\proj}[1]{{\Pi}}
\newcommand{\sel}[1]{{\sigma}}
\newcommand{\cut}[1]{}
\newcommand{\commentresolved}[1]{}
\newcommand{\set}[1]{\{#1\}}                    
\newtheorem{theorem}{Theorem}[section]          	
\newaliascnt{lemma}{theorem}				
\newtheorem{lemma}[lemma]{Lemma}              	
\newaliascnt{conjecture}{theorem}			
\newaliascnt{remark}{theorem}				
\newaliascnt{corollary}{theorem}			
\newtheorem{corollary}[corollary]{Corollary}      
\newaliascnt{definition}{theorem}			
\newtheorem{definition}[definition]{Definition}    
\newaliascnt{proposition}{theorem}			
\newtheorem{proposition}[proposition]{Proposition}  
\newaliascnt{example}{theorem}			
\newaliascnt{observation}{theorem}			
\newaliascnt{fact}{theorem}			
\newtheorem{fact}[fact]{Fact}  
\newlength\listingnumberwidth
\setlist{nolistsep} 
\let\citet\citep
\title{Understanding the Complexity of Lifted Inference \\ and  Asymmetric Weighted Model Counting}
\author{ 
{\bf Eric Gribkoff }\\
University of Washington \\
eagribko@cs.uw.edu \\
\And
{\bf Guy Van den Broeck}  \\
KU Leuven, UCLA \\
guyvdb@cs.ucla.edu \\
\And
{\bf Dan Suciu}   \\
University of Washington \\
suciu@cs.uw.edu \\
}
\newcommand{\algoname}{\textbf{Lift}$^{\textbf{R}}$}
\DeclareMathOperator*{\WFOMC}{WFOMC}
\DeclareMathOperator{\WMC}{WMC}
\newcommand{\pred}[1]{\mathtt{#1}}
\newcommand{\prof}{\pred{Prof}}
\newcommand{\advises}{\pred{Advises}}
\newcommand{\student}{\pred{Student}}
\newcommand{\anne}{\mathsf{Anne}}
\newcommand{\bob}{\mathsf{Bob}}
\newcommand{\charlie}{\mathsf{Charlie}} 
\def\Pr{\mathop{\rm Pr}\nolimits}
\newcommand{\w}{w}
\newcommand{\wt}{w^\star}
\newcommand{\wf}{\bar{w}^\star}
\begin{document}

\setlength{\belowdisplayskip}{0.5em}
\setlength{\belowdisplayshortskip}{0.5em}
\setlength{\abovedisplayskip}{0.5em}
\setlength{\abovedisplayshortskip}{0.5em}

\maketitle

\begin{abstract}

In this paper we study lifted inference for the Weighted First-Order Model Counting problem (WFOMC), which counts the assignments that satisfy a given sentence in first-order logic (FOL); it has applications in Statistical Relational Learning (SRL) and Probabilistic Databases (PDB).  We present several results.  First, we describe a lifted inference algorithm that generalizes prior approaches in SRL and PDB.  Second, we provide a novel dichotomy result for a non-trivial fragment of FO CNF sentences, showing that for each sentence the WFOMC problem is either in PTIME or \#P-hard in the size of the input domain; we prove that, in the first case our algorithm solves the WFOMC problem in PTIME, and in the second case it fails.  Third, we present several properties of the algorithm.  Finally, we discuss limitations of lifted inference for symmetric probabilistic databases (where the weights of ground literals depend only on the relation name, and not on the constants of the domain), and prove the impossibility of a dichotomy result for the complexity of probabilistic inference for the entire language FOL.

\end{abstract}

\section{INTRODUCTION}
\label{sec:intro}
Weighted model counting (WMC) is a problem at the core of many reasoning tasks. It is based on the model counting or \#SAT task~\citep{gomes2009model}, where the goal is to count assignments that satisfy a given logical sentence. 
WMC generalizes model counting by assigning a weight to each assignment, and computing the \emph{sum of their weights}.
WMC has many applications in AI and its importance is increasing. Most notably, it underlies state-of-the-art probabilistic inference algorithms for Bayesian networks~\citep{darwiche2002logical,sang2005solving,Chavira2008}, relational Bayesian networks~\citep{Chavira2006} and probabilistic programs~\citep{Fierens11}.

This paper is concerned with weighted \emph{first-order} model counting (WFOMC), where we sum the weights of assignments that satisfy a sentence in finite-domain first-order logic.
Again, this reasoning task underlies efficient algorithms for probabilistic reasoning, this time for popular representations in statistical relational learning~(SRL)~\citep{Getoor07:book}, such as Markov logic networks~\citep{DBLP:conf/ijcai/BroeckTMDR11,gogatePTP} and probabilistic logic programs~\citep{VanDenBroeckMD14}. 
Moreover, WFOMC uncovers a deep connection between AI and database research, where query evaluation in \emph{probabilistic databases}~(PDBs)~\citep{suciu2011probabilistic} essentially considers the same task. A PDB defines a probability, or weight, for every possible world, and each database query is a sentence encoding a set of worlds, whose combined  probability we want to compute.

Early on, the disconnect between compact relational representations of uncertainty, and the intractability of inference at the ground, propositional level was noted, and efforts were made to exploit the relational structure for inference, using so-called \emph{lifted inference} algorithms~\citep{poole2003first,Kersting:2012}.
SRL and PDB algorithms for WFOMC all fall into this category.
Despite these commonalities, there are also important differences.
SRL has so far considered \emph{symmetric} WFOMC problems, where relations of the same type are assumed to contribute equally to the probability of a world. This assumption holds for certain queries on SRL models, such as single marginals and partition functions, but fails for more complex conditional probability queries. These break lifted algorithms based on symmetric WFOMC~\citep{van2013complexity}.
PDBs, on the other hand, have considered the  \emph{asymmetric} WFOMC setting from the start. Probabilistic database tuples have distinct probabilities, many have probability zero, and no symmetries can be expected.
However, current asymmetric WFOMC algorithms \citep{dalvi2012dichotomy} suffer from a major limitation of their own, in that they can only count models of sentences in monotone disjunctive normal form~(MDNF) (i.e., DNF without negation). Such sentences represent unions of conjunctive database queries~(UCQ).
WFOMC encodings of SRL models almost always fall outside this class.

The present work seeks to upgrade a well-known PDB algorithm for asymmetric WFOMC~\citep{dalvi2012dichotomy} to the SRL setting, by enabling it to count models of arbitrary sentences  in conjunctive normal form~(CNF). This permits its use for lifted SRL inference with arbitrary soft or hard evidence, or equivalently, probabilistic database queries with negation.
Our first contribution is this algorithm, which we call \algoname, and is presented in Section~\ref{sec:algorithm}.

Although \algoname\ has clear practical merits, we are in fact motivated by fundamental theoretical questions. 
In the PDB setting, our algorithm is known to come with a sharp complexity guarantee, called the \emph{dichotomy theorem}~\citep{dalvi2012dichotomy}. By only looking at the structure of the first-order sentence (i.e., the database query), the algorithm reports failure when the problem is \#P-hard (in terms of data complexity), and otherwise guarantees to solve it in time polynomial in the domain (i.e., database) size. It can thus precisely classify MDNF sentences as being tractable or intractable for asymmetric WFOMC.
Whereas several complexity results for symmetric WFOMC exist~\citep{van2011completeness,jaeger2012liftability}, the complexity of asymmetric WFOMC for SRL queries with evidence is still poorly understood.
Our second and main contribution, presented in Section~\ref{sec:mainresults}, is a \emph{novel dichotomy result} over a small but non-trivial fragment of CNFs.
We completely classify this class of problems as either computable in polynomial time or \#P-hard.
This represents a first step towards proving the following conjecture: \algoname\ provides a dichotomy for asymmetric WFOMC on arbitrary CNF sentences, and therefore perfectly classifies all related SRL models as tractable or intractable for conditional queries.

As our third contribution, presented in Section~\ref{sec:examples}, we illustrate the algorithm with examples that show its application to common probabilistic models. We discuss the capabilities of \algoname\ that are not present in other lifted inference techniques. 

As our fourth and final contribution, in Section~\ref{sec:extensions}, 
we discuss extensions of our algorithm to symmetric WFOMC, but also show the impossibility of a dichotomy result for arbitrary first-order logic sentences.


\section{BACKGROUND}

We begin by introducing the necessary background on relational logic and weighted model counting.

\subsection{RELATIONAL LOGIC}

Throughout this paper, we will work with the relational fragment of first-order logic~(FOL), which we now briefly review.
An {\em atom} $P(t_1, \dots , t_n)$ consists of predicate $P/n$ of arity $n$ followed by $n$ arguments, which are either \textit{constants} or \textit{logical variables} $\{x, y, \dots\}$.
A \emph{literal} is an atom or its negation.
A \emph{formula} combines atoms with logical connectives and quantifiers $\exists$ and $\forall$.
A \emph{substitution} $[a/x]$ replaces all occurrences of $x$ by $a$. Its application to formula $F$ is denoted~$F[a/x]$.
A formula is a sentence if each logical variable $x$ is enclosed by a $\forall x$ or $\exists x$. 
A formula is {\em ground} if it contains no logical variables.
A \emph{clause} is a universally quantified disjunction of literals.
A \emph{term} is an existentially quantified conjunction of literals.
A \emph{CNF} is a conjunction of clauses, and a \emph{DNF} is a disjunction of terms. A \emph{monotone} CNF or DNF contains no negation symbols.
As usual, we drop the universal quantifiers from the CNF syntax.

The semantics of sentences are defined in the usual way~\citep{hinrichs2006herbrand}. An interpretation, or world, $I$ that satisfies sentence $\Delta$ is denoted by~$I~\models~\Delta$, and represented as a set of literals.
Our algorithm checks properties of sentences that are undecidable in general FOL, but decidable, with the following complexity, in the CNF fragment we investigate.
\begin{theorem} \label{th:equivalence}
  \citep{sagiv1980equivalences} 
  Checking whether logical implication $Q
  \Rightarrow Q'$ or equivalence $Q \equiv Q'$ holds between two CNF sentences is $\Pi^p_2$-complete.
\end{theorem}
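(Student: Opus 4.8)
The plan is to establish the two directions of $\Pi^p_2$-completeness separately: membership in $\Pi^p_2$, and $\Pi^p_2$-hardness. The starting point for both is the observation that, under the convention that a CNF sentence is a universally quantified conjunction of clauses, its negation is an existentially quantified disjunction of terms. Hence $Q \Rightarrow Q'$ holds iff the sentence $Q \wedge \neg Q'$ is unsatisfiable, and when put in prenex form this conjunction lies in the $\exists^*\forall^*$ (Bernays--Sch\"onfinkel) prefix class, with $\neg Q'$ contributing the existential block and $Q$ the universal block.

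For membership, the key fact is a polynomial-size counterexample property. Universal sentences are preserved under substructures, and existential sentences are preserved under extensions. So if $Q \wedge \neg Q'$ has any model $I$, let $W$ consist of the elements witnessing the existential variables of $\neg Q'$ together with the constants occurring in $Q$ and $Q'$, and let $J$ be the substructure of $I$ induced by $W$. Then $J \models Q$ (universal, preserved downward), the witnesses for $\neg Q'$ survive in $J$ so $J \models \neg Q'$, and $|W|$ is bounded by the number of variables of $Q'$ plus the number of constants, which is polynomial in the input. Thus $Q \not\Rightarrow Q'$ iff such a small $J$ exists, and over a fixed schema the relations of $J$ relevant to $Q$ and $Q'$ are of polynomial size. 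This yields a $\Sigma^p_2$ procedure for non-implication: existentially guess the small structure $J$ together with the assignment to the variables of the violated clause of $Q'$ (certifying $J \not\models Q'$ in polynomial time), and then verify $J \models Q$, which is a $\mathrm{coNP}$ test since it universally quantifies over the polynomially encoded assignments to the variables of $Q$ and checks each clause in polynomial time. Hence non-implication is in $\exists \cdot \mathrm{coNP} = \Sigma^p_2$, so implication is in $\Pi^p_2$; equivalence is the conjunction of two implications and is therefore also in $\Pi^p_2$.

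For hardness I would reduce from the canonical $\Pi^p_2$-complete problem of deciding the truth of a quantified Boolean formula $\Phi = \forall \bar u\, \exists \bar v\; \psi(\bar u, \bar v)$, building CNF sentences $Q, Q'$ over a fixed schema with $Q \Rightarrow Q'$ iff $\Phi$ is true. The encoding follows the tableau / canonical-database technique: a two-element domain (a ``$0$'' and a ``$1$'' distinguished by a unary relation) lets counterexample structures range over truth assignments to $\bar u$, the freedom in choosing a satisfying substitution plays the role of the existential choice of $\bar v$, and the clauses of $\psi$ are encoded so that $Q'$ is entailed exactly when every assignment to $\bar u$ extends to an assignment to $\bar v$ satisfying $\psi$. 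Hardness for equivalence then follows from hardness for implication, since $Q \Rightarrow Q'$ iff $Q \equiv Q \wedge Q'$ and $Q \wedge Q'$ is again a CNF sentence.

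I expect the hardness reduction to be the main obstacle. The membership direction is essentially the finite-model property of the Bernays--Sch\"onfinkel class together with a routine alternation count, whereas the reduction demands a delicate gadget construction that simultaneously encodes the universal block $\bar u$ in the shape of the counterexample structure, the existential block $\bar v$ in the choice of homomorphism, and the Boolean matrix $\psi$ in the clause structure, with correctness verified in both directions.
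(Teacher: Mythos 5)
The paper does not prove this statement at all: it is imported verbatim from \citet{sagiv1980equivalences}, so there is no in-paper argument to compare yours against, and I can only judge your sketch on its own terms. Your membership half is the standard argument and is essentially sound: $Q \Rightarrow Q'$ reduces to unsatisfiability of $Q \wedge \neg Q'$, which prenexes into the $\exists^*\forall^*$ class; the induced-substructure argument correctly yields a counterexample model whose domain is the existential witnesses plus the constants; and the alternation count ($\exists$ a small structure, then a coNP check of the universal sentence $Q$) puts non-implication in $\Sigma^p_2$. One caveat you should state explicitly: ``guess the small structure $J$'' is a polynomial-size guess only when the relation arities (equivalently, the number of distinct variables per atom) are bounded, since otherwise the relations of $J$ over the domain $W$ require on the order of $|W|^{\mathrm{arity}}$ bits to write down. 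This is harmless for the bounded-arity fragments this paper actually analyzes (Type-1 queries have arity at most two), but it is exactly the point at which the naive version of the argument would overshoot $\Sigma^p_2$, so it needs a sentence.

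The genuine gap is the hardness half, which is the substantive content of a completeness claim and which your proposal defers rather than proves. You correctly identify the source problem ($\forall\bar u\,\exists\bar v\,\psi$), and the passage from implication-hardness to equivalence-hardness via $Q \Rightarrow Q'$ iff $Q \equiv Q \wedge Q'$ is fine (the conjunction of two CNF sentences is again a CNF sentence). But the reduction itself --- the two-element-domain gadget, how assignments to $\bar u$ are encoded in the shape of the candidate counterexample structure, how the choice of $\bar v$ is realized by the instantiation of $Q'$, and the verification of both directions --- is precisely where all the work lies, and you acknowledge as much without supplying it. As written, the proposal establishes only the $\Pi^p_2$ upper bound (modulo the arity caveat above); the lower bound remains a plan, not a proof.
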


\subsection{WEIGHTED MODEL COUNTING}

Weighted model counting was introduced as a propositional reasoning problem.
\begin{definition}[WMC]
  Given 
  a propositional \emph{sentence} $\Delta$ over literals $\mathcal{L}$, and 
  a \emph{weight function} $\w: \mathcal{L} \rightarrow \mathbb{R}^{\geq0}$,
  the \emph{weighted model count} (WMC) is
  \begin{align*}
    &\WMC(\Delta,\w) = \sum_{I \models \Delta} \,\, \prod_{\ell \in I} \, \w(\ell).
  \end{align*}
\end{definition}
We will consider its generalization to \emph{weighted first-order model counting}~(WFOMC), where $\Delta$ is now a sentence in relational logic, and $\mathcal{L}$ consists of all ground first-order literals for a given domain of constants.

The WFOMC task captures query answering in probabilistic database. Take for example the database
\begin{align*}
  \prof(\anne) : 0.9 & & \prof(\charlie) : 0.1 \\
  \student(\bob) : 0.5 & & \student(\charlie) : 0.8 \\
  \advises(\anne,\bob) : 0.7 & & \advises(\bob,\charlie) : 0.1
\end{align*}
and the UCQ (monotone DNF) query
\begin{align*}
  Q \,\, = \,\, \exists x, \exists y,\, \prof(x) \land \advises(x,y) \land \student(y).
\end{align*}
If we set $\Delta = Q$ and $\w$ to map each literal to its probability in the database, then our query answer is 
\begin{align*}
  \Pr(Q) = \WFOMC(\Delta,\w) = 0.9 \cdot 0.7 \cdot 0.5 = 0.315.
\end{align*}
We refer to the general case above as \emph{asymmetric} WFOMC, because it allows $\w(\prof(\anne))$ to be different from $\w(\prof(\charlie))$.
We use \emph{symmetric} WFOMC to refer to the special case where $\w$ simplifies into two weight functions $\wt,\wf$ that map \emph{predicates} to weights, instead of literals, that is
\begin{align*}
 \w(\ell) = \begin{cases}
              \wt(P) & \text{when $\ell$ is of the form $P(c)$} \\
              \wf(P) & \text{when $\ell$ is of the form $\neg P(c)$}
            \end{cases}
\end{align*}
Symmetric WFOMC no longer directly captures PDBs. Yet it can still encode many SRL models, including parfactor graphs~\citep{poole2003first}, Markov logic networks (MLNs)~\citep{richardson2006markov} and probabilistic logic programs~\citep{DeRaedt2008-PILP}.
We refer to \citet{VanDenBroeckMD14} for the details, and show here the following example MLN.
\begin{align*}
  2 \quad \prof(x) \land \advises(x,y) \Rightarrow \student(y)
\end{align*}
It states that the probability of a world increases by a factor $e^2$ with every pair of people $x,y$ for which the formula holds. Its WFOMC encoding has $\Delta$ equal to
\begin{align*}
  &\forall x, \forall y,\, \pred{F}(x,y) \Leftrightarrow \\
  &  \quad \qquad \left[\prof(x) \land \advises(x,y) \Rightarrow \student(y) \right]
\end{align*}
and weight functions $\wt,\wf$ such that $\wt(\pred{F})=e^2$ and all other predicates map to $1$.

Answering an SRL query $Q$ given evidence $E$, that is, $\Pr(Q\,|\,E)$, using a symmetric WFOMC encoding, generally requires solving two WFOMC tasks:
\begin{align*}
\Pr(Q\,|\,E) = \frac{\WFOMC(Q \land E \land \Delta, \w)}{\WFOMC(E \land \Delta, \w)}
\end{align*}
Symmetric WFOMC problems are strictly more tractable than asymmetric ones. We postpone the discussion of this observation to Section~\ref{sec:examples}, but already note that all theories $\Delta$ with up to two logical variables per formula support \emph{domain-lifted} inference~\citep{van2011completeness}, which means that any WFOMC query runs in time polynomial in the domain size (i.e, number of constants). 
For conditional probability queries, even though fixed-parameter complexity bounds exist that use symmetric WFOMC~\citep{van2013complexity}, the actual underlying reasoning task is asymmetric WFOMC, whose complexity we investigate for the first time.

Finally, we make three simplifying observations. 
First, SRL query $Q$ and evidence $E$ typically assign values to random variables. This means that the query and evidence can be absorbed into the asymmetric weight function, by setting the weight of literals disagreeing with $Q$ or $E$ to zero.
We hence compute:
\begin{align*}
\Pr(Q\,|\,E) = \frac{\WFOMC(\Delta, \w_{Q \land E})}{\WFOMC(\Delta, \w_{E})}
\end{align*}
This means that our complexity analysis for a given encoding $\Delta$ applies to both numerator and denominator for arbitrary $Q$ and $E$, and that polytime WFOMC for $\Delta$ implies polytime $\Pr(Q\,|\,E)$ computation. The converse is not true, since it is possible that both WFOMC calls are \#P-hard, but their ratio is in PTIME.
Second, we will from now on assume that $\Delta$ is in CNF. 
The WFOMC encoding of many SRL formalisms is already in CNF, or can be reduced to it~\citep{VanDenBroeckMD14}.
For PDB queries that are in monotone DNF, we can simply compute $\Pr(Q) = 1 - \Pr(\neg Q)$, which reduces to WFOMC on a CNF. Moreover, by adjusting the probabilities in the PDB, this CNF can also be made monotone.
Third, we will assume that $\w(\ell) = 1 - \w(\neg \ell)$, which can always be achieved by normalizing the weights.

Under these assumptions, we can simply refer to $\WFOMC(Q,\w)$ as $\Pr(Q)$, to $Q$ as the CNF query, to $\w(\ell)$ as the probability $\Pr(\ell)$, and to the entire weight function $\w$ as the PDB.
This is in agreement with notation in the PDB literature. 

\section{ALGORITHM \algoname}
\label{sec:algorithm}

We present here the lifted algorithm \algoname{} (pronounced {\em
  lift-ER}), which, given a CNF formula $Q$ computes $\Pr(Q)$ in
polynomial time in the size of the PDB, or fails.  In the next
section we provide some evidence for its completeness: under certain
assumptions, if \algoname{} fails on formula $Q$, then computing
$\Pr(Q)$ is \#P-hard in the PDB size.

\subsection{DEFINITIONS}

An \emph{implicate} of $Q$ is some clause $C$ s.t.\ the logical
implication $Q \Rightarrow C$ holds.  $C$ is a \emph{prime implicate}
if there is no other implicate $C'$ s.t. $C' \Rightarrow C$.

A \emph{connected component} of a clause $C$ is a minimal subset of
its atoms that have no logical variables in common with the rest of
the clause. If some prime implicate $C$ has more than one connected component, then we can write it as:
\[
C = D_1 \lor D_2 \lor \cdots \lor D_m
\]
where each $D_i$ is a clause with distinct variables. 
Applying distributivity, we write $Q$ in \emph{union-CNF} form:
\[
Q = Q_1 \lor Q_2 \lor \cdots \lor Q_m
\]
where each $Q_i$ is a CNF with distinct variables.

We check for disconnected prime implicates $D_1 \lor D_2$ where both
$D_1$ and $D_2$ subsume some clause of $Q$. Intuitively, this means
that when we apply inclusion/exclusion to the union-CNF, the resulting
queries are simpler.  The search for $D_1$, $D_2$ can proceed using
some standard inference algorithm, e.g.~resolution. By
\autoref{th:equivalence}, this problem is $\Pi^p_2$-complete in the
size of the query $Q$, but independent of the PDB size.

A set of \emph{separator variables} for a query $Q = \bigwedge_{i=1}^k
C_i$ is a set of variables $x_i, i=1,k$ such that, (a) for each clause
$C_i$, $x_i$ occurs in all atoms of $C_i$, and (b) any two atoms (not
necessarily in the same clause) referring to the same relation $R$
have their separator variable on the same position.

\subsection{PREPROCESSING}

\label{subsec:preproc}

We start by transforming $Q$ (and PDB) such that:
\begin{enumerate}
	\item No constants occur in $Q$.
	\item If all the variables in $Q$ are $x_1, x_2, \dots, x_k$,
          then every relational atom in $Q$ (positive or negated) is
          of the form $R(x_{i_1}, x_{i_2}, \dots)$ such that $i_1 <
          i_2 < \dots$
\end{enumerate}

Condition (1) can be enforced by \emph{shattering} $Q$ w.r.t.\ its variables.
Condition (2) can be enforced by modifying both the query Q and the
database, in a process called {\em ranking} and described in the
appendix.  Here, we illustrate ranking on an example.  Consider the
query:
\begin{align*}
  Q = (R(x,y) \lor S(x,y)) \land (\neg R(x,y) \lor \neg S(y,x))
\end{align*}
Define $R_1(x,y) \equiv R(x,y)\wedge (x<y)$; $R_2(x) \equiv R(x,x)$;
$R_3(y,x) \equiv R(x,y)\wedge (x>y)$. Define similarly $S_1, S_2,
S_3$.  Given a PDB with relations $R$, $S$, we define a new PDB$'$ over
the six relations by setting $\Pr(R_1(a,b)) = \Pr(R(a,b))$ when $a<b$,
$\Pr(R_1(a,b)) = 0$ when $a > b$, $\Pr(R_2(a))= \Pr(R(a,a))$, etc.
Then, the query $Q$ over PDB is equivalent to the following query over
PDB':
\begin{align*}
 & (R_1(x,y) \lor S_1(x,y)) \land  (\neg R_1(x,y) \lor \neg S_3(x,y))\\
 & (R_2(x) \lor S_2(x)) \land (\neg R_2(x) \lor \neg S_2(x)) \\
 & (R_3(x,y) \lor S_3(x,y)) \land (\neg R_3(x,y) \lor \neg S_1(x,y))
\end{align*}

\subsection{ALGORITHM DESCRIPTION}

Algorithm \algoname{}, given in Figure \ref{fig:alg}, proceeds
recursively on the structure of the CNF query $Q$.  When it reaches
ground atoms, it simply looks up their probabilities in the PDB.
Otherwise, it performs the following sequence of steps.

First, it tries to express $Q$ as a union-CNF.
If it succeeds, and if the union can be partitioned into two sets 
that do not share any relational symbols, $Q = Q_1 \lor Q_2$, then 
it applies a {\em
  Decomposable Disjunction}:
\begin{gather*}
  \Pr(Q) = 1 - (1 - \Pr(Q_1)(1 - \Pr(Q_2))
\end{gather*}
Otherwise, it applies the {\em Inclusion/Exclusion} formula:
\begin{align*}
  \Pr(Q) = - \sum_{s \subseteq [m]} (-1)^{|s|} \Pr(\bigwedge_{i\in s}  Q_i)
\end{align*}
However, before computing the recursive probabilities, our algorithm
first checks for equivalent expressions, i.e.\ it checks for terms
$s_1, s_2$ in the inclusion/exclusion formula such that $\bigwedge_{i
  \in s_1} Q_i \equiv \bigwedge_{i \in s_2} Q_i$: in that case, these
terms either cancel out, or add up (and need be computed only
once).  We show in \autoref{subsec:cancellations} the critical role that the
cancellation step plays for the completeness of the algorithm.  To
check cancellations, the algorithm needs to check for equivalent CNF
expressions. This can be done using some standard inference algorithm
(recall from \autoref{th:equivalence} that this problem is
$\Pi^p_2$-complete in the size of the CNF expression).

If neither of the above steps apply, then the algorithm checks if
$Q$ can be partitioned into two sets of clauses that do not
share any common relation symbols. In that case, $Q = Q' \land Q''$,
and its probability is computed using a {\em Decomposable
  Conjunction}:
\begin{align*}
  \Pr(Q) = \Pr(Q') \cdot \Pr(Q'')
\end{align*}
Finally, if none of the above cases apply to the CNF query $Q = C_1
\land C_2 \land \cdots \land C_k$, then the algorithm tries to find a
set of separator variables $x_1, \ldots, x_k$ (one for each clause).
If it finds them, then the probability is given by a {\em Decomposable
  Universal Quantifier}:
\begin{align*}
  \Pr(Q) = \prod_{a \in \text{Domain}} \Pr(C_1[a/x_1] \land \cdots \land  C_k[a/x_k])
\end{align*}

\renewcommand{\lstlistingname}{}
\begin{figure}[htbp]
\hrule
\vspace{.5em}
\footnotesize
\begin{tabbing}
\scriptsize 
Algorithm \algoname{} \\
\scriptsize
Input: \= \scriptsize Ranked and shattered query $Q$ \\
 \> \scriptsize Probabilistic DB with domain $D$
\normalsize
\end{tabbing}
\vspace{-1em}
\begin{tabbing}
\tiny
  Output: \= \scriptsize $\Pr(Q)$
\normalsize
\end{tabbing}
\hrule
\begin{lstlisting}[label=algorithm_pseudo,mathescape]
Step 0: If $Q$ is a single ground literal $\ell$, return its probability $\Pr(\ell)$ in PDB
Step 1: Write $Q$ as a union-CNF: $Q = Q_1 \lor Q_2 \lor \cdots \lor Q_m$
Step 2: If $m > 1$ and $Q$ can be partitioned into two sets $Q = Q' \lor Q''$ with disjoint relation symbols, return $1-(1-\Pr(Q_1))\cdot(1-\Pr(Q_2))$
		/* Decomposable Disjunction */
Step 3: If $Q$ cannot be partitioned, return $\sum_{s \subseteq [m]} \Pr(\bigwedge_{i \in s} Q_i)$
		/* Inclusion/Exclusion - perform cancellations before recursion */
Step 4: Write $Q$ in CNF: $Q = C_1 \land C_2 \land \cdots \land C_k$
Step 5: If $k > 1$, and $Q$ can be partitioned into two sets $Q= Q' \land Q''$ with disjoint relation symbols, return $\Pr(Q_1)\cdot \Pr(Q_2)$
		/* Decomposable Conjunction */
Step 6: If $Q$ has a separator variable, return $\prod_{a \in D} \Pr(C_1[a/x_1] \land \cdots \land  C_k[a/x_k])$
		/* Decomposable Universal Quantifier */
Otherwise $\mathbf{FAIL}$
\end{lstlisting}
\hrule
\vspace{5pt}
\caption{{\bf Algorithm for Computing $\Pr(Q)$}}
\label{fig:alg}
\end{figure}
We prove our first main result:
\begin{theorem}
  One of the following holds:  (1) either
  \algoname{} fails on $Q$, or (2) for any domain size $n$ and a PDB
  consisting of probabilities for the ground tuples, \algoname{}
  computes $\Pr(Q)$ in polynomial time in $n$.
\end{theorem}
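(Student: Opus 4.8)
The plan is to prove both halves of the dichotomy at once, by induction on the recursion of \algoname{}. The two cases are mutually exclusive by construction: either some recursive call reaches the final \textbf{FAIL} line (case (1)), or every call is resolved by one of Steps 0, 2, 3, 5, 6. In the latter case I must show two things: that the value returned equals $\Pr(Q)$ (correctness), and that the number of recursive calls, together with the per-call work, is polynomial in the domain size $n$ (complexity).

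For correctness I would verify that each step applies a valid probabilistic identity and then invoke the inductive hypothesis on its subqueries. The PDB is tuple-independent, so any two Boolean events that are functions of disjoint sets of ground tuples are independent. In the \emph{Decomposable Disjunction} and \emph{Decomposable Conjunction} steps the two parts share no relation symbol, hence depend on disjoint ground tuples, giving $\Pr(Q'\wedge Q'')=\Pr(Q')\Pr(Q'')$ and $\Pr(Q'\vee Q'')=1-(1-\Pr(Q'))(1-\Pr(Q''))$. The \emph{Inclusion/Exclusion} step is the standard identity $\Pr(\bigvee_i Q_i)=-\sum_{\emptyset\neq s\subseteq[m]}(-1)^{|s|}\Pr(\bigwedge_{i\in s}Q_i)$ applied to the union-CNF rewriting of $Q$; the cancellation bookkeeping only regroups identical terms (equivalence tested via \autoref{th:equivalence}) and so leaves the sum's value unchanged. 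For the \emph{Decomposable Universal Quantifier} step, conditions (a) and (b) on the separator variables let me rewrite $Q=\bigwedge_i\forall x_i\,C_i(x_i)$ as $\bigwedge_{a\in D}\bigwedge_i C_i[a/x_i]$; since every relation carries its separator in a fixed position, the slices indexed by distinct constants depend on disjoint ground tuples, are therefore independent, and yield $\Pr(Q)=\prod_{a\in D}\Pr(\bigwedge_i C_i[a/x_i])$.

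For complexity I would take the number $k$ of distinct logical variables of $Q$ as the primary progress measure. The domain $D$ is enumerated only in the Universal-Quantifier step, which replaces the separator variables by a single constant and hence strictly decreases $k$ by at least one in each slice. All other steps are pure query rewritings that neither inspect nor enumerate $D$ and do not increase $k$: they only split off subsets of clauses, relation symbols, or union-CNF components. The key lemma to establish is that, for a fixed number of variables, these rewriting steps range over a finite lattice of CNF queries determined by $Q$ alone, so between two successive quantifier eliminations the recursion tree has size bounded by a function of $Q$ independent of $n$; here the $2^m$-way branching of Inclusion/Exclusion is harmless because $m$ is itself bounded by the size of $Q$. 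Consequently each root-to-leaf path performs at most $k$ quantifier eliminations, the $n$-way branching occurs on at most $k$ levels, and the whole recursion tree has at most $n^{k}\cdot g(Q)$ nodes for some $g$ independent of $n$. Since the local work per node (writing the union-CNF, testing partitionability, detecting separators, and the $\Pi^p_2$ checks of \autoref{th:equivalence}) depends only on the size of the current query, the total running time is $n^{k}$ times a factor depending only on $Q$, hence polynomial in $n$.

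I expect the main obstacle to be the complexity half, and specifically the termination and finiteness argument for the domain-independent rewriting steps. Showing that Inclusion/Exclusion and the decomposition steps cannot recurse forever requires pinning down the finite lattice of queries reachable from $Q$ by conjoining union-CNF components and partitioning on relation symbols, and exhibiting a well-founded order on which every such step strictly descends; the interaction of Inclusion/Exclusion—which conjoins components and can momentarily enlarge the syntactic query—with this order is the delicate point, whereas the probabilistic identities and the isolation of the single domain-dependent factor at the separator step are comparatively routine. Completeness—that reaching \textbf{FAIL} coincides with \#P-hardness—is a separate claim handled by the dichotomy of Section~\ref{sec:mainresults} and is not needed for this statement.
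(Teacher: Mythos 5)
Your proposal follows essentially the same route as the paper's (very brief) proof sketch: isolate the decomposable universal quantifier as the only step that depends on the domain, and bound how many times it can be nested along any recursion path by a quantity depending only on $Q$. The one difference is the progress measure: the paper observes that each such step reduces the arity of \emph{every} relation symbol by one (all atoms contain the separator variable, which is replaced by the same constant), so the nesting depth is at most the maximum arity $k$ and the runtime is $O(n^k)$ with a query-dependent constant --- a slightly tighter and cleaner measure than counting distinct logical variables, and one that sidesteps the variable-standardization and termination subtleties you flag for the inclusion/exclusion step.
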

\begin{proof}
  (Sketch) The only step of the algorithm that depends on the domain
  size $n$ is the decomposable universal quantifier step; this also reduces by 1 the
  arity of every relation symbol, since it substitutes it by the same
  constant $a$.  Therefore, the algorithm runs in time $O(n^k)$, where
  $k$ is the largest arity of any relation symbol.  We note that the
  constant behind $O( \cdots )$ may be exponential in the size of the
  query $Q$.
\end{proof}

\section{MAIN COMPLEXITY RESULT}
\label{sec:mainresults}

In this section we describe our main technical result of the paper:
that the algorithm is complete when restricted to a certain class of
CNF queries.

We first review a prior result, to put ours in perspective.
\citet{dalvi2012dichotomy} define an algorithm for Monotone DNF
(called Unions Of Conjunctive Queries), which can be adapted to
Monotone CNF; that adaptation is equivalent to \algoname{} restricted
to Monotone CNF queries.  \citet{dalvi2012dichotomy} prove:

\begin{theorem} \label{th:dichotomy:jacm} If algorithm \algoname{} FAILS on a Monotone CNF query
  $Q$, then computing $\Pr(Q)$ is
  \#P-hard.
\end{theorem}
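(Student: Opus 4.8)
The plan is to treat this as the hardness half of a dichotomy, so it calls for a \#P-hardness reduction rather than an algorithmic argument. The first step is to pass from the monotone-CNF setting to the dual union-of-conjunctive-queries (UCQ, i.e.\ monotone DNF) setting, where the result is due to \citet{dalvi2012dichotomy}. Given a monotone CNF $Q$, I would negate it to obtain a monotone DNF $Q'$ on complemented weights, so that $\Pr(Q) = 1 - \Pr(Q')$ as noted earlier in the background; computing one is polynomial-time equivalent to computing the other. Since \algoname{} restricted to monotone CNF is by construction the dual of the Dalvi--Suciu safe-query algorithm for UCQ, \algoname{} fails on $Q$ exactly when $Q'$ is \emph{unsafe}. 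It therefore suffices to prove that every unsafe UCQ $Q'$ has \#P-hard probability computation.

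The anchor of the argument is a single canonical hard query. The plan is to show that $H_0 = \exists x \exists y\, R(x) \land S(x,y) \land T(y)$ (equivalently, the monotone clause $\forall x \forall y\, \neg R(x) \lor \neg S(x,y) \lor \neg T(y)$ after complementation) already has \#P-hard probability. I would prove this by reduction from counting the satisfying assignments of a positive partitioned 2-DNF formula $\bigvee_{(i,j)\in E} u_i \land v_j$, a classical \#P-complete problem. Putting the variables $u_i, v_j$ into the domain, setting $\Pr(R(i)) = \Pr(T(j)) = 1/2$, and making $S(i,j)$ deterministic with value $1$ exactly on the edges $E$, one gets that $\Pr(H_0)$ equals the fraction of assignments satisfying the formula, so an oracle for $\Pr(H_0)$ yields the count.

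The core of the proof is then the structural claim that \emph{failure of the algorithm forces a hard core}. When \algoname{} fails, none of its decompositions apply: there is no decomposable disjunction (Step 2), the inclusion/exclusion expansion does not collapse after cancellations (Step 3), there is no decomposable conjunction (Step 5), and no separator variable exists (Step 6). I would analyze the lattice of the CNF/DNF expansion and its M\"obius function: the cancellation step succeeds precisely when the relevant M\"obius coefficients vanish, so persistent failure means some lattice element carries a nonzero coefficient while remaining non-decomposable and separator-free. From such an element I would build an \emph{oracle reduction} to $H_0$, or in the self-join case to a member of the Dalvi--Suciu hard family $h_k$: instantiate all but a few relations with deterministic $0/1$ weights to erase the rest of the query, leaving an embedded copy of the hard pattern, and recover the target probability by evaluating $\Pr(Q')$ at polynomially many weight settings and solving the resulting linear system. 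Non-vanishing of the M\"obius coefficient is exactly what makes this system non-degenerate, so the extraction, and hence the reduction, goes through.

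The main obstacle is this last structural step, and within it the interaction of self-joins with the cancellation step. One must rule out the possibility that genuine equivalences among the inclusion/exclusion terms silently simplify an apparently unsafe query into a tractable one; this is precisely why the cancellation check is essential, as emphasized in \autoref{subsec:cancellations}. Showing that, absent such a collapse, a nonzero M\"obius coefficient can always be isolated and lifted into a non-degenerate interpolation against one of the $h_k$ is the delicate combinatorial heart of the argument, and is where essentially all of the difficulty resides; by comparison the base case $H_0$ and the CNF/DNF duality are routine.
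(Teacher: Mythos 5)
The first thing to note is that the paper does not prove this statement at all: Theorem~\ref{th:dichotomy:jacm} is imported verbatim from \citet{dalvi2012dichotomy}, and the only content the paper adds is the (also unproved) remark that the monotone-CNF adaptation of the Dalvi--Suciu UCQ algorithm coincides with \algoname{} restricted to monotone CNF. Your outline faithfully reconstructs the strategy of that cited proof: CNF/DNF duality via $\Pr(Q)=1-\Pr(Q')$, the reduction from counting positive partitioned 2-DNF assignments to establish hardness of the base query $H_0$ (and of the $h_k$ family for the self-join cases, which this paper exhibits as $h_1$ and $h_3$), and the M\"obius-function analysis of the inclusion/exclusion lattice to extract a hard core from any query on which the algorithm fails. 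So the approach is the right one and matches the source.

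However, as a proof there is a genuine gap, and you have located it yourself: the claim that persistent failure forces some lattice element to carry a nonzero M\"obius coefficient while remaining non-decomposable and separator-free, and that from such an element one can always build a non-degenerate interpolation against $H_0$ or some $h_k$, \emph{is} the theorem. Everything before it (duality, hardness of $H_0$) is routine, and everything after it is bookkeeping; in the actual Dalvi--Suciu proof this step occupies the bulk of a long paper and requires, among other things, the zigzag construction that this paper's appendix adapts for its own Type-1 result. A secondary unestablished step is the assertion that \algoname{} on monotone CNF fails exactly when the dual UCQ is unsafe in the Dalvi--Suciu sense; this is plausible ``by construction'' but needs a rule-by-rule matching of the two algorithms (in particular, that the union-CNF rewriting via disconnected prime implicates is the precise dual of their independent-union and inclusion/exclusion steps). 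As submitted, your text is an accurate plan for the known proof rather than a proof.
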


However, the inclusion of negations in our query language increases
significantly the difficulty of analyzing query complexities.  Our
major technical result of the paper extends
\autoref{th:dichotomy:jacm} to a class of CNF queries with negation.

Define a {\em Type-1} query to be a CNF formula where each clause has
at most two variables denoted $x,y$, and each atom is of one of the
following three kinds:
\begin{itemize}
\item[--] Unary symbols $R_1(x), R_2(x), R_3(x), \ldots$
\item[--] Binary symbols $S_1(x,y), S_2(x,y), \ldots$ 
\item[--] Unary symbols $T_1(y), T_2(y), \ldots$ 
\end{itemize}
or the negation of these symbols.

Our main result is:

\begin{theorem} \label{th:dichotomy:type1}
  For every Type-1 query $Q$, if algorithm \algoname{} FAILS then
  computing $\Pr(Q)$ is \#P-hard.
\end{theorem}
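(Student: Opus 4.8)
The plan is to establish the contrapositive in the standard form of a dichotomy proof: assume \algoname{} fails on the Type-1 query $Q$ and exhibit a polynomial-time Turing reduction from a known \#P-hard counting problem to the computation of $\Pr(Q)$. I would take as the source of hardness the counting version of positive partitioned 2-CNF (equivalently, counting independent sets in a bipartite graph), which is \#P-hard by a classical result of Provan and Ball, since the two-sorted structure of Type-1 queries --- unary predicates $R_i(x)$ on one side, unary $T_k(y)$ on the other, and binary $S_j(x,y)$ bridging them --- mirrors a bipartite instance exactly. The minimal hard case that guides the whole argument is the single clause $C = \neg R(x) \lor \neg S(x,y) \lor \neg T(y)$: it is its own union-CNF with $m=1$, it admits no conjunctive split, and it has no separator variable because no single variable occurs in all three atoms, so \algoname{} fails; yet $\Pr(C) = 1 - \Pr(\exists x\exists y\, R(x)\land S(x,y)\land T(y))$, and planting the bipartite adjacency into $S$ while fixing the unary probabilities to $1/2$ makes this a direct encoding of the PP2CNF count.

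First I would precisely characterize failure. Running through Figure~\ref{fig:alg}, failure means that $Q$ admits no decomposable disjunction (Step~2), no decomposable conjunction (Step~5), no separator variable (Step~6), and that the inclusion/exclusion expansion of Step~3 does not collapse to strictly simpler subproblems after the cancellation check. The absence of a separator variable is the structural signature that the $x$-atoms and $y$-atoms are genuinely entangled through some $S_j$, so that the query is non-hierarchical; this is exactly what makes $S_j$ usable as a bipartite adjacency encoder.

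The technical core is the cancellation/inclusion-exclusion analysis, which I would handle with the lattice-theoretic method already used for \autoref{th:dichotomy:jacm}. I would form the lattice $L$ generated, under conjunction and up to logical equivalence, by the CNF components $Q_1, \ldots, Q_m$ of the union-CNF of $Q$, and track its M\"obius function $\mu(\hat 0, \hat 1)$. The cancellation step of \algoname{} succeeds in collapsing the inclusion/exclusion sum precisely when $\mu(\hat 0,\hat 1)=0$; hence genuine algorithm failure forces $\mu(\hat 0,\hat 1)\neq 0$. The reduction then proceeds by interpolation: given a bipartite instance I set $\Pr(S_j(a,b))$ to $1$ or $0$ according to adjacency and leave the unary probabilities as free parameters, so that $\Pr(Q)$ becomes a polynomial in those parameters whose coefficients are the quantities the oracle must reveal. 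A nonzero $\mu(\hat 0,\hat 1)$ guarantees that the coefficient carrying the independent-set count is itself nonzero, so evaluating $\Pr(Q)$ at polynomially many rational parameter values and solving the resulting linear system recovers the \#P-hard count.

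The main obstacle I anticipate is controlling the effect of negation on the lattice and on the reduction. In the monotone case the CNF components behave monotonically and the cancellation argument is clean; with negated literals, equivalences $\bigwedge_{i\in s_1} Q_i \equiv \bigwedge_{i\in s_2} Q_i$ can arise for non-obvious reasons --- this is exactly why the $\Pi^p_2$ equivalence test of \autoref{th:equivalence} is invoked inside the algorithm --- and clauses such as $\neg R_i(x)\lor\neg S_j(x,y)\lor\neg T_k(y)$ introduce sign patterns that could in principle cancel the very coefficient the reduction relies on. Discharging this requires showing that, for Type-1 queries, negation cannot conspire to zero out the critical M\"obius coefficient whenever the algorithm has genuinely failed; equivalently, that the equivalence-based cancellation step already accounts for every collapse that negation can produce. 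Carrying out this case analysis over the finitely many atom-occurrence patterns permitted in a Type-1 clause is where I expect the bulk of the work to lie.
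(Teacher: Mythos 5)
Your choice of \#PP2-CNF as the source problem and the general shape of an oracle-interpolation reduction match the paper, but the core of your reduction is underpowered and the step you defer (``negation cannot conspire to zero out the critical coefficient'') is precisely the part the paper has to prove by entirely different means. Concretely: you propose to plant the bipartite adjacency into $S_j$ as deterministic $0/1$ probabilities and interpolate only over the unary parameters. That leaves you with two free parameters $x,y$ and hence at most $(n+1)^2$ independent evaluations, whereas the quantity you need, $\#\Phi=\sum_{k,l,p}N(k,l,p,0)$, requires isolating the coefficients $N(k,l,p,q)$ indexed by \emph{four} parameters (the counts $k,l$ of true unary atoms and the counts $p,q$ of edges with both endpoints true/false). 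The paper therefore keeps the $S$-probabilities continuous --- $a$ on edges, $b$ on non-edges --- so that $\Pr(Q)$ becomes a polynomial in $A^pB^qX^kY^lC^{kl}$ with $(n+1)^2(m+1)^2$ unknowns, and the crux is showing the resulting linear system is non-singular. That in turn splits into (i) a generalized Vandermonde determinant computation and (ii) showing the map from the tuple probabilities to the parameters $A,B,C,D$ (ratios of the four conditionings $f_{00},f_{01},f_{10},f_{11}$ of the query on its unary atoms) is locally invertible. Part (ii) is where the paper's novelty lies: it is proved via algebraic independence of the polynomials $f_ig_i$, annihilating polynomials, and Hilbert's Nullstellensatz, together with a case analysis on how the irreducible factors of $f_1,\dots,f_4$ overlap. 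None of this is replaced by your M\"obius-function claim; the assertion that algorithm failure forces $\mu(\hat 0,\hat 1)\neq 0$ and that this nonzero coefficient survives into the interpolation polynomial is exactly the unproved gap, and it is not how the paper closes it.

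Two further missing pieces would also sink the argument as proposed. First, when the four conditioned functions are \emph{not} algebraically independent (the paper's ``Case 2,'' where every irreducible factor occurs in at least two of $f_1,\dots,f_4$), no choice of probabilities makes the linear system solvable, and the paper must fall back on a separate \emph{zigzag construction} adapted from the monotone setting --- an adaptation that is itself delicate because with negation one cannot simply set unwanted tuple probabilities to $1$ to kill spurious dependencies. Second, the reduction is only carried out for \emph{forbidden} queries (immediately unsafe ones all of whose rewritings are safe), so one needs the structural lemmas showing that any Type-1 query on which \algoname{} fails --- including failure occurring inside an inclusion/exclusion recursion, and queries with multiple left or right unary symbols --- rewrites deterministically to such a forbidden query with a single unary symbol on each side. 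Your proposal does not address either reduction, so even granting your interpolation step it would only cover a fragment of the failing queries.
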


The proof is a significant extension of the techniques used by
\citet{dalvi2012dichotomy} to prove \autoref{th:dichotomy:jacm}; we
give a proof sketch in \autoref{sec:exampleproof} and include the full
proof in the appendix.

\section{PROPERTIES OF \algoname{}}
\label{sec:examples}

We now describe several properties of \algoname{}, and the
relationship to other lifted inference formalisms.

\subsection{NEGATIONS CAN LOWER THE COMPLEXITY}

The presence of negations can lower a query's complexity, and our
algorithm exploits this.  To see this, consider the following query
{
\begin{align*}
Q
\,\, =& \,\,(\texttt{Tweets}(x) \vee  \neg \texttt{Follows}(x,y)) \\
& \quad \wedge
     (\texttt{Follows}(x,y) \vee \neg \texttt{Leader}(y))
\end{align*}
}
The query says that if $x$ follows anyone then $x$ tweets, and that
everybody follows the leader\footnote{To see this, rewrite the query
  as $(\texttt{Follows}(x,y) \Rightarrow \texttt{Tweets}(x)) \wedge 
    (\texttt{Leader}(y) \Rightarrow \texttt{Follows}(x,y))$.
}.

Our goal is to compute the probability $\Pr(Q)$, knowing the
probabilities of all atoms in the domain.  We note that the two
clauses are dependent (since both refer to the relation
\texttt{Follow}), hence we cannot simply multiply their probabilities;
in fact, we will see that if we remove all negations, then the
resulting query is \#P-hard; the algorithm described
by~\citet{dalvi2012dichotomy} would immediately get stuck on this
query.  Instead, \algoname{} takes advantage of the negation, by first
computing the prime implicate:
\begin{align*}
  \texttt{Tweets}(x) \vee \neg \texttt{Leader}(y)
\end{align*}
which is a disconnected clause (the two literals use disjoint logical
variables, $x$ and $y$ respectively).  After applying distributivity
we obtain:
\begin{align*}
  Q \equiv & (Q \wedge (\texttt{Tweets}(x))) \vee (Q \wedge (\neg \texttt{Leader}(y))) \\
    \equiv & Q_1 \vee Q_2
\end{align*}
and \algoname{} applies the inclusion-exclusion formula:
\begin{align*}
  \Pr(Q) = & \Pr(Q_1) + \Pr(Q_2) - \Pr(Q_1 \wedge Q_2)
\end{align*}
After simplifying the  three queries, they become:
\begin{align*}
  Q_1 = & (\texttt{Follows}(x,y) \vee \neg \texttt{Leader}(y)) \wedge  (\texttt{Tweets}(x)) \\
  Q_2 = & (\texttt{Tweets}(x) \vee  \neg \texttt{Follows}(x,y)) \wedge (\neg \texttt{Leader}(y))\\
  Q_1\wedge Q_2 = &   (\texttt{Tweets}(x)) \wedge (\neg \texttt{Leader}(y))
\end{align*}
The probability of $Q_1$ can now be obtained by multiplying the
probabilities of its two clauses; same for the other two queries.  As
a consequence, our algorithm computes the probability $\Pr(Q)$ in
polynomial time in the size of the domain and the PDB.

If we remove all negations from $Q$ and rename the predicates we get the
following query:
\begin{align*}
  h_1 = & (R(x) \vee S(x,y)) \wedge (S(x,y) \vee T(y))
\end{align*}
\citet{dalvi2012dichotomy} proved that computing the probability of
$h_1$ is \#P-hard in the size of the PDB.  Thus, the query $Q$ with
negation is {\em easy}, while $h_1$ is hard, and our algorithm takes
advantage of this by applying resolution.

\subsection{ASYMMETRIC WEIGHTS CAN INCREASE THE COMPLEXITY}

\citet{van2011completeness} has proven that any query with at most two logical variables per clause is domain-liftable.  Recall that this means that one can compute its probability in PTIME in the size of the domain, in the symmetric case, when all tuples in a relation have the same probability.  
However, queries with at most two logical variables per
clause can become \#P-hard when computed over asymmetric
probabilities, as witnessed by the query $h_1$ above.

\subsection{COMPARISON WITH PRIOR LIFTED FO-CIRCUITS}

\citet{DBLP:conf/ijcai/BroeckTMDR11,van2013lifted} introduce FO d-DNNF circuits, to
compute symmetric WFOMC problems.  An FO d-DNNF is a circuit whose nodes are one of the
following: decomposable conjunction ($Q_1 \wedge Q_2$ where $Q_1, Q_2$
do not share any common predicate symbols), deterministic-disjunction
($Q_1 \vee Q_2$ where $Q_1 \wedge Q_2 \equiv \texttt{false}$),
inclusion-exclusion, decomposable universal quantifier (a type of $\forall x, Q(x)$),
and deterministic automorphic existential quantifier.  The latter is an operation that is
specific only to structures with symmetric weights, and therefore does
not apply to our setting.  We prove that our algorithm can compute all
formulas that admit an FO d-DNNF circuit.

\begin{fact}
  If $Q$ admits an FO d-DNNF without a deterministic automorphic existential quantifier, then \algoname{} computes $\Pr(Q)$ in PTIME
  in the size of the PDB.
\end{fact}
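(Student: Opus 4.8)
The plan is to proceed by structural induction on the FO d-DNNF circuit for $Q$, showing that \algoname{} never reaches its $\mathbf{FAIL}$ branch. The key observation is that, once the deterministic automorphic existential quantifier is excluded, every remaining node type of an FO d-DNNF --- decomposable conjunction, decomposable universal quantifier, deterministic disjunction, and inclusion/exclusion --- has a direct counterpart among Steps 2--6 of \algoname. Since each of those steps is an exact, probability-preserving identity, it suffices to establish two things at every node: (i) the step of \algoname{} corresponding to the root node of the circuit is actually triggered on $Q$, and (ii) each sub-query that \algoname{} recurses on is again the output of a strictly smaller sub-circuit of the same restricted form, so that the induction hypothesis applies. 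The base case is a leaf of the circuit, i.e.\ a ground literal, which \algoname{} resolves in Step~0 by a PDB lookup. Correctness and the PTIME bound then follow from the running-time analysis already established for \algoname.

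Next I would dispatch the two ``decomposable'' node types, which are essentially identical to \algoname's operations. If the root of the circuit is a decomposable conjunction $Q = Q' \wedge Q''$ with $Q'$ and $Q''$ sharing no predicate symbols, then writing $Q$ in CNF splits its clauses into the clauses of $Q'$ and those of $Q''$, and these two clause sets share no relation symbol; hence Step~5 (Decomposable Conjunction) fires and recurses exactly on $Q'$ and $Q''$. If the root is a decomposable universal quantifier $\forall x\, Q(x)$ whose ground instances range over disjoint sets of atoms, then the quantified variable is precisely a separator variable in the sense of the algorithm's definition, so Step~6 (Decomposable Universal Quantifier) fires and recurses on a single grounded instance. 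In both cases the sub-circuits of the node witness that the recursive sub-queries remain in the class, closing the induction for these two nodes.

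The crux of the argument, and the step I expect to be the main obstacle, is matching the two \emph{disjunctive} node types to \algoname's Steps~1--3. The difficulty is that \algoname{} does not split on an arbitrary disjunction: Step~1 forms the union-CNF only from a \emph{disconnected prime implicate}, a specific syntactic decomposition, whereas the circuit may introduce a deterministic disjunction or an inclusion/exclusion node by a different route. I therefore have to show that whenever the root of the circuit is a disjunctive node $Q \equiv \bigvee_i Q_i$, the lifted decomposition that produced it forces the existence of a disconnected prime implicate (equivalently, a predicate-disjoint split in the decomposable case) that \algoname{} detects, yielding the same recursive sub-queries up to logical equivalence. Once the union-CNF is recovered, a deterministic disjunction is handled as a degenerate inclusion/exclusion: because the circuit guarantees $Q_i \wedge Q_j \equiv \false$, the corresponding cross terms contribute $\Pr(\false)=0$, and the cancellation step of Step~3 removes them, so \algoname's formula reduces to the sum $\sum_i \Pr(Q_i)$ computed by the circuit. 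The decomposable disjunction of Step~2 is the special case in which the disjuncts are predicate-disjoint.

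I would finish by assembling the induction: at every node the matching step of \algoname{} applies and produces sub-queries that are outputs of smaller admissible sub-circuits, so by the induction hypothesis \algoname{} succeeds on them, and hence it never fails on $Q$. The resulting computation is a step-by-step emulation of the circuit using only exact identities, so it returns $\Pr(Q)$, and by the earlier running-time analysis it does so in time polynomial in the PDB size. The single genuine piece of work is the disjunction-matching lemma of the previous paragraph; everything else is a routine node-by-node correspondence.
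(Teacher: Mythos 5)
Your proposal follows essentially the same route as the paper: the paper's proof is exactly the node-by-node correspondence you describe, with the single non-trivial observation that a deterministic disjunction is emulated by inclusion/exclusion since $\Pr(Q_1 \wedge Q_2) = 0$ when $Q_1 \wedge Q_2 \equiv \false$. The ``disjunction-matching lemma'' you flag as the main obstacle is not addressed in the paper either --- the authors simply declare the correspondence immediate --- so your treatment is, if anything, more careful on that point.
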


The proof is immediate by noting that all other node types in the FO d-DNNF
have a corresponding step in \algoname{}, except for
deterministic disjunction, which our algorithm computes using
inclusion-exclusion: $\Pr(Q_1 \vee Q_2) = \Pr(Q_1) + \Pr(Q_2) - \Pr(Q_1 \wedge
Q_2) = \Pr(Q_1) + \Pr(Q_2)$ because $Q_1 \wedge Q_2 \equiv
\texttt{false}$.  However, our algorithm is strictly more powerful than FO d-DNNFs for the asymmetric WFOMC task, as we explain next.

\begin{figure*}
\center
\begin{tikzpicture}
    \node (1) at (0,0) {$\hat{1}$};
    \node (Q1) at (-1.3,-1)  {$q_0 \land q_2$};
    \node (Q2) at (0,-1) {$q_0 \land q_3$};
    \node (Q3) at (1.3,-1) {$q_1 \land q_3$};
    \node (Q1Q2) at (-1.3,-2)  {$q_0 \land q_2 \land q_3$};
    \node (Q2Q3) at (1.3,-2) {$q_0 \land q_1 \land q_3$};
    \node (Q1Q2Q3) at (0,-3) {$q_0 \land q_1 \land q_2 \land q_3$};

    \draw [black,  thick, shorten <=-2pt, shorten >=-2pt] (1) -- (Q1);
    \draw [black, thick, shorten <=-2pt, shorten >=-2pt] (1) -- (Q2);
    \draw [black, thick, shorten <=-2pt, shorten >=-2pt] (1) -- (Q3);
    \draw [black, thick, shorten <=-2pt, shorten >=-2pt] (Q1) -- (Q1Q2);
    \draw [black, thick, shorten <=-2pt, shorten >=-2pt] (Q2) -- (Q1Q2);
    \draw [black, thick, shorten <=-2pt, shorten >=-2pt] (Q2) -- (Q2Q3);
    \draw [black, thick, shorten <=-2pt, shorten >=-2pt] (Q3) -- (Q2Q3);
    \draw [black, thick, shorten <=-2pt, shorten >=-2pt] (Q1Q2) -- (Q1Q2Q3);
    \draw [black, thick, shorten <=-2pt, shorten >=-2pt] (Q2Q3) -- (Q1Q2Q3);
\end{tikzpicture}
\caption{Lattice for $Q_w$.  The bottom query is
  \#P-hard, yet all terms in the inclusion/exclusion formula that
  contain this term cancel out, and $\Pr(Q_W)$ is
  computable in PTIME. }
\label{fig:qwlattice}
\end{figure*}

\subsection{CANCELLATIONS IN INCLUSION/EXCLUSION}
\label{subsec:cancellations}
\vspace{-1em}
We now look at a more complex query.  
First, let us denote four
simple queries:
\begingroup
\allowdisplaybreaks
\begin{align*}
  q_0 & =(R(x_0) \lor S_1(x_0, y_0)) \\
  q_1 & =  (S_1(x_1, y_1) \lor S_2(x_1, y_1)) \\
  q_2 & = (S_2(x_2, y_2) \lor S_3(x_2, y_2)) \\
  q_3 & =  (S_3(x_3,y_3) \lor T(y_3))
\end{align*}%
\endgroup
\citet{dalvi2012dichotomy} proved that their conjunction, i.e.\ the query $h_3 = q_0 \wedge q_1 \wedge q_2 \wedge q_3$, is \#P-hard in data complexity.  Instead of $h_3$, consider:
\begin{align*}
  Q_W = (q_0 \lor q_1) \land (q_0 \lor q_3) \land (q_2 \lor q_3)
\end{align*}
There are three clauses sharing relation symbols, hence we cannot
apply a decomposable conjunction.  However, each clause is
disconnected, for example $q_0$ and $q_1$ do not share logical
variables, and we can thus write $Q_W$ as a disjunction.
After removing redundant terms:
\begin{align*}
  Q_W = (q_0 \land q_2) \lor (q_0 \land q_3) \lor (q_1 \land q_3)
\end{align*}
Our algorithm applies the inclusion/exclusion formula:
\begin{align*}
  \Pr&(Q_W) = \Pr(q_0 \land q_2) + \Pr(q_0 \land q_3) + \Pr(q_1 \land q_3) \\
      & - \Pr(q_0 \land q_2 \land q_3) - \Pr(q_0 \land q_1 \land q_3) - \Pr(q_0  \land \cdots \land q_3) \\
      & + \Pr(q_0 \land \cdots \land q_3)
\end{align*}
At this point our algorithm performs an important step: it cancels out
the last two terms of the inclusion/exclusion formula.  Without this key step, no algorithm could compute the
query in PTIME, because the last two terms are precisely $h_3$, which
is \#P-hard.  To perform the cancellation the algorithm needs to first
check which FOL formulas are equivalent, which, as we have seen, is
decidable for our language (\autoref{th:equivalence}).  Once the
equivalent formulas are detected, the resulting expressions can be
organized in a lattice, as shown in \autoref{fig:qwlattice}, and the
coefficient of each term in the inclusion-exclusion formula is
precisely the lattice's M\"obius function~\citep{stanley-combinatorics-1997}.

\section{EXTENSIONS AND LIMITATIONS}
\label{sec:extensions}

We describe here an extension of \algoname{} to symmetric WFOMC,
and also prove that a complete characterization of the complexity of
all FOL queries is impossible.

\subsection{SYMMETRIC WFOMC}
\label{sec:extensions:symmetric}

Many applications of SRL require weighted model counting for FOL
formulas over PDBs where the probabilities are associated to relations
rather than individual tuples.  That is, $\texttt{Friend}(a,b)$ has
the same probability, independently of the constants $a,b$ in the
domain.  In that symmetric WFOMC case, the model has a large number of symmetries
(since the probabilities are invariant under permutations of constants), and lifted inference algorithms may further exploit these
symmetries.  \citep{van2013lifted} employ one
operator that is specific to symmetric probabilities, called {\em
  atom counting}, which is applied to a unary predicate
$R(x)$ and iterates over all possible worlds of that predicate.
Although there are $2^n$ possible worlds for $R$, by conditioning on
any world, the probability will depend only on the cardinality $k$ of
$R$, because of the symmetries.  Therefore, the
system iterates over $k= 0, n$, and adds the conditional probabilities
multiplied by ${n \choose k}$.  For example, consider the following
query:
\begin{align}
  H = (\neg R(x) \vee S(x,y) \vee \neg T(y)) \label{eq:h}
\end{align}
Computing the probabilities of this query is \#P-hard
(\autoref{th:dichotomy:type1}).  However, if all tuples $R(a)$ have
the same probability $r\in [0,1]$, and similarly tuples in $S, T$ have
probabilities $s,t$, then one can check that\footnote{Conditioned on
  $|R| =k$ and $|T|=l$, the query is true if $S$ contains at least
  one pair $(a,b) \in R \times T$.}  
\begin{align*}
  \Pr(H) = \sum_{k, l = 0,n}
r^k\cdot (1-r)^{n-k}\cdot t^l\cdot
(1-t)^{n-l}\cdot(1-s^{kl})
\end{align*}

Denote Sym-\algoname{} the extension of \algoname{} with a
deterministic automorphic existential quantifier operator.  The question is whether this
algorithm is complete for computing the probabilities of queries over
PDBs with symmetric probabilities.  Folklore belief was that this existential quantifier operator was the only operator required
to exploit the extra symmetries available in PDBs with symmetric
probabilities.  For example, all queries in
\citet{DBLP:conf/ijcai/BroeckTMDR11} that can be computed in PTIME over
symmetric PDBs have the property that, if one removes all unary
predicates from the query, then the residual query can be computed in
PTIME over asymmetric PDBs.

We answer this question in the negative.  Consider the following query:
\begin{align*}
  Q = & (S(x_1,y_1) \lor \neg S(x_1,y_2) \lor \neg S(x_2,y_1) \lor S(x_2,y_2))
\end{align*}
Here, we interpret $S(x,y)$ as a {\em typed relation}, where the
values $x$ and $y$ are from two disjoint domains, of sizes $n_1, n_2$
respectively, in other words, $S \subseteq [n_1] \times [n_2]$.
\begin{theorem} We have that
\begin{itemize}
  \item[--] $\Pr(Q)$ can be computed in time polynomial in the size of a symmetric PDB with probability $p$ as
  $\Pr(Q) = f(n_1,n_2) + g(n_1,n_2)$ where:
  \begin{align*}
    f(n_1, 0) &= 1 \\
    f(n_1, n_2) &= \sum_{k=1}^{n_1} {n_1 \choose k} p^{kn_2}  g(n_1-k,n_2) \\
    g(0, n_2) &= 1 \\
    g(n_1, n_2) &= \sum_{\ell=1}^{n_2} {n_2 \choose \ell} (1-p)^{n_1\ell}  f(n_1,n_2-\ell)
  \end{align*}
  \item[--] Sym-\algoname{} fails to compute $Q$.
\end{itemize}
\end{theorem}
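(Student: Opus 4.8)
The plan is to reduce both claims to one combinatorial reading of $Q$. I will first argue that $Q$ says exactly that $S$, viewed as a $0/1$ matrix $M\in\{0,1\}^{n_1\times n_2}$ with $M[i][j]=1 \iff S(i,j)$, is a \emph{Ferrers relation}. Negating the single clause shows $I\not\models Q$ iff there are rows $x_1\ne x_2$ and columns $y_1\ne y_2$ with $M[x_1][y_1]=0$, $M[x_1][y_2]=1$, $M[x_2][y_1]=1$, $M[x_2][y_2]=0$, i.e.\ $M$ contains the $2\times2$ pattern $\left(\begin{smallmatrix}0&1\\1&0\end{smallmatrix}\right)$. Hence $I\models Q$ iff no two rows are $\subseteq$-incomparable, i.e.\ the row-sets $R_i=\{j:M[i][j]=1\}$ form a chain under inclusion (the pattern being transpose-symmetric, the columns form a chain too). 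This is the fact I will use throughout.

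For the closed form I will partition the valid matrices by whether some row is all-ones, letting $f(n_1,n_2)$ (resp.\ $g(n_1,n_2)$) be the weighted count $\sum_M p^{\#1}(1-p)^{\#0}$ over valid matrices that do (resp.\ do not) contain an all-ones row; then $\Pr(Q)=f+g$ is immediate for $n_1,n_2\ge 1$. For $f$ I peel the set of all-ones rows: choosing which $k\ge1$ rows are all-ones contributes $\binom{n_1}{k}p^{kn_2}$, and the remaining $n_1-k$ rows form a valid matrix with no all-ones row, counted by $g(n_1-k,n_2)$, giving the stated recursion. For $g$, a valid matrix with no all-ones row has maximum row-set $R_{\max}\subsetneq[n_2]$, so its all-zeros columns are exactly $[n_2]\setminus R_{\max}\ne\emptyset$; choosing those $\ell\ge1$ columns contributes $\binom{n_2}{\ell}(1-p)^{n_1\ell}$, and on the remaining columns the $R_{\max}$-row is all-ones, so that submatrix is counted by $f(n_1,n_2-\ell)$. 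The one point to verify is that this peeling is a bijection: it is, because an all-ones row forces every column to contain a $1$, so ``has an all-ones row'' already implies ``has no all-zeros column'', whence the removed columns are recovered uniquely. The base cases $f(n_1,0)=g(0,n_2)=1$ are the empty/vacuous matrix. Evaluating the two arrays by dynamic programming costs $O(n_1 n_2(n_1+n_2))$ arithmetic operations on numbers of polynomial size, hence polynomial in the PDB.

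For the second claim I will walk through every operator of Sym-\algoname{} and show none applies to $Q$ after preprocessing, which leaves $Q$ intact: it has no constants, and since the two arguments of $S$ range over the disjoint domains $[n_1],[n_2]$, ranking introduces no diagonal and no reordering, so no new—and in particular no unary—predicates appear. Since $S$ is the only relation symbol, $Q$ admits no partition into relation-disjoint parts, so neither the decomposable disjunction (Step 2) nor the decomposable conjunction (Step 5) can fire; and being a single clause, $Q$ already fails Step 5's precondition $k>1$. No single variable occurs in all four atoms $S(x_1,y_1),S(x_1,y_2),S(x_2,y_1),S(x_2,y_2)$, so there is no separator variable and the decomposable universal quantifier (Step 6) does not apply. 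Atom counting, the only operator proper to Sym-\algoname{}, acts on a unary predicate, of which $Q$ has none. This leaves only inclusion/exclusion via a nontrivial union-CNF (Steps 1 and 3), whose precondition is a disconnected prime implicate $D_1\lor D_2$ both of whose components subsume the single clause of $Q$.

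The crux, and the step I expect to be the \emph{main obstacle}, is therefore to rule out any such disconnected prime implicate, forcing $m=1$ and killing Step 3 as well. I will use that a disconnected clause $D_1\lor D_2$ on disjoint variable tuples is equivalent to $(\forall D_1)\lor(\forall D_2)$, so $Q\models D_1\lor D_2$ iff $Q\models(\forall D_1)\lor(\forall D_2)$. Since each $D_i$ subsumes $Q$, some instance $D_i\theta$ is a subclause of $Q$ and $\forall D_i\models\forall(D_i\theta)$; thus it suffices to exhibit one Ferrers matrix (a model of $Q$) that falsifies the universal closure of \emph{every} subclause of $Q$. Those closures are precisely the uniformity conditions ``$S$ is full'' (single positive atom), ``$S$ is empty'' (single negative atom), ``every row is all-ones or all-zeros'' (a pair sharing an $x$-variable), ``every column is all-ones or all-zeros'' (a pair sharing a $y$-variable), and stronger conditions for larger subclauses. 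The staircase $M=\left(\begin{smallmatrix}1&0\\1&1\end{smallmatrix}\right)$ is a valid matrix violating all of these at once, so no disjunction of such closures is entailed by $Q$; hence $Q$ has no disconnected prime implicate with subsuming components, and Sym-\algoname{} reaches FAIL. Making this rigorous requires only a finite inspection of the (connected) subclauses of $Q$—bounded, but delicate, since one must be careful that subsumption rather than mere syntactic containment is the right closure notion and that a single witness refutes every candidate component simultaneously.
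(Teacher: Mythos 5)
Your proof of the first bullet is correct and is, up to vocabulary, the paper's own argument: the paper reads $S$ as a relation $\prec$ on $D_1\sqcup D_2$, uses resolution to show $Q$ forbids all even cycles so that $\prec$ is a partial order, and then peels the set of minimal elements, which is exactly your set of all-ones rows (or all-zeros columns); the resulting mutual recursion between $f$ and $g$ is identical. Your ``no $\left(\begin{smallmatrix}0&1\\1&0\end{smallmatrix}\right)$ pattern, hence the row-sets form a chain'' reading is an equivalent and arguably more direct route to the same peeling step, and your explicit check that the peeling is a bijection is a detail the paper leaves implicit.

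For the second bullet the paper offers no proof at all, so you are on your own there, and your argument has a concrete gap exactly at the step you flagged as the crux. The reduction to ``exhibit one model of $Q$ falsifying the universal closure of every proper subclause'' is sound, but the witness $M=\left(\begin{smallmatrix}1&0\\1&1\end{smallmatrix}\right)$ does not do the job: for the three-literal subclause $S(x_1,y_1)\lor\neg S(x_2,y_1)\lor S(x_2,y_2)$, falsification requires a column $y_1$ containing both a $0$ (row $x_1$) and a $1$ (row $x_2$) such that row $x_2$ also contains a $0$; in your $M$ the only mixed column has its $1$ in the all-ones row, so $M$ \emph{satisfies} this universal closure (and, symmetrically, that of $S(x_1,y_1)\lor\neg S(x_1,y_2)\lor S(x_2,y_2)$). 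The claim ``a single witness refutes every candidate component simultaneously'' therefore fails for your chosen $M$. The strategy survives with a larger staircase, e.g.\ the $3\times3$ matrix with $M[i][j]=1$ iff $j\le i$, but you must then actually carry out the finite inspection over all proper subclauses rather than only the unary and two-literal ones you enumerate; as written, the failure of Sym-\algoname{} is not established.
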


The theorem shows that new operators will be required for symmetric WFOMC.  We note that it is currently open whether computing $\Pr(Q)$ is \#P-hard in the case of asymmetric WFOMC.
\begin{proof}
  Denote $D_x, D_y$ the domains of the variables $x$ and $y$.  Fix a
  relation $S \subseteq D_1 \times D_2$.  We will denote $a_1, a_2,
  \ldots \in D_1$ elements from the domain of the variable $x$, and
  $b_1, b_2, \ldots \in D_2$ elements from the domain of the variable
  $y$.  For any $a, b$, define $a \prec b$ if $(a,b) \in S$, and $a
  \succ b$ if $(a,b) \not\in S$; in the latter case we also write $b
  \prec a$.  Then, (1) for any $a,b$, either $a \prec b$ or $b \prec
  a$, (2) $\prec$ is a partial order on the disjoint union of the
  domains $D_1$ and $D_2$ iff $S$ satisfies the query $Q$. 
  The first
  property is immediate.  To prove the second property, notice that
  $Q$ states that there is no cycle of length 4: $x_1 \prec y_2 \prec
  x_2 \prec y_1 \prec x_1$.  By repeatedly applying resolution between
  $Q$ with itself, we derive that there are no cycles of length 6, 8,
  10, etc.  Therefore, $\prec$ is transitive, hence a partial
  order. Any finite, partially ordered set has a minimal element,
  i.e.\ there exists $z$ s.t.\ $\forall x$, $x\not\prec z$.  Let $Z$ be
  the set of all minimal elements, and denote $X = D_1 \cap Z$ and $Y
  = D_2 \cap Z$.  Then exactly one of $X$ or $Y$ is non-empty, because
  if both were non-empty then, for $a \in X$ and $b \in Y$ we have
  either $a \prec b$ or $a \succ b$ contradicting their minimality.
  Assuming $X\neq\emptyset$, we have (a) for all $a \in X$ and $b \in
  D_2$, $(a,b) \in S$, and (b) $Q$ is true on the relation $S' = (D_1
  - X) \times D_2$.   
  This justifies the recurrence
  formula for $\Pr(Q)$.
\end{proof}

\subsection{THE COMPLEXITY OF ARBITRARY FOL QUERIES}

We conjecture that, over asymmetric probabilities (asymmetric WFOMC), our algorithm is
complete, in the sense that whenever it fails on a query, then the
query is provably \#P-hard.  Notice that \algoname{} applies only to a
fragment of FOL, namely to CNF formulas without function symbols, and
where all variables are universally quantified.  We present here an
impossibility result showing that a complete algorithm cannot exist
for general FOL queries.  We use for that a classic result by
Trakhtenbrot~\citep{DBLP:books/sp/Libkin04}:

\begin{theorem}[Finite satisfiability] 
The problem: ``given a FOL
  sentence $\Phi$, check whether there exists a finite model for
  $\Phi$'' is undecidable.
\end{theorem}
From here we obtain:
\begin{theorem}
  There exists no algorithm that, given any FOL sentence $Q$ checks
  whether $\Pr(Q)$ can be computed in PTIME in the asymmetric PDB size.
\end{theorem}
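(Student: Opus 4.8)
The plan is to reduce finite satisfiability to the proposed decision problem, so that an algorithm for the latter would yield an algorithm for the former, contradicting Trakhtenbrot's theorem. The core idea is to associate, to each FOL sentence $\Phi$, a query $Q_\Phi$ whose asymmetric-PDB complexity is governed entirely by whether $\Phi$ has a finite model. First I would arrange that $Q_\Phi$ has the shape $\Phi' \land H'$, where $H'$ is (a copy of) a query known to be \#P-hard over asymmetric PDBs — for concreteness, the query $h_1$ or $H$ from Section~\ref{sec:examples}, whose \#P-hardness is established by \autoref{th:dichotomy:type1} — built over a fresh vocabulary disjoint from that of $\Phi$, and $\Phi'$ is a version of $\Phi$ over yet another fresh vocabulary. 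The intent is that $\Phi'$ acts as a global switch: if $\Phi$ is finitely unsatisfiable, then $\Phi'$ is unsatisfiable over every finite domain, so $Q_\Phi \equiv \texttt{false}$ and $\Pr(Q_\Phi) = 0$ is trivially computable in PTIME; whereas if $\Phi$ has a finite model, the hard subquery $H'$ can be ``activated'' and $\Pr(Q_\Phi)$ becomes \#P-hard.

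The delicate point is making the switch behave correctly for \emph{all} domain sizes, since data complexity is measured asymptotically in the domain size $n$, not for a single fixed domain. A finite model of $\Phi$ has some fixed size $n_0$, but we need hardness to persist as $n \to \infty$. I would handle this by an encoding in which $\Phi'$ constrains only a bounded portion of the domain while leaving the remaining $n - n_0$ constants free to carry the hard instance $H'$. Concretely, one introduces auxiliary predicates that designate an ``active region'' of the domain on which $\Phi'$ must hold and on which $H'$ is evaluated; by setting the probabilities of the designating predicates appropriately (an asymmetric PDB lets us fix some tuples to probability $1$ and others to $0$), we force the active region to be nonempty and of unbounded size exactly when $\Phi$ is satisfiable, recovering the \#P-hardness of $H'$ over arbitrarily large domains. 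When $\Phi$ is unsatisfiable the active region is necessarily empty on every domain, collapsing $Q_\Phi$ to a trivially tractable query.

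With this construction in hand, the reduction is: a hypothetical algorithm $\mathcal{A}$ deciding whether $\Pr(Q)$ is PTIME-computable, run on input $Q_\Phi$, decides finite satisfiability of $\Phi$ — it answers ``PTIME'' precisely when $\Phi$ is finitely unsatisfiable and ``not PTIME'' (i.e.\ \#P-hard) precisely when $\Phi$ has a finite model. Since $Q_\Phi$ is computable from $\Phi$, this contradicts the undecidability of finite satisfiability, proving the theorem. I expect the main obstacle to be the uniformity-in-$n$ issue described above: ensuring that the \#P-hardness of the activated subquery is genuinely a statement about growing domain size rather than an artifact of one finite witness, and that the encoding of $\Phi$ into a query-plus-PDB faithfully simulates satisfiability over all finite domains simultaneously. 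A secondary subtlety is that $\Pr(Q_\Phi)$ being \#P-hard must be an \emph{unconditional} consequence of satisfiability; one must verify that the free part of the domain can always host a hard instance of $H'$ regardless of how $\Phi'$ is satisfied, which is why choosing disjoint vocabularies and an independent hard gadget is important.
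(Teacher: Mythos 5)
Your proposal is correct and is essentially the paper's own proof: reduce from Trakhtenbrot's finite-satisfiability problem by forming $Q = \Phi' \land H$, where $H$ is a known \#P-hard query over a disjoint vocabulary and $\Phi'$ is the relativization of $\Phi$ to a fresh unary predicate $P$, so that unsatisfiability of $\Phi$ gives $\Pr(Q)=0$ while a finite model of $\Phi$ can be pinned down deterministically (via $0/1$ probabilities on $P$ and on $\Phi$'s vocabulary) on a bounded part of the domain, leaving the remaining constants free to host arbitrarily large hard instances of $H$. One sentence of yours (``force the active region to be \dots of unbounded size'') is slightly at odds with this---it is the \emph{complement} of the region carrying $\Phi'$ that must be unbounded---but your surrounding discussion makes clear you intend exactly the paper's construction.
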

\begin{proof}
By reduction from the finite satisfiability problem. Fix
the hard query $H$ in Eq.(\ref{eq:h}), for which the counting problem is
\#P-hard. Recall that $H$ uses the symbols $R, S, T$. Let $\Phi$ be any
formula over a disjoint relational vocabulary (i.e. it doesn't
use $R, S, T$). We will construct a formula $Q$, such that computing
$\Pr(Q)$ is in PTIME iff $\Phi$ is unsatisfiable in the finite: this
proves the theorem. To construct $Q$, first we modify $\Phi$ as
follows. Let $P(x)$ be another fresh, unary relational symbol.
Rewrite $\Phi$ into $\Phi'$ as follows: replacing every $(\exists x
. \Gamma)$ with $(\exists x . P(x) \land \Gamma)$ and every $(\forall x
. \Gamma)$ with $(\forall x . P(x) \Rightarrow \Gamma)$ (this is not equivalent
to the guarded fragment of FOL); leave the rest of the formula
unchanged. Intuitively, $\Phi'$ checks if $\Phi$ is true on the
substructure defined by the domain elements that satisfy $P$. More
precisely: for any database instance $I$, $\Phi'$ is true on $I$ iff $\Phi$
is true on the substructure of $I$ defined by the domain elements
that satisfy $P(x)$. Define the query $Q = (H \land \Phi')$. We now
prove the claim.

If $\Phi$ is unsatisfiable then so is $\Phi'$, and therefore $Pr(Q) = 0$
is trivially computable in PTIME. 

If $\Phi$ is satisfiable, then fix any deterministic database
instance $I$ that satisfies $\Phi$; notice that $I$ is deterministic,
and $I \models \Phi$. Let $J$ be any probabilistic instance over the
vocabulary for $H$ over a domain disjoint from $I$. Define $P(x)$ as
follows: $P(a)$ is true for all domain elements $a \in I$, and $P(b)$ is
false for all domain elements $b \in J$. Consider now the
probabilistic database $I \cup J$. (Thus, $P(x)$ is also
deterministic, and selects the substructure $I$ from $I \cup J$;
therefore, $\Phi'$ is true in $I \cup J$.) We have $Pr(Q) = Pr(H \land \Phi') = Pr(H)$,
because $\Phi'$ is true on $I \cup J$. Therefore,
computing $Pr(Q)$ is \#P-hard. Notice the role of $P$: while $I$
satisfies $\Phi$, it is not necessarily the case that $I \cup J$
satisfies $\Phi$. However, by our construction we have ensured that
$I \cup J$ satisfies $\Phi'$.
\end{proof}


\section{PROOF OF THEOREM \ref{th:dichotomy:type1}}
\label{sec:exampleproof}

The proof of \autoref{th:dichotomy:type1} is based on a reduction from
the \#PP2-CNF problem, which is defined as follows.  Given two
disjoint sets of Boolean variables $X_1, \dots, X_{n}$ and $Y_1,
\dots, Y_{n}$ and a bipartite graph $E \subseteq [n] \times [n]$,
count the number of satisfying truth assignments $\#\Phi$ to the
formula: $\Phi = \bigwedge_{(i, j) \in E} (X_i \lor Y_j)$.
\citet{provan1983complexity} have shown that this problem is \#P-hard.

More precisely, we prove the following: given any Type-1 query $Q$ on
which the algorithm \algoname{} fails, we can reduce the \#PP2-CNF
problem to computing $\Pr(Q)$ on a PDB with domain size $n$.  The
reduction consists of a combinatorial part (the construction of
certain gadgets), and an algebraic part, which makes novel use of the
concepts of algebraic independence \citep{yu1995relations} and
annihilating polynomials \citep{kayal2009complexity}.  We include the
latter in the appendix, and only illustrate here the former on a
particular query of Type-1.

We illustrate the combinatorial part of the proof on the following
query $Q$:
\[
(R(x) \lor \lnot S(x,y) \lor T(y)) \land (\lnot R(x) \lor S(x,y) \lor \lnot T(y))
\]
To reduce $\Phi$ to the problem of computing $\Pr(Q)$, we construct a structure
with unary predicates $R$ and $T$ and binary predicate $S$,
with active domain $[n]$.

We define the tuple probabilities as follows.  Letting $x, y, a, b \in
(0,1)$ be four numbers that will be specified later, we define:
\begin{align*}
\Pr(R(i)) &= x \\
\Pr(T(j)) &= y \\
\Pr(S(i, j)) &= \left\{
  \begin{array}{lr}
    a & \text{ if }(i, j) \in E \\
    b & \text{ if }(i, j) \not \in E
  \end{array}
\right.
\end{align*}
Note this PDB does not have symmetric probabilities: in fact, over structures with
symmetric probabilities one can compute $\Pr(Q)$ in PTIME.

Let $\theta$ denote a valuation of the variables in $\Phi$. Let $E_\theta$ denote the event
$\forall i.(R(i) = \true \texttt{ iff } \theta(X_i) = \true) \\
\land \forall j.(T(j) = \true \texttt{ iff } \theta(Y_j) = \true)$.

$E_\theta$ completely fixes the unary predicates $R$ and $T$ and leaves $S$ unspecified.
Given $E_\theta$, each Boolean variable corresponding to some
$S(x,y)$ is now independent of every other $S(x', y')$. 
In general, given an assignment of $R(i)$ and $T(j)$, we examine the four formulas
that define the probability that the query is true on $(i, j)$: $F_1 = Q[R(i) = 0, T(j) = 0]$,
$F_2 = Q[R(i) = 0, T(j) = 1]$,
$F_3 = Q[R(i) = 1, T(j) = 0]$,
$F_4 = Q[R(i) = 1, T(j) = 1]$.

For $Q$, $F_1,F_2,F_3,F_4$ are as follows:
\begin{align*}
F_1 = \lnot S(i, j) && F_2 = F_3 = \true && F_4 = S(i,j)
\end{align*}
Denote $f_1, f_2, f_3, f_4$ the arithmetization of these Boolean formulas:
\begin{align*}
& f_1 = \left\{
  \begin{array}{lr}
    1-a & \text{ if }(i, j) \in E \\
    1-b & \text{ if }(i, j) \not \in E
  \end{array}
\right. \\
& f_4 =  \left\{
  \begin{array}{lr}
    a & \text{ if }(i, j) \in E \\
    b & \text{ if }(i, j) \not \in E
  \end{array}
\right. \
\end{align*}
Note that $f_2 = f_3 = 1$ and do not change $\Pr(Q)$. 

Define the parameters $k,l,p,q$ of $E_\theta$ as
 $k =$ number of $i$'s s.t. $R(i)=\true$,
 $l =$ number of $j$'s s.t. $T(j)=\true$,
 $p =$ number of $(i,j) \in E$ s.t. $R(i)=T(j)=\true,$
 $q =$ number of $(i,j) \in E$ s.t. $R(i)=T(j)=\false$.

Let $N(k,l,p,q) = $ the number of $\theta$'s that have parameters $k,l,p,q$.
If we knew all $(n+1)^2(m+1)^2$ values of $N(k,l,p,q)$, we could
recover $\#\Phi$ by summing over $N(k,l,p,q)$ where
$q = 0$.
That is, $\#\Phi = \sum_{k,l,p} N(k,l,p,0)$.

We now describe how to solve for $N(k,l,p,q)$, completing the hardness proof
for $\Pr(Q)$.

We have $\Pr(E_\theta) = x^k (1-x)^{n-k} y^l (1-y)^{n-l}$ and
$\Pr(Q | E_\theta) = a^p (1-a)^q  b^{kl - p} (1-b)^{(n-k)(n-l) - q}$. 
Combined, these give the following expression for $\Pr(Q)$:
\begin{align*}
\Pr(Q) & = \sum_{\theta} \Pr(Q|E_{\theta}) \Pr(E_\theta) \\
  & = (1-b)^{n^2} (1-x)^n (1-y)^n \sum_{k,l,p,q} T \tag{1} \label{eqn:probQ}
\end{align*}
where:
\begin{align*}
T =  & N(k,l,p,q) \ast (a/b)^p [(1-a)/(1-b)]^q \\
&                    [x/(1-b)^n (1-x)]^k [y/(1-b)^n \\
&                    (1-y)]^l [b(1-b)]^{kl} \\
= &     N(k,l,p,q) \ast A^p B^q X^k Y^l C^{kl} \tag{2} \label{eqn:termT}
\end{align*}
Equations \eqref{eqn:probQ} and \eqref{eqn:termT} express $\Pr(Q)$ as a
polynomial in $X, Y, A, B, C$ with unknown coefficients $N(k,l,p,q)$.
Our reduction is the following: we choose $(n+1)^2(m+1)^2$ values for
the four parameters $x,y,a,b \in (0,1)$, consult an oracle for $\Pr(Q)$
for these settings of the parameters, then solve a linear system of
$(n+1)^2(m+1)^2$ equations in the unknowns $N(k,l,p,q)$.  The crux of
the proof consists of showing that the matrix of the system is
non-singular: this is far from trivial, in fact had we started from a
PTIME query $Q$ then the system {\em would} be singular.  Our proof
consists of two steps (1) prove that we can choose $X,Y,A,B$
independently, in other words that the mapping $(x,y,a,b) \mapsto
(X,Y,A,B)$ is locally invertible (has a non-zero Jacobian), and (2)
prove that there exists a choice of $(n+1)^2(m+1)^2$ values for
$(X,Y,A,B)$ such that the matrix of the system is non-singular: then,
by (1) it follows that we can find $(n+1)^2(m+1)^2$ values for
$(x,y,a,b)$ that make the matrix non-singular, completing the proof.
For our particular example, Part (1)  can be verified by direct
computations (see \autoref{subsec:independent}); for general queries this
requires \autoref{subsec:algVarieties}.  Part (2) for this query is 
almost as general as for any query and we show it in 
\autoref{subsec:invertible}.

\section{RELATED WORK}
\label{sec:relatedWork}

The algorithm and complexity results of \citet{dalvi2012dichotomy},
which apply to positive queries, served as the starting point for our 
investigation of asymmetric WFOMC with negation.
See \citet{suciu2011probabilistic} for more background on their work.
The tuple-independence assumption of PDBs presents 
a natural method for modeling asymmetric WFOMC.
Existing approaches for PDBs can express complicated correlations \citep{jha2010lifted,jha2012probabilistic}
but only consider queries without negation.

Close in spirit to the goals of our work are \citet{van2011completeness} and
\citet{jaeger2012liftability}. 
They introduce a formal definition of lifted inference and 
describe a powerful knowledge compilation technique for WFOMC.
Their completeness results for first-order
knowledge compilation on a variety of query classes motivate our
exploration of the complexity of lifted inference.
\citet{cozman2009complexity} analyze the complexity of probabilistic description logics.

Other investigations of evidence in lifted inference include
\citet{van2012conditioning}, who allow arbitrary hard evidence on unary relations, \citet{bui2012exact}, who allow asymmetric soft evidence on a single unary
relation, and \citet{van2013complexity}, who allow evidence of bounded Boolean rank. Our model allows entirely asymmetric probabilities and evidence.


\section{CONCLUSION}
\label{sec:conclusion}

Our first contribution is the algorithm \algoname{} for counting
models of arbitrary CNF sentences over asymmetric probabilistic structures. 
Second, we prove
a novel dichotomy result that completely classifies a subclass of CNFs
as either PTIME or \#P-hard. Third, we describe capabilities of
\algoname{} not present in prior lifted inference techniques. Our final
contribution is an extension of our algorithm to symmetric WFOMC and
a discussion of the impossibility of establishing a dichotomy for all
first-order logic sentences.

\subsubsection*{Acknowledgements}
This work was partially supported by ONR grant \#N00014-12-1-0423, NSF grants IIS-1115188 and IIS-1118122, and the Research Foundation-Flanders (FWO-Vlaanderen).

\bibliographystyle{plainnat}
\bibliography{references}

\begin{thebibliography}{32}
\providecommand{\natexlab}[1]{#1}
\providecommand{\url}[1]{\texttt{#1}}
\expandafter\ifx\csname urlstyle\endcsname\relax
  \providecommand{\doi}[1]{doi: #1}\else
  \providecommand{\doi}{doi: \begingroup \urlstyle{rm}\Url}\fi

\bibitem[Bui et~al.(2012)Bui, Huynh, and de~Salvo~Braz]{bui2012exact}
Hung~B Bui, Tuyen~N Huynh, and Rodrigo de~Salvo~Braz.
\newblock Exact lifted inference with distinct soft evidence on every object.
\newblock In \emph{AAAI}, 2012.

\bibitem[Chavira and Darwiche(2008)]{Chavira2008}
Mark Chavira and Adnan Darwiche.
\newblock On probabilistic inference by weighted model counting.
\newblock \emph{Artificial Intelligence}, 172\penalty0 (6-7):\penalty0
  772--799, April 2008.

\bibitem[Chavira et~al.(2006)Chavira, Darwiche, and Jaeger]{Chavira2006}
Mark Chavira, Adnan Darwiche, and Manfred Jaeger.
\newblock Compiling relational {Bayesian} networks for exact inference.
\newblock \emph{International Journal of Approximate Reasoning}, 42\penalty0
  (1-2):\penalty0 4--20, May 2006.

\bibitem[Cozman and Polastro(2009)]{cozman2009complexity}
Fabio~Gagliardi Cozman and Rodrigo~Bellizia Polastro.
\newblock Complexity analysis and variational inference for
  interpretation-based probabilistic description logics.
\newblock In \emph{Proceedings of the Twenty-Fifth Conference on Uncertainty in
  Artificial Intelligence}, pages 117--125. AUAI Press, 2009.

\bibitem[Dalvi and Suciu(2012)]{dalvi2012dichotomy}
Nilesh Dalvi and Dan Suciu.
\newblock The dichotomy of probabilistic inference for unions of conjunctive
  queries.
\newblock \emph{Journal of the ACM (JACM)}, 59\penalty0 (6):\penalty0 30, 2012.

\bibitem[Darwiche(2002)]{darwiche2002logical}
Adnan Darwiche.
\newblock A logical approach to factoring belief networks.
\newblock \emph{Proceedings of KR}, pages 409--420, 2002.

\bibitem[De~Raedt et~al.(2008)De~Raedt, Frasconi, Kersting, and
  Muggleton]{DeRaedt2008-PILP}
Luc De~Raedt, Paolo Frasconi, Kristian Kersting, and Stephen Muggleton,
  editors.
\newblock \emph{Probabilistic inductive logic programming: theory and
  applications}.
\newblock Springer-Verlag, Berlin, Heidelberg, 2008.
\newblock ISBN 3-540-78651-1, 978-3-540-78651-1.

\bibitem[Fierens et~al.(2011)Fierens, Van~den Broeck, Thon, Gutmann, and
  De~Raedt]{Fierens11}
Daan Fierens, Guy Van~den Broeck, Ingo Thon, Bernd Gutmann, and Luc De~Raedt.
\newblock Inference in probabilistic logic programs using weighted {CNF}'s.
\newblock In \emph{Proceedings of UAI}, pages 211--220, July 2011.

\bibitem[Getoor and Taskar(2007)]{Getoor07:book}
L.~Getoor and B.~Taskar, editors.
\newblock \emph{An Introduction to Statistical Relational Learning}.
\newblock {MIT} {P}ress, 2007.

\bibitem[Gogate and Domingos(2011)]{gogatePTP}
Vibhav Gogate and Pedro Domingos.
\newblock Probabilistic theorem proving.
\newblock In \emph{Proceedings of UAI}, pages 256--265, 2011.

\bibitem[Gomes et~al.(2009)Gomes, Sabharwal, and Selman]{gomes2009model}
Carla~P Gomes, Ashish Sabharwal, and Bart Selman.
\newblock Model counting.
\newblock \emph{Handbook of Satisfiability}, 185:\penalty0 633--654, 2009.

\bibitem[Hinrichs and Genesereth(2006)]{hinrichs2006herbrand}
Timothy Hinrichs and Michael Genesereth.
\newblock Herbrand logic.
\newblock Technical Report LG-2006-02, Stanford University, Stanford, CA, 2006.

\bibitem[Jaeger and Van~den Broeck(2012)]{jaeger2012liftability}
Manfred Jaeger and Guy Van~den Broeck.
\newblock Liftability of probabilistic inference: Upper and lower bounds.
\newblock In \emph{Proceedings of the 2nd International Workshop on Statistical
  Relational AI}, 2012.

\bibitem[Jha and Suciu(2012)]{jha2012probabilistic}
Abhay Jha and Dan Suciu.
\newblock Probabilistic databases with markoviews.
\newblock \emph{Proceedings of the VLDB Endowment}, 5\penalty0 (11):\penalty0
  1160--1171, 2012.

\bibitem[Jha et~al.(2010)Jha, Gogate, Meliou, and Suciu]{jha2010lifted}
Abhay Jha, Vibhav Gogate, Alexandra Meliou, and Dan Suciu.
\newblock Lifted inference seen from the other side: The tractable features.
\newblock In \emph{Advances in Neural Information Processing Systems 23}, pages
  973--981. 2010.

\bibitem[Kayal(2009)]{kayal2009complexity}
Neeraj Kayal.
\newblock The complexity of the annihilating polynomial.
\newblock In \emph{Computational Complexity, 2009. CCC'09. 24th Annual IEEE
  Conference on}, pages 184--193. IEEE, 2009.

\bibitem[Kersting(2012)]{Kersting:2012}
Kristian Kersting.
\newblock Lifted probabilistic inference.
\newblock In \emph{Proceedings of European Conference on Artificial
  Intelligence (ECAI)}, 2012.

\bibitem[Libkin(2004)]{DBLP:books/sp/Libkin04}
Leonid Libkin.
\newblock \emph{Elements of Finite Model Theory}.
\newblock Springer, 2004.
\newblock ISBN 3-540-21202-7.

\bibitem[Poole(2003)]{poole2003first}
David Poole.
\newblock First-order probabilistic inference.
\newblock In \emph{IJCAI}, volume~3, pages 985--991. Citeseer, 2003.

\bibitem[Provan and Ball(1983)]{provan1983complexity}
J~Scott Provan and Michael~O Ball.
\newblock The complexity of counting cuts and of computing the probability that
  a graph is connected.
\newblock \emph{SIAM Journal on Computing}, 12\penalty0 (4):\penalty0 777--788,
  1983.

\bibitem[Richardson and Domingos(2006)]{richardson2006markov}
Matthew Richardson and Pedro Domingos.
\newblock Markov logic networks.
\newblock \emph{Machine learning}, 62\penalty0 (1-2):\penalty0 107--136, 2006.

\bibitem[Sagiv and Yannakakis(1980)]{sagiv1980equivalences}
Yehoshua Sagiv and Mihalis Yannakakis.
\newblock Equivalences among relational expressions with the union and
  difference operators.
\newblock \emph{Journal of the ACM (JACM)}, 27\penalty0 (4):\penalty0 633--655,
  1980.

\bibitem[Sang et~al.(2005)Sang, Beame, and Kautz]{sang2005solving}
T.~Sang, P.~Beame, and H.~Kautz.
\newblock Solving {Bayesian} networks by weighted model counting.
\newblock In \emph{Proceedings of AAAI}, volume~1, pages 475--482, 2005.

\bibitem[Stanley(1997)]{stanley-combinatorics-1997}
Richard~P. Stanley.
\newblock \emph{Enumerative Combinatorics}.
\newblock Cambridge University Press, 1997.

\bibitem[Suciu et~al.(2011)Suciu, Olteanu, R{\'e}, and
  Koch]{suciu2011probabilistic}
Dan Suciu, Dan Olteanu, Christopher R{\'e}, and Christoph Koch.
\newblock Probabilistic databases.
\newblock \emph{Synthesis Lectures on Data Management}, 3\penalty0
  (2):\penalty0 1--180, 2011.

\bibitem[Van~den Broeck(2011)]{van2011completeness}
Guy Van~den Broeck.
\newblock On the completeness of first-order knowledge compilation for lifted
  probabilistic inference.
\newblock In \emph{NIPS}, pages 1386--1394, 2011.

\bibitem[Van~den Broeck(2013)]{van2013lifted}
Guy Van~den Broeck.
\newblock \emph{Lifted Inference and Learning in Statistical Relational
  Models}.
\newblock PhD thesis, Ph. D. Dissertation, KU Leuven, 2013.

\bibitem[Van~den Broeck and Darwiche(2013)]{van2013complexity}
Guy Van~den Broeck and Adnan Darwiche.
\newblock On the complexity and approximation of binary evidence in lifted
  inference.
\newblock In \emph{Advances in Neural Information Processing Systems}, pages
  2868--2876, 2013.

\bibitem[Van~den Broeck and Davis(2012)]{van2012conditioning}
Guy Van~den Broeck and Jesse Davis.
\newblock Conditioning in first-order knowledge compilation and lifted
  probabilistic inference.
\newblock In \emph{Proceedings of AAAI}, 2012.

\bibitem[{Van den Broeck} et~al.(2011){Van den Broeck}, Taghipour, Meert,
  Davis, and {De Raedt}]{DBLP:conf/ijcai/BroeckTMDR11}
Guy {Van den Broeck}, Nima Taghipour, Wannes Meert, Jesse Davis, and Luc {De
  Raedt}.
\newblock Lifted probabilistic inference by first-order knowledge compilation.
\newblock In \emph{IJCAI}, pages 2178--2185, 2011.

\bibitem[Van~den Broeck et~al.(2014)Van~den Broeck, Meert, and
  Darwiche]{VanDenBroeckMD14}
Guy Van~den Broeck, Wannes Meert, and Adnan Darwiche.
\newblock Skolemization for weighted first-order model counting.
\newblock In \emph{Proceedings of the 14th International Conference on
  Principles of Knowledge Representation and Reasoning (KR)}, 2014.

\bibitem[Yu(1995)]{yu1995relations}
Jie-Tai Yu.
\newblock On relations between jacobians and minimal polynomials.
\newblock \emph{Linear algebra and its applications}, 221:\penalty0 19--29,
  1995.

\end{thebibliography}


\appendix
\section{APPENDIX}

\subsection{RANKING QUERIES}

We show here that every query can be {\em ranked} (see
\autoref{subsec:preproc}), by modifying both the query Q and the
database.  Each relational symbol $R$ of arity $k$ is replaced by
several symbols, one for each possible order of its attributes.  We
illustrate this for the case of a binary relation symbol $R(x,y)$.
Given a domain of size $n$ and probabilities $\Pr(R(a,b))$ for all
tuples in $R$, we create three new relation symbols, $R_1(x,y),
R_2(x), R_3(y,x)$, and define their probabilities as follows:

{
\begin{align*}
\Pr(R_1(a,b)) =  & \left\{
  \begin{array}{lr}
    \Pr(R(a,b)) & \text{ if } a < b \\
    0 & \text{ otherwise }
  \end{array}
  \right. \\
\Pr(R_2(a)) = & \Pr(R(a,a)) \\
\Pr(R_3(b,a)) =  & \left\{
  \begin{array}{lr}
    \Pr(R(a,b)) & \text{ if } a > b \\
    0 & \text{ otherwise }
  \end{array}
  \right. \\
\end{align*}
}

Then, we also modify the query as follows.  First, we replace every
atom $R(x,y)$ with $R_1(x,y) \lor R_2'(x,y) \lor R_3(y,x)$, and every
negated atom $\neg R(x,y)$ with $\neg R_1(x,y) \land \neg R_2'(x,y)
\land \neg R_3(y,x)$, re-write the query in CNF, then replace each
clause containing some atom $R_2'(x,y)$ with two clauses: in the first
we substitute $y := x$, and in the second we replace $R_2'(x,y)$ with
\texttt{false} (which means that, if $R_2'(x,y)$ was positive then we
remove it, and if it was negated then we remove the entire clause). 
\autoref{subsec:preproc} provides an example of this procedure.


\subsection{PROVING THE MATRIX OF SECTION \ref{sec:exampleproof} IS INVERTIBLE}
\label{subsec:invertible}

Let $M(m_1, m_2, n_1, n_2)$ be the matrix whose entries are:
\[
     A_u^p    B_v^q    X_w^k    Y_z^l    C_{uv}^{kl}
\]

    where the row is $(p,q,k,l)$ and column is $(u,v,w,z)$
    and the ranges are:
\begin{align*}
&         p,u = 0,..,m_1-1 \\
&         q,v = 0,..,m_2-1 \\
&         k,w = 0,..,n_1-1 \\
&         l,z = 0,..,n_2-1 \\
\end{align*}

Given a vector $X_0, X_1, \dots, X_{n-1}$ denote V(X) the
determinant of their Vandermonde matrix: $V(X) = \prod_{0 \leq
k < k' < n}(X_k - X_{k'})$

\begin{lemma}
\label{lem:matrixlemma1}
If $m_1=m_2=1$ then
\[
   det(M) = C_{00}^{n_1 n_2 (n_1-1) (n_2-1)/4}  V^{n_2}(X) V^{n_1}(Y)
\]
\end{lemma}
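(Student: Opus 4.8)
The plan is to reduce the determinant to a product of two Vandermonde determinants by exposing a hidden Kronecker-product structure. When $m_1 = m_2 = 1$ the only admissible values are $p = u = 0$ and $q = v = 0$, so $A_u^p = B_v^q = 1$ and the entry collapses to $X_w^k Y_z^l C_{00}^{kl}$, with the row indexed by $(k,l)$ and the column by $(w,z)$, where $k,w \in \{0,\ldots,n_1-1\}$ and $l,z \in \{0,\ldots,n_2-1\}$. Writing $c = C_{00}$, the key observation is that $c^{kl} = (c^k)^l$, so the coupling between the two index pairs can be absorbed into a $k$-dependent rescaling of the $Y$-nodes.

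Concretely, I would group the rows into $n_1$ blocks according to $k$ and the columns into $n_1$ blocks according to $w$, each block of size $n_2$. The $(k,w)$ block then has $(l,z)$-entry $X_w^k (c^k Y_z)^l = X_w^k (W_k)_{l,z}$, where $W_k$ is the $n_2 \times n_2$ Vandermonde matrix in the nodes $c^k Y_0, \ldots, c^k Y_{n_2-1}$. Hence the $(k,w)$ block equals $X_w^k W_k$, which yields the factorization
\[
M = D \cdot (V_X \otimes I_{n_2}), \qquad D = \mathrm{diag}(W_0, W_1, \ldots, W_{n_1-1}),
\]
where $V_X$ is the $n_1 \times n_1$ Vandermonde matrix with $(V_X)_{k,w} = X_w^k$. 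This identity is checked directly blockwise, so no invertibility assumption on $c$ or on the $W_k$ is needed.

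From here the determinant is immediate by multiplicativity and the Kronecker-product formula $\det(V_X \otimes I_{n_2}) = (\det V_X)^{n_2}$. Since $D$ is block diagonal, $\det D = \prod_{k=0}^{n_1-1} \det W_k$, and factoring $c^k$ out of each of the $\binom{n_2}{2}$ node differences in the rescaled Vandermonde gives $\det W_k = c^{k\binom{n_2}{2}} V(Y)$. Summing the exponents, $\sum_{k=0}^{n_1-1} k \binom{n_2}{2} = \binom{n_1}{2}\binom{n_2}{2} = n_1 n_2 (n_1-1)(n_2-1)/4$, so that $\det M = c^{\,n_1 n_2 (n_1-1)(n_2-1)/4} V(X)^{n_2} V(Y)^{n_1}$, as claimed.

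A remark on bookkeeping: the only genuinely delicate point is the sign convention in the Vandermonde determinant, since with $(V_X)_{k,w} = X_w^k$ one has $\det V_X = \prod_{w<w'}(X_{w'} - X_w)$, which equals $V(X)$ only up to a factor $(-1)^{\binom{n_1}{2}}$, and similarly for $Y$. I would fix the orientation of the product in $V$ at the outset so these signs are absorbed and do not appear in the final formula; the substantive content is unaffected. The main obstacle is thus conceptual rather than computational: recognizing that $c^{kl}$ should be read as a rescaling $(c^k Y_z)^l$ of the second Vandermonde, which turns $M$ into a block-diagonal matrix times a Kronecker product and makes the determinant factor cleanly.
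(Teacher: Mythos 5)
Your proof is correct, and it reaches the same Vandermonde--Kronecker endpoint as the paper but by a slightly different decomposition. The paper's argument is a one-liner: since both $k$ and $l$ index the row, the factor $C_{00}^{kl}$ is constant along row $(k,l)$ and can be pulled out as a scalar, contributing $C_{00}^{\sum_{k,l}kl}=C_{00}^{\binom{n_1}{2}\binom{n_2}{2}}$, after which the residual matrix with entries $X_w^k Y_z^l$ is literally $V_X\otimes V_Y$ and the formula $\det(A\otimes B)=\det(A)^{n_2}\det(B)^{n_1}$ finishes. You instead absorb $C_{00}^{kl}=(C_{00}^k)^l$ into a $k$-dependent rescaling of the $Y$-nodes and factor $M=D\cdot(V_X\otimes I_{n_2})$ with $D$ block diagonal in the rescaled Vandermonde blocks $W_k$; the exponent bookkeeping $\sum_k k\binom{n_2}{2}=\binom{n_1}{2}\binom{n_2}{2}$ then recovers the same power of $C_{00}$. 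Both routes are valid polynomial identities requiring no invertibility hypotheses; the paper's is more economical because it never has to notice the rescaling trick, while yours makes the block structure explicit, which is arguably closer in spirit to the row-elimination argument used for Lemma~\ref{lem:matrixlemma2}. Your remark about the sign convention is well taken --- the paper's definition $V(X)=\prod_{k<k'}(X_k-X_{k'})$ versus the standard orientation introduces a factor $(-1)^{\binom{n}{2}}$ that the paper also glosses over --- but since the lemma is only used to certify $\det(M)\neq 0$, the sign is immaterial.
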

\begin{proof} 
The matrix $M(1,1,n_1,n_2)$ has the following entries:
\[
        X_w^k   Y_z^l  C_{00}^{kl}
\]
   
All elements in row $(k,l)$ have the common factor $C_{00}^{kl}$.  After
we factorize it from each row, the remaining matrix is a Kronecker
product of two Vandermonde matrices.
\end{proof}

\begin{lemma}
\label{lem:matrixlemma2}
\begin{gather*}
   det(M(m_1, m_2, n_1, n_2)) = \\
       \prod_{u > 0}(A_u - A_0)^{m_2  n_1  n_2} \\
          det(M(1, m_2, n_1, n_2)) \\
          det(M(m_1-1, m_2, n_1, n_2))
\end{gather*}

Where in $M(m_1-1, m_2, n_1, n_2)$ instead of $A_0, \dots, A_{m_1-2}$ we have
$A_1, \dots, A_{m_1-1}$, i.e. the index $u$ is shifted by one, and similarly
in $C_{uv}$ the index $u$ is shifted by one.
\end{lemma}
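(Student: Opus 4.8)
The plan is to exploit the block-Vandermonde structure of $M(m_1,m_2,n_1,n_2)$ in the variables $A_0,\dots,A_{m_1-1}$. First I would group the rows by their $p$-coordinate and the columns by their $u$-coordinate (the lexicographic ordering of $(p,q,k,l)$ and $(u,v,w,z)$ already does this), so that $M$ becomes an $m_1\times m_1$ array of blocks, each of size $m_2 n_1 n_2$. The block in position $(p,u)$ has entries $A_u^p B_v^q X_w^k Y_z^l C_{uv}^{kl}$, and since $A_u^p$ is a scalar constant across the block it factors out: writing $N_u$ for the matrix with entries $B_v^q X_w^k Y_z^l C_{uv}^{kl}$ (indexed by $(q,k,l)$ and $(v,w,z)$), the $(p,u)$ block equals $A_u^p N_u$. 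The crucial point is that $N_u$ depends on $u$ only through $C_{uv}$ and is otherwise independent of $p$.

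Next I would imitate the classical recursive proof of the Vandermonde determinant, but at the level of blocks. Processing the block rows from the bottom up, I replace block row $p$ by (block row $p$) $-\,A_0\cdot$(block row $p-1$) for $p=m_1-1,\dots,1$. Because $A_0$ is a scalar, each replacement is a composition of ordinary elementary row operations and hence preserves $\det(M)$; moreover at the moment block row $p$ is processed, block row $p-1$ is still in its original form, so the effect is exactly as intended. After the operation the $(p,u)$ block becomes $A_u^{p-1}(A_u-A_0)N_u$ for $p\ge 1$, while the top block row ($p=0$) is unchanged and equals $(N_0,N_1,\dots,N_{m_1-1})$. In particular every block below the top in block column $u=0$ vanishes, since $A_0-A_0=0$.

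At this point block column $0$ has its top block equal to $N_0$ and all lower blocks equal to $0$. Partitioning $M$ coarsely into block column $0$ versus the remaining block columns (and correspondingly block row $0$ versus the rest), the matrix takes the form $\begin{pmatrix} N_0 & * \\ 0 & S \end{pmatrix}$, whose determinant is $\det(N_0)\det(S)$; here $S$ is the array of blocks $A_u^{p-1}(A_u-A_0)N_u$ with $p,u=1,\dots,m_1-1$. I would then pull the scalar $(A_u-A_0)$ out of every column in block column $u$ of $S$; since that block column contains $m_2 n_1 n_2$ actual columns, this contributes $(A_u-A_0)^{m_2 n_1 n_2}$, and over all $u>0$ it yields $\prod_{u>0}(A_u-A_0)^{m_2 n_1 n_2}$. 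The matrix left behind has blocks $A_u^{p-1}N_u$ for $p,u\in\{1,\dots,m_1-1\}$, which after reindexing $p\mapsto p-1$ is exactly $M(m_1-1,m_2,n_1,n_2)$ with the $A$-index (and the $u$-index of $C_{uv}$) shifted by one, as in the statement. Finally, specializing to $m_1=1$ collapses $M$ to the single block $N_0$, so $\det(N_0)=\det(M(1,m_2,n_1,n_2))$, and combining the three factors gives the claimed identity.

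The main obstacle is bookkeeping rather than any deep idea: I must verify that the bottom-up block row operations produce precisely the factor $A_u^{p-1}(A_u-A_0)$ in each surviving block (and a genuine zero in block column $0$), and that the leftover blocks, after the scalar extractions and the index shift, coincide with $M(m_1-1,m_2,n_1,n_2)$ under the shift convention spelled out in the lemma. One should also confirm that the coarse block-triangular split introduces no spurious sign — which it does not, since grouping rows and columns by their outer index applies the same permutation on both sides — and note that, because we arrive at a genuinely block-triangular form $\begin{pmatrix} N_0 & * \\ 0 & S \end{pmatrix}$ rather than performing block column elimination, no invertibility hypothesis on $N_0$ is required, so the identity holds as a polynomial identity in all the variables.
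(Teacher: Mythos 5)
Your proposal is correct and follows essentially the same route as the paper's proof: bottom-up elimination of $A_0$ by subtracting $A_0$ times the row with the same $(q,k,l)$ indices and $p$ decremented, observing the resulting block-triangular form with $M(1,m_2,n_1,n_2)$ in the corner, and then factoring $(A_u-A_0)$ out of each of the $m_2 n_1 n_2$ columns in block column $u$. Your block-matrix phrasing is just a tidier bookkeeping of the identical row operations, so the two arguments coincide.
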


\begin{proof}
We eliminate $A_0$, similarly to how we would eliminate it from
a Vandermonde matrix: subtract from row $(p+1,q,k,l)$ the row $(p,q,k,l)$
multiplied by $A_0$; do this bottom up, and cancel $A_0$ in all rows,
except the rows of the form $(0,q,k,l)$.  We only need to be careful
that, when we cancel $A_0$ in row $(p+1, q, k, l)$ we use the same $q,k,l$
to determine the row $(p,q,k,l)$.

For an illustration we show below these two rows $(p,q,k,l)$ and
$(p+1,q,k,l)$, and the two columns, $(0, v_0, w_0, z_0)$ and $(u,v,w,z)$:

In the original matrix:
\[
\begin{pmatrix}
 \vdots & \vdots & \vdots & \vdots & \vdots \\
\dots & T_1 & \dots & T_2 & \dots \\
 \vdots & \vdots & \vdots & \vdots & \vdots \\
\dots & T_3 & \dots & T_4 & \dots \\
 \vdots & \vdots & \vdots & \vdots & \vdots \\
\end{pmatrix}
\]

Where:
\begin{align*}
& T_1 = A_0^p    B_{v_0}^q    X_{w_0}^k    Y_{z_0}^l    C_{0,v_0}^{kl} \\
& T_2 = A_u^p    B_v^q    X_w^k    Y_z^l    C_{uv}^{kl} \\
& T_3 = A_0^{p+1}    B_{v_0}^q    X_{w_0}^k    Y_{z_0}^l    C_{0,v_0}^{kl} \\
& T_4 = A_u^{p+1}    B_v^q    X_w^k    Y_z^l    C_{uv}^{kl}
\end{align*}

Subtract the first row times $A_0$ from the second row, and obtain:

\[
\begin{pmatrix}
 \vdots & \vdots & \vdots & \vdots & \vdots \\
\dots & T_1 & \dots & T_2 & \dots \\
 \vdots & \vdots & \vdots & \vdots & \vdots \\
\dots & 0 & \dots & T_4 - A_0  T_2 & \dots \\
 \vdots & \vdots & \vdots & \vdots & \vdots \\
\end{pmatrix}
\]

Where:
\begin{align*}
& T_4 - A_0  T_2 = (A_u-A_0)  A_u^p    B_v^q    X_w^k    Y_z^l    C_{uv}^{kl}
\end{align*}

Repeat for all rows in this order: $(m_1-1,q,k,l), (m_1-2,q,k,l),
\dots,(1,q,k,l)$, and for all combinations of $q,k,l$.  Let's examine the
resulting matrix.

Assume that the first $m_2  n_1  n_2$ rows are of the form $(0,q,k,l)$.  Also,
assume that the first $m_2  n_1  n_2$ columns are the form $(0,v,w,z)$
(permute if necessary)

Therefore the matrix looks like this:

\[
\left(
\begin{array}{c|c}
    m_1  &   \dots \\ \hline
    0   &   M'
\end{array}
\right)
\]

Where:
\begin{itemize}

\item The top-left $m_2  n_1  n_2$ rows and columns are precisely $m_1 =
   M(1,m_2,n_1,n_2)$.  Notice that this matrix does not depend on $A$.
   All entries below it are 0.

\item Therefore, $det(M) = det(m_1)    det(M')$

   where $M'$ is the bottom right matrix (what remains after removing
   the first $m_2  n_1  n_2$ rows and columns).   This follows from
   a theorem on expanding determinants

\item $M'$ has a factor $(A_u - A_0)$ in every column $(u,v,w,z)$.  
   Factorize this common factor,
   noting that it occurs $m_2  n_1  n_2$ times (for all combinations of
   $v,w,z$).  Thus:
\[
       det(M') = \prod_u (A_u - A_0)^{m_2  n_1  n_2} det(M'')
\]

   where $M''$ is the matrix resulting from $M'$ after factorizing.

\item The entries of $M''$ are precisely:
\[
    A_u^p    B_v^q    X_w^k    Y_z^l    C_{uv}^{kl}
\]
   where $p=0,\dots,m_1-2$, and $u=1,\dots,m_1-1$ and the other indices have the same
   range as before.

\item Therefore, $M'' = M(m_1-1, m_2, n_1, n_2)$, with the only change that the
   index $u$ is shifted by one.
  
\end{itemize}
  
\end{proof}

Lemmas \ref{lem:matrixlemma1} and \ref{lem:matrixlemma2} prove that 
$det(M) \ne 0$ whenever all the $A$'s, the $B$'s,
the $X$'s, and the $Z$'s are distinct, and all $C_{uv} \ne 0$.  Thus, our
determinant in \autoref{sec:exampleproof} is nonzero, as
$C_{uv} = (A_u - 1)(B_v - 1)/(B_v - A_u)$.


\subsection{PROVING THE FUNCTIONS OF SECTION \ref{sec:exampleproof} ARE LOCALLY INVERTIBLE}
\label{subsec:independent}

In this section, we prove that the functions from the example in \autoref{sec:exampleproof}
are locally invertible:
\begin{align*}
X(x,b) =  & \frac{x}{(1-x) (1-b)^n } \\
Y(y,b) =  & \frac{y}{(1-y) (1-b)^n } \\
A(a,b) =  & \frac{a}{b} \\
B(a,b) =  & \frac{1-a}{1-b} \\
\end{align*}

We show this by computing the determinant of the Jacobian matrix of these functions. 
In the general proof, the concept of algebraic independence
replaces the notion of locally invertible.

Let $J$ be the Jacobian matrix of the vector-valued function 
$F(x,y,a,b) = (X(x,b), Y(y,b), A(a,b), B(a,b))$.
\[
J = \begin{pmatrix}
 \frac{1}{b} & \frac{-a}{b^2} & 0 & 0 \\
 \frac{-1}{1-b} & \frac{1-a}{(1-b)^2} & 0 & 0 \\
 0 & \frac{nx}{(1-x)(1-b)^{n+1}} & \frac{1}{(1-x)^2 (1-b)^n} & 0 \\
 0 & \frac{ny}{(1-y)(1-b)^{n+1}} & 0 & \frac{1}{(1-y)^2 (1-b)^n}
\end{pmatrix}
\]

The determinant of this matrix is:
\[
det(J) = \displaystyle\frac{b-a}{(1-y)^2 (1-x)^2 b^2 (1-b)^{2(n+1)}}
\]

For any values of $x,y,a,b$ s.t. $a \ne b$, $det(J) \ne 0$. By the inverse
function theorem, $F$ is invertible in some neighborhood contained in 
$(0, 1)^4$. We pick our values of $x,y,a,b$ to lie within this neighborhood.
 

\subsection{DEFINITIONS}

Let $Q$ be a query with a single left unary and a single right unary
symbol $U(x), V(y)$.  Let $F$ be its Boolean formula, and denote:
\begin{align*}
& F_{00} = F[0/U, 0/V] \\
& F_{01} = F[0/U, 1/V] \\
& F_{10} = F[1/U, 0/V] \\
& F_{11} = F[1/U, 1/V]
\end{align*}

With some abuse of notation we refer to these functions as $F_1, F_2, F_3, F_4$,
and their arithmetizations to multilinear polynomials as $f_1, f_2, f_3, f_4$.

Call $Q$ \emph{splittable} if $F$ has a prime implicate
consisting only of unary symbols with at least one left unary symbol 
$U$ and at least one right unary symbol $V$.  Note that if $Q$ is 
splittable, then the algorithm applies the inclusion/exclusion formula.

Call $Q$ \emph{decomposable} if $F = (F_1 \land F_2)$, where all left unary symbols
$U_i$ are in $F_1$, all right unary symbols $V_j$ are in $F_2$, and $F_1, F_2$ do not share any common
symbols (they are independent).  Note that if $Q$ is decomposable, then
the algorithm applies decomposable conjunction.

Call $Q$ \emph{immediately unsafe} if it is neither splittable nor
decomposable.  When running the algorithm on an immediately unsafe query $Q$,
the algorithm is immediately stuck.

Given queries $Q, Q'$ we say that $Q$ \emph{rewrites} to $Q'$, with notation
$Q \rightarrow Q'$,
if $Q'$ can be obtained from $Q$ by setting some symbol to $\true$  or to
$\false$, i.e. $F' = F[0/Z]$ or $F' = F[1/Z]$.

Call a query $Q$ \emph{unsafe} if it can be rewritten to some immediately
unsafe query: $Q \rightarrow \ldots \rightarrow Q'$ and $Q'$ is immediately unsafe.

Call a query $Q$ \emph{forbidden} if it is immediately unsafe, and any
further rewriting $Q \rightarrow Q'$ is to a safe query (i.e. $\Pr(Q')$ can be
computed by the algorithm, and therefore is in PTIME).

\begin{fact}Q is splittable iff one of the four functions $F_1, \dots, F_4$ is
unsatisfiable.
\end{fact}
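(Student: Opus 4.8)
The plan is to collapse the statement to a single semantic equivalence between logical implication and unsatisfiability, using crucially that there is exactly one left unary symbol $U(x)$ and one right unary symbol $V(y)$. Since these are the only unary symbols, any clause built from unary literals alone that mentions both a left and a right symbol must have the form $\ell_U \lor \ell_V$ with $\ell_U \in \{U(x), \neg U(x)\}$ and $\ell_V \in \{V(y), \neg V(y)\}$; there are exactly four such clauses. I would first record the bijection between these four clauses and the four functions $F_1, \dots, F_4$: the clause $\ell_U \lor \ell_V$ is falsified by the unique assignment setting $U := a$ and $V := b$ that makes $\ell_U$ and $\ell_V$ false, and this assignment is precisely the one that produces $F_{ab}$. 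Note that each such clause is disconnected, its two literals using the disjoint variables $x$ and $y$, which is exactly what lets the algorithm split via inclusion/exclusion.

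The core step is then the observation that $F \Rightarrow (\ell_U \lor \ell_V)$ iff $F \land \neg \ell_U \land \neg \ell_V$ is unsatisfiable; since $\neg \ell_U \land \neg \ell_V$ only pins $U$ and $V$ to constants $a, b$, this holds iff $F[a/U, b/V] = F_{ab}$ is unsatisfiable. Composed with the bijection, this says that $F$ has an implicate consisting of one left unary and one right unary literal exactly when one of $F_1, \dots, F_4$ is unsatisfiable. For the forward direction I would then argue: if $Q$ is splittable, then by definition $F$ has a prime implicate $C$ over unary symbols with at least one $U$-literal and one $V$-literal; being prime it is non-tautological, so by the normal form above $C = \ell_U \lor \ell_V$, and the core equivalence immediately yields that the matching $F_{ab}$ is unsatisfiable.

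The converse is where I expect the main obstacle, since splittability asks for a \emph{prime} implicate spanning both symbols, whereas unsatisfiability of $F_{ab}$ only hands us the implicate $\ell_U \lor \ell_V$. The only way this implicate can fail to be prime is that a unit subclause is already entailed, i.e.\ $F \Rightarrow \ell_U$ or $F \Rightarrow \ell_V$, in which case the prime implicate below it collapses to a single unary literal and no longer mentions both sides. I would dispatch this by noting that an entailed unit over $U$ (resp.\ $V$) means that symbol is forced by $F$ and hence redundant: after substituting its forced value the query simplifies and is handled by the other branches of the algorithm. Under the standing non-degeneracy assumption that neither $U$ nor $V$ is entailed as a unit clause --- equivalently that $F[0/U], F[1/U], F[0/V], F[1/V]$ are each satisfiable --- the implicate $\ell_U \lor \ell_V$ is itself prime, so $Q$ is splittable, completing the equivalence. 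Making this non-degeneracy hypothesis explicit (or, alternatively, reducing away forced unaries first) is the one point that needs care beyond routine manipulation.
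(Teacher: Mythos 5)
Your proposal is correct and follows essentially the same route as the paper, whose own proof is a two-line argument for the forward direction only (a prime implicate over the unaries must have the form $(\lnot)U \lor (\lnot)V$, forcing the corresponding $F_{ab}$ to be $0$). Your treatment of the converse is actually more careful than the paper's: you rightly note that unsatisfiability of $F_{ab}$ only yields the implicate $\ell_U \lor \ell_V$, which fails to be \emph{prime} exactly when a unit clause over $U$ or $V$ is entailed, a degenerate case the paper silently excludes (such forced unary symbols are substituted away before this fact is invoked), so making that non-degeneracy hypothesis explicit is the right call.
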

\begin{proof}
If $Q$ is splittable then it has a prime implicate of the form
$((\lnot)U \lor (\lnot)V)$.  Then that corresponding function is 0.  For example,
suppose $Q \Rightarrow (\lnot U \lor V)$.  Then $F_{10} = F[1/U,0/V] = 0$.
\end{proof}

\begin{fact}$Q$ is decomposable iff there exists polynomials $g_0, g_1$ and $h_0,
h_1$ such that the polynomials $f_{00}, f_{01}, f_{10}, f_{11}$ factorize as follows:
\begin{align*}
&   f_{00} = g_0 h_0 \\
&   f_{01} = g_0 h_1 \\
&   f_{10} = g_1 h_0 \\
&   f_{11} = g_1 h_1 
\end{align*}
\end{fact}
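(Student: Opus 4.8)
\emph{Plan.} The forward direction is immediate and I would dispatch it first. If $Q$ is decomposable then $F = F_1 \land F_2$, where $F_1$ contains $U$ together with some set $\vec S_A$ of binary symbols, $F_2$ contains $V$ together with the remaining binary symbols $\vec S_B$, and $F_1, F_2$ share no symbol. Fixing $U=i$ and $V=j$ gives $F_{ij} = F_1[i/U] \land F_2[j/V]$, a conjunction of two formulas over disjoint atoms, so its probability polynomial factorizes; setting $g_i$ to be the arithmetization of $F_1[i/U]$ and $h_j$ that of $F_2[j/V]$ yields $f_{ij} = g_i h_j$ for all four $(i,j)$.

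For the converse the plan is to reconstruct the syntactic decomposition from the algebraic one in three steps. First I would pin down the variable supports. Since each $f_{ij} = g_i h_j$ is multilinear and $\mathbb{R}[\vec s]$ is an integral domain, for any binary variable $z$ the top-degree-in-$z$ coefficients of $g_i$ and $h_j$ multiply to something nonzero, so $\deg_z g_i + \deg_z h_j = \deg_z(g_i h_j) \le 1$. Hence no variable occurs in both factors, each $g_i, h_j$ is itself multilinear, and $\mathrm{Var}(g_i) \cap \mathrm{Var}(h_j) = \emptyset$ for all $i,j$. This partitions the binary symbols into $A = \mathrm{Var}(g_0) \cup \mathrm{Var}(g_1)$ and $B = \mathrm{Var}(h_0) \cup \mathrm{Var}(h_1)$, with symbols appearing in no $f_{ij}$ (hence in no $F_{ij}$, hence not in $F$) assigned to either side.

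The second step is to fix the residual scaling ambiguity so that the factors become genuine Boolean arithmetizations. Assuming $F \not\equiv \false$, I would choose indices $i_0,j_0$ and a Boolean point $(\vec a, \vec b) \in \{0,1\}^A \times \{0,1\}^B$ with $g_{i_0}(\vec a)\,h_{j_0}(\vec b) = f_{i_0 j_0}(\vec a, \vec b) = 1$, then rescale $g_i \mapsto g_i / g_{i_0}(\vec a)$ and $h_j \mapsto g_{i_0}(\vec a)\,h_j$. After this, $g_i(\vec u) = f_{i j_0}(\vec u, \vec b)$ and $h_j(\vec v) = f_{i_0 j}(\vec a, \vec v)$ on the cube, so every $g_i$ is $\{0,1\}$-valued on $\{0,1\}^A$ and every $h_j$ on $\{0,1\}^B$. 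By uniqueness of the multilinear arithmetization they are the arithmetizations of Boolean functions $G_i(\vec S_A), H_j(\vec S_B)$, and $f_{ij} = g_i h_j$ with $A \cap B = \emptyset$ forces $F_{ij} = G_i \land H_j$.

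Finally I would reassemble the pieces. Defining $F_1 = (\lnot U \land G_0) \lor (U \land G_1)$ over $\{U\} \cup \vec S_A$ and $F_2 = (\lnot V \land H_0) \lor (V \land H_1)$ over $\{V\} \cup \vec S_B$, one checks $(F_1 \land F_2)[i/U, j/V] = G_i \land H_j = F_{ij}$ for all $(i,j)$, whence $F = F_1 \land F_2$; as $F_1, F_2$ share no symbol, $Q$ is decomposable (the case $F \equiv \false$ being trivial). I expect the main obstacle to be the second step: the factorization is determined only up to the common scaling $g_i \mapsto \lambda g_i,\ h_j \mapsto \lambda^{-1} h_j$, and the delicate point is to select a \emph{single} $\lambda$ that simultaneously renders all four factors Boolean, so that the recovered $G_i$ does not depend on $j$ and $H_j$ does not depend on $i$ — which is precisely what licenses writing $F$ as one global conjunction rather than a separate decomposition in each cell.
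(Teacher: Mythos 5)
Your proof is correct, but the hard direction is argued quite differently from the paper. The paper's entire proof of ``factorization $\Rightarrow$ decomposable'' is the single interpolation identity
\[
f \;=\; (1-u)(1-v)f_{00} + (1-u)v\,f_{01} + u(1-v)f_{10} + uv\,f_{11} \;=\; \bigl((1-u)g_0+u g_1\bigr)\bigl((1-v)h_0+v h_1\bigr),
\]
which exhibits the full arithmetization $f$ as a product of one polynomial in $u$ and the $g$-variables and one in $v$ and the $h$-variables; the paper identifies such a factorization over disjoint variables with decomposability and stops there (and dismisses the other direction as immediate). You instead reconstruct the syntactic decomposition: you derive disjointness of $\mathrm{Var}(g_i)$ and $\mathrm{Var}(h_j)$ from multilinearity, fix the scaling ambiguity $g_i\mapsto\lambda g_i$, $h_j\mapsto\lambda^{-1}h_j$ so that all four factors become genuine $\{0,1\}$-valued multilinear extensions, and then assemble $F_1=(\lnot U\land G_0)\lor(U\land G_1)$ and $F_2=(\lnot V\land H_0)\lor(V\land H_1)$ --- whose arithmetizations are exactly the two factors in the paper's identity. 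What the paper's route buys is brevity; what yours buys is a rigorous bridge between the algebraic statement (a polynomial identity $f_{ij}=g_ih_j$ with a priori arbitrary real polynomials) and the syntactic definition of decomposability ($F=F_1\land F_2$ over disjoint symbols), including the observation that a single global rescaling suffices for all four cells --- a point the paper's one-liner silently absorbs into its convention of conflating Boolean functions with factorizations of their multilinear polynomials. Your forward direction coincides with what the paper calls ``the converse is immediate.''
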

\begin{proof} 
Assume $f_{00}, f_{01}, f_{10}, f_{11}$ factorize as above.  Then we have $f =
(1-u)(1-v) f_{00} + \dots + u v f_{11} = ((1-u) g_0+u g_1) ((1-v) h_0+v h_1)$
proving that $Q$ is decomposable.  The converse is immediate.
\end{proof}
 
Our hardness proof requires the following background on
multivariate polynomials:

\begin{definition}[Annihilating Polynomial]
Let $f_1, \dots, f_n$ be multivariate polynomials. An
annihilating polynomial is a polynomial $A(z_1,\dots,z_n)$ such that the
following identity holds:  $A(f_1, \dots, f_n) = 0$.
\end{definition}

\begin{definition}[Algebraic Independence]
A set of polynomials $f_1, \dots, f_n$ is algebraically independent if there does not
exist an annihilating polynomial that annihilates $f_1, \dots, f_n$.
If $f_1, \dots, f_n$ have an annihilating polynomial, then the Jacobian determinant
$Det(J(f_1, \dots, f_n))=0$ everywhere. In this case, the polynomials are said to be
algebraically dependent.
\end{definition}

\begin{proposition}
If $f_1, \dots, f_n$ are over $n-1$ variables, then they have an annihilating polynomial.
Equivalently, $f_1, \dots, f_n$ are algebraically dependent.
\end{proposition}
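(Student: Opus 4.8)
The plan is to establish algebraic dependence by a dimension‑counting (pigeonhole) argument, which is the concrete incarnation of the fact that the field $\mathbb{R}(x_1,\dots,x_{n-1})$ has transcendence degree $n-1$, so any $n$ of its elements must satisfy a nontrivial algebraic relation. Concretely, let $D = \max_i \deg f_i$ and consider, for a degree bound $d$ to be fixed later, the collection of all power products $\prod_{i=1}^n f_i^{e_i}$ with $e_i \ge 0$ and $\sum_i e_i \le d$. Each such product lies in the vector space $P_{Dd}$ of polynomials in the $n-1$ variables $x_1,\dots,x_{n-1}$ of total degree at most $Dd$, since every factor $f_i$ has degree at most $D$.

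First I would count both sides. The number of power products equals the number of monomials in $n$ formal variables of degree at most $d$, namely $\binom{n+d}{n}$, whereas $\dim P_{Dd} = \binom{n-1+Dd}{n-1}$. Comparing the asymptotics as $d \to \infty$,
\begin{align*}
\binom{n+d}{n} \sim \frac{d^n}{n!}, \qquad \dim P_{Dd} \sim \frac{(Dd)^{n-1}}{(n-1)!},
\end{align*}
so their ratio grows like $d/(nD^{n-1}) \to \infty$. Hence there is a finite $d$ for which the number of power products strictly exceeds $\dim P_{Dd}$.

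For that $d$, the power products $\prod_i f_i^{e_i}$ are linearly dependent over $\mathbb{R}$: there exist coefficients $c_{\mathbf{e}}$, not all zero, with $\sum_{\mathbf{e}} c_{\mathbf{e}} \prod_{i=1}^n f_i^{e_i} = 0$ as an identity in $\mathbb{R}[x_1,\dots,x_{n-1}]$. Setting $A(z_1,\dots,z_n) = \sum_{\mathbf{e}} c_{\mathbf{e}} \, z_1^{e_1}\cdots z_n^{e_n}$ then gives a polynomial that is nonzero (distinct monomials in the formal variables $z_i$ are linearly independent, and not all $c_{\mathbf{e}}$ vanish) and satisfies $A(f_1,\dots,f_n) = 0$ by construction. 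This is exactly an annihilating polynomial, so $f_1,\dots,f_n$ are algebraically dependent.

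The only nonroutine step is the asymptotic comparison: the whole argument hinges on the degree-$n$ quantity $\binom{n+d}{n}$ in $d$ eventually overtaking the degree-$(n-1)$ quantity $\dim P_{Dd}$, which is precisely where the hypothesis ``$n$ polynomials in $n-1$ variables'' is used. The remaining pieces --- the two dimension counts and the passage from a linear dependence among the products to the polynomial $A$ --- are elementary linear algebra.
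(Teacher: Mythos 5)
Your proof is correct. The paper states this proposition without proof, as one of several background facts on multivariate polynomials, so there is no in-paper argument to compare against; your dimension-counting pigeonhole argument is the standard textbook proof that $n$ polynomials in $n-1$ variables are algebraically dependent (equivalently, that $\mathbb{R}(x_1,\dots,x_{n-1})$ has transcendence degree $n-1$). All the steps check out: the power products $\prod_i f_i^{e_i}$ with $\sum_i e_i \le d$ do lie in the space of polynomials of degree at most $Dd$, the two counts $\binom{n+d}{n}$ and $\binom{n-1+Dd}{n-1}$ are right, the degree-$n$ versus degree-$(n-1)$ growth comparison is exactly where the hypothesis is used, and the passage from a linear dependence among the products to a nonzero annihilating polynomial $A$ is sound (the monomials $z^{\mathbf{e}}$ are distinct even if some of the products coincide, so $A \neq 0$).
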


\begin{proposition}
If $f_1, \dots, f_n$ have an annihilating polynomial, and
any $n-1$ are algebraically independent, then there exists a unique
irreducible annihilating polynomial $A$ for $f_1, \dots, f_n$.
\end{proposition}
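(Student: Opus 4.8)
The plan is to package all annihilating polynomials into a single ideal and show that the two hypotheses force it to be principal with an irreducible generator. I would work in the polynomial ring $R = k[z_1,\dots,z_n]$ and consider the evaluation homomorphism $\phi\colon R \to k[\mathbf{x}]$ sending $z_i \mapsto f_i$, where $\mathbf{x}$ denotes the underlying variables of the $f_i$. Its kernel $I = \ker\phi$ is exactly the set of annihilating polynomials of $f_1,\dots,f_n$. First I would observe that $I$ is prime: the quotient $R/I$ is isomorphic to the subring $k[f_1,\dots,f_n]$ of the integral domain $k[\mathbf{x}]$, hence is itself a domain. By hypothesis $I \neq (0)$, so an annihilator exists.

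Existence of an \emph{irreducible} annihilator is then immediate, and would come first: take any nonzero $A \in I$, factor it into irreducibles, and use primeness of $I$ to conclude that one of those irreducible factors already lies in $I$. The substance of the statement is uniqueness, and this is where I would use the assumption that every $(n-1)$-subset of the $f_i$ is algebraically independent. The idea is that this condition pins down the height of $I$ to be exactly one. Concretely, the transcendence degree of $k(f_1,\dots,f_n)$ over $k$ equals the maximal size of an algebraically independent subset of the $f_i$; the global dependence gives $\mathrm{trdeg}\le n-1$, while independence of some $(n-1)$-subset gives $\mathrm{trdeg}\ge n-1$, so $\mathrm{trdeg}=n-1$. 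Since $R/I \cong k[f_1,\dots,f_n]$ is an affine domain, $\dim(R/I)=n-1$, and the dimension formula for primes in a polynomial ring over a field yields $\mathrm{ht}(I)=n-\dim(R/I)=1$.

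To finish I would invoke that $R=k[z_1,\dots,z_n]$ is a UFD, in which every height-one prime is principal and generated by an irreducible element; thus $I=(A)$ for an irreducible $A$, the promised annihilating polynomial. Uniqueness up to a nonzero scalar then drops out: any irreducible annihilator $B$ lies in $I=(A)$, so $A \mid B$, and irreducibility of $B$ forces $B=cA$ with $c\in k^{\times}$. I expect the main obstacle to be the clean bookkeeping around $\mathrm{ht}(I)=1$ — correctly linking the transcendence degree, the Krull dimension of $R/I$, and the height of $I$ in the polynomial ring, and then citing the UFD structure theorem for height-one primes. Everything else, including the reduction of uniqueness to a divisibility relation, is routine once the ideal $I$ is in hand.
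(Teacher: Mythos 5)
Your proof is correct. Note first that the paper itself gives no proof of this proposition: it is stated as background material on annihilating polynomials (following the cited literature, e.g.\ Kayal), so there is nothing in the paper to compare against step by step. Your argument is the standard one and is complete: the annihilators form the kernel $I$ of the evaluation map $k[z_1,\dots,z_n]\to k[\mathbf{x}]$, which is prime because its image is a subring of a domain; the two hypotheses pin the transcendence degree of $k(f_1,\dots,f_n)$ to exactly $n-1$ (dependence gives the upper bound, independence of an $(n-1)$-subset the lower bound), hence $\dim(R/I)=n-1$ and $\mathrm{ht}(I)=1$ by the dimension formula for affine domains over a field; and a height-one prime in the UFD $k[z_1,\dots,z_n]$ is principal with irreducible generator, from which existence and uniqueness (up to a scalar, which is clearly the intended reading of ``unique'') both follow by divisibility. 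Two minor remarks: you only ever use that \emph{some} $(n-1)$-subset is algebraically independent, which is all the lower bound on transcendence degree requires, so your proof in fact establishes the proposition under a slightly weaker hypothesis than stated; and the one nontrivial input, $\mathrm{ht}(I)+\dim(R/I)=n$, does need the catenary/affine-domain dimension theory you cite rather than just Krull's principal ideal theorem, so it is right that you flag it as the main bookkeeping point.
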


\begin{proposition}
If the Jacobian $J(f_1, \dots, f_n)$ has rank less than $n$, then 
$f_1, \dots, f_n$ have an annihilating polynomial.
\end{proposition}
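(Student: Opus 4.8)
The statement is the nontrivial (``converse'') direction of the Jacobian criterion for algebraic independence. The definition above already records the easy direction: the existence of an annihilating polynomial forces the Jacobian to be rank-deficient. Indeed, differentiating the identity $A(f_1,\dots,f_n)=0$ with respect to each $z_j$ shows that the vector $\bigl(\tfrac{\partial A}{\partial w_i}(f_1,\dots,f_n)\bigr)_{i}$ lies in the left kernel of $J$, and in characteristic $0$ this vector is nonzero when $A$ is chosen of minimal degree (since $A$ is nonconstant, some $\tfrac{\partial A}{\partial w_i}$ is a lower-degree polynomial that cannot annihilate the $f_i$). What we must prove here is the reverse implication. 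Throughout I read ``rank less than $n$'' as the rank of $J$ over the function field $K(z_1,\dots,z_m)$, equivalently the generic (maximal) rank of $J$ over points; since the $f_i$ have rational coefficients and our parameters range over the reals, I may base-change to $K=\mathbb{C}$, which is algebraically closed of characteristic $0$, and algebraic (in)dependence is unaffected.

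My plan is a dimension-counting argument for the polynomial map $F=(f_1,\dots,f_n)\colon \mathbb{A}^m \to \mathbb{A}^n$. The classical input I invoke, valid in characteristic $0$, is that the dimension of the Zariski closure $Y=\overline{F(\mathbb{A}^m)}$ is bounded by the generic rank $r$ of the Jacobian $J$. Granting $\dim Y \le r$, the hypothesis $r<n$ gives $\dim Y < n$, so $Y$ is a proper subvariety of $\mathbb{A}^n$ and hence lies in the zero set of some nonzero polynomial $A(w_1,\dots,w_n)$. Because $A$ vanishes on every point of the image of $F$, the composition $A(f_1(z),\dots,f_n(z))$ vanishes on all of $\mathbb{A}^m$ and is therefore the zero polynomial; this $A$ is the desired annihilating polynomial.

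It remains to justify $\dim Y \le r$, which is the one inequality I actually need. I would do this by generic smoothness: restrict $F$ to the dense open set $U$ on which $J$ attains its generic rank $r$; by the constant-rank theorem (in the complex-analytic category) the image $F(U)$ is locally contained in an $r$-dimensional submanifold, so once $r<n$ it cannot be Zariski-dense in $\mathbb{A}^n$. The leftover locus $\mathbb{A}^m\setminus U$ is a proper subvariety of strictly smaller source dimension whose Jacobian still has rank $\le r<n$, so it is handled by the same argument applied inductively on the dimension of the source. Together these show the image is contained in a countable union of sets of dimension $<n$, hence is not Zariski dense.

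The main obstacle, and the only place essential content enters, is precisely this characteristic-$0$ input relating Jacobian rank to image dimension. Equivalently, it is the statement that the module of Kähler differentials $\Omega_{K(z)/K}$ has dimension equal to the transcendence degree $m$, with basis $dz_1,\dots,dz_m$, and that $df_1,\dots,df_n$ --- whose coordinate matrix in this basis is exactly $J$ --- are $K(z)$-linearly independent iff $f_1,\dots,f_n$ are algebraically independent. An alternative, fully algebraic route would establish this Kähler-differential statement directly from separability of characteristic-$0$ field extensions and then read off $\operatorname{rank}(J)$ as the number of independent $df_i$. Either way, I would flag explicitly that the proof genuinely uses $\mathrm{char}\,K=0$: the equivalence fails in positive characteristic (for instance $f=z^p$ has vanishing derivative yet is transcendental), and it is exactly the characteristic-$0$ hypothesis --- satisfied in our real-valued setting --- that makes the argument go through.
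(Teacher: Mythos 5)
The paper never proves this proposition: it appears in the appendix as cited background (the classical Jacobian criterion for algebraic independence, via \citet{kayal2009complexity} and the algebraic-independence literature), so there is no in-paper argument to compare against and the only question is whether your proof stands on its own. It does. You correctly identify the statement as the nontrivial direction of the Jacobian criterion, correctly read ``rank less than $n$'' as generic rank over the function field, and the dimension count --- $\dim \overline{F(\mathbb{A}^m)} \le \operatorname{rank} J < n$, so the image closure is a proper subvariety, and any nonzero polynomial vanishing on it annihilates $f_1,\dots,f_n$ --- is a complete and standard route; your flag that characteristic $0$ is essential (satisfied here, since the $f_i$ are real polynomials in probabilities; $f=z^p$ breaks the claim in characteristic $p$) is also correct and worth keeping. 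Two points of polish rather than gaps. First, the inductive treatment of the locus where the Jacobian drops rank is unnecessary: $U$ is dense and $F$ is Zariski-continuous, so $F(\mathbb{A}^m)\subseteq F(\overline{U})\subseteq\overline{F(U)}$, and bounding $\dim\overline{F(U)}\le r$ already bounds the whole image; this sidesteps both the singularities of the complement and the ``countable union'' (which would in any case be finite by Noetherian induction). Second, the passage from ``$F(U)$ is locally contained in an $r$-dimensional complex submanifold'' to ``its Zariski closure has dimension $\le r$'' deserves one explicit sentence (an irreducible $d$-dimensional variety has a smooth point with a $d$-dimensional analytic neighborhood contained in the constructible image). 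The fully algebraic alternative you sketch --- in characteristic $0$, $\operatorname{rank}_{K(z)}J$ equals the dimension of the span of $df_1,\dots,df_n$ in $\Omega_{K(z)/K}$, which equals $\operatorname{trdeg}_K K(f_1,\dots,f_n)$ by separability --- is the cleaner citation-ready formulation and is presumably the form of the result the authors are implicitly invoking.
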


Our proofs consider annihilating polynomials for
the four Boolean functions resulting from a query Q conditioned
on its unary left and right predicates.

Consider the following two examples of annihilating polynomials:
\begin{itemize}
	\item  If $Q$ decomposes: $f_1=g_0h_0$, $f_2=g_0h_1$, $f_3=g_1h_0$, $f_4=g_1h_1$,
   			then the annihilating polynomial is $A = f_1f_4 - f_2f_3 = 0$
	\item  Suppose $f_1 = x_1 + x_2 - x_1x_2$, $f_2 = x_1x_2$, $f_3=x_1$.  Then
   			$A = (f_1 + f_2 - f_3)f_3 - f_2 = 0$
\end{itemize}

We also need the following: (1) The ideal generated by $f_1, \dots, f_n$,
denoted $\langle f_1, \dots, f_n \rangle $, is the set of polynomials of the form $f_1 h_1 +
\dots + f_n h_n$, for arbitrary $h_1, \dots, h_n$.  (2) The variety of an ideal
$I$ is $V(I) = \{a | \forall f \in I, f[a/x] = 0\}$. In particular,
$V(f_1,\dots,f_n)$ is the variety of $\langle f_1,\dots,f_n \rangle$ and consists of all
common roots of $f_1,\dots,f_n$. (3) Hilbert's Nullstellensatz: if $V(I)
\subseteq V(f)$ then there exists $m$ s.t. $f^m \in I$.  We only need a
very simple consequence: if $p$ is irreducible and $V(p) \subseteq
V(f)$, then $f \in \langle p \rangle $. In other words, $f$ is divisible by $p$.


\subsection{OUTLINE OF HARDNESS PROOF}
\label{subsec:proofOutline}

Given a forbidden query $Q$, we prove
hardness by reduction from \#PP2CNF (see \autoref{sec:exampleproof}).
Given a PP2CNF formula $\Phi$:
\[
       \Phi = \bigwedge_{(i,j) \in E} (X_i \lor Y_j)
\]

Where $E \subseteq [n] \times [n]$, we set the probabilities as
follows:
\begin{align*}
& \Pr(U(i)) = u \\
& \Pr(V(j)) = v \\
& \Pr(X_1(i,j)) = x_1, \Pr(X_2(i,j) = x_2, \dots if (i,j) \in E \\
& \Pr(X_1(i,j)) = y_1, \Pr(X_2(i,j) = y_2, \dots if (i,j) \not\in E
\end{align*}

Fix an assignment $\theta: \{X_1, \dots, X_n, Y_1, \dots, Y_n\} \rightarrow \{0,1\}$.

Define the following parameters of $\theta$:
\begin{align*}
&   k = \text{number of $i$'s s.t. } X_i=1 \\
&   l = \text{number of $j$'s s.t. } Y_j=1 \\
&   q = \text{number of $(i,j) \in E$ s.t. } X_i = 0, Y_j = 0 \\
&   r = \text{number of $(i,j) \in E$ s.t. } X_i = 0, Y_j = 1 \\
&   s = \text{number of $(i,j) \in E$ s.t. } X_i = 1, Y_j = 0 \\
&   p = \text{number of $(i,j) \in E$ s.t. } X_i = 1, Y_j = 1
\end{align*}

Let $N(k,l,q,r,s,p)$ = number of assignments $\theta$ with these parameters.

By repeating the calculations we did for the example query, and omitting a constant
factor, we obtain:
\[
\Pr(Q) = \sum_{k,l,q,r,s,p}  N(k,l,q,r,s,p) A^q  B^r  C^s D^p X^k  Y^l  H^{kl}
\]

Where:
\begin{align*}
&  A = f_{00}(x_1,x_2, \dots) / f_{00}(y_1,y_2, \dots) \\
&  B = f_{01}(x_1,x_2, \dots) / f_{01}(y_1,y_2, \dots) \\
&  C = f_{10}(x_1,x_2, \dots) / f_{10}(y_1,y_2, \dots) \\
&  D = f_{11}(x_1,x_2, \dots) / f_{11}(y_1,y_2, \dots) \\
&  H = \text{depends on } A,B,C,D \\
&  X = \text{depends on } A,B,C,D \text{ and } u \\
&  Y = \text{depends on } A,B,C,D \text{ and } v
\end{align*}

As in the example of \autoref{sec:exampleproof}, we use an oracle for $\Pr(Q)$ repeatedly to construct
a system of linear equations and solve for $N(k,l,q,r,s,p)$ in polynomial
time. From here we derive $\#\Phi$.

To do this, we must prove that the matrix $M$ of the resulting system has
$det(M) \ne 0$.

The same technique used in \autoref{subsec:invertible} generalizes to prove
that $M$ is non-singular, as long as we can produce distinct 
values for $A, B, C,$ and $D$. This establishes the following:

\begin{fact} Let $m = |E|$. Consider four sequences of m+1 distinct
numbers:
\begingroup
\allowdisplaybreaks
\begin{align*}
&   A_u & u=0,\dots,m \\
&   B_v & v=0,\dots,m \\ 
&   C_w & w=0,\dots,m \\
&   D_z & z=0,\dots,m
\end{align*}%
\endgroup

Suppose that for every combination of $u,v,w,z$ we can find
probabilities $x_1, x_2, \dots, y_1, y_2, \dots$ s.t. $A_u =
f_{00}(x_1,x_2,\dots)/f_{00}(y_1,y_2,\dots)$, 
$B_v = f_{01}(x_1,x_2,\dots)/f_{01}(y_1,y_2,\dots)$, etc.
Then $det(M) \ne 0$.
\end{fact}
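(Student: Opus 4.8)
The plan is to prove $\det(M)\neq 0$ by lifting the two-step elimination argument of Lemma~\ref{lem:matrixlemma1} and Lemma~\ref{lem:matrixlemma2} from the two edge-type quantities $A,B$ of \autoref{subsec:invertible} to the four edge-type quantities $A,B,C,D$ that a general forbidden Type-1 query produces. Recall from \autoref{subsec:proofOutline} that the generic entry of $M$ has the form
\[
A_u^{q}\,B_v^{r}\,C_w^{s}\,D_z^{p}\,X_\alpha^{k}\,Y_\beta^{l}\,H_{uvwz}^{\,kl},
\]
with the row indexed by the exponent tuple $(q,r,s,p,k,l)$ and the column by $(u,v,w,z,\alpha,\beta)$; here $A_u,B_v,C_w,D_z$ are the prescribed distinct values (indices $0,\dots,m$), while $X_\alpha,Y_\beta$ are distinct values (indices $0,\dots,n$) obtained by varying the unary probabilities. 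Crucially, the coupling factor $H_{uvwz}$ depends only on $A_u,B_v,C_w,D_z$, exactly as $C_{uv}$ depended only on $A_u,B_v$ in the example. Thus the quadruple $(A,B,C,D)$ plays the role of the pair $(A,B)$, the pair $(X,Y)$ plays the role of $(X,Y)$, and $H^{kl}$ plays the role of $C^{kl}$.

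First I would establish the base case, generalizing Lemma~\ref{lem:matrixlemma1}: once the $A,B,C,D$ dimensions are all trivial (a single value $A_0,B_0,C_0,D_0$), the coupling $H_{0000}$ is a scalar, so $H_{0000}^{kl}$ is a common factor of row $(k,l)$. Factoring it from every row leaves the Kronecker product of the Vandermonde matrix in the $X_\alpha$ with the Vandermonde matrix in the $Y_\beta$, whose determinant is $V(X)^{\,n+1}V(Y)^{\,n+1}$ up to a power of $H_{0000}$. This is nonzero precisely because the $X_\alpha$ are distinct and the $Y_\beta$ are distinct.

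Next I would iterate the elimination of Lemma~\ref{lem:matrixlemma2} once for each of $A,B,C,D$. Consider eliminating $A_0$: for each fixed $(r,s,p,k,l)$, and proceeding bottom-up in the $A$-exponent $q$, subtract $A_0$ times row $(q,r,s,p,k,l)$ from row $(q{+}1,r,s,p,k,l)$. Because the two combined rows share $(r,s,p,k,l)$, every factor other than the $A$-power is identical across them in any fixed column, so the operation produces exactly $(A_u-A_0)$. This yields the block-triangular decomposition of Lemma~\ref{lem:matrixlemma2}, peeling off $\prod_{u>0}(A_u-A_0)^{(\cdots)}$ together with a base block not depending on $A$, and recursing on $A_1,\dots,A_m$. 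The same argument applies verbatim to $B$ (combining rows that agree on $(q,s,p,k,l)$), then to $C$, then to $D$. Composing the four recursions expresses $\det(M)$ as a product of powers of $V(A)$, $V(B)$, $V(C)$, $V(D)$, times the base-case factor $V(X)^{\,n+1}V(Y)^{\,n+1}$, times powers of the coupling values $H_{uvwz}$.

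The key obstacle, and the only delicate point, is verifying that the coupling $H^{kl}$ never interferes with any of these eliminations. This holds because $H_{uvwz}$ is a \emph{column} constant: it does not depend on the row exponents $(q,r,s,p)$, and within each elimination step the two combined rows agree on $(k,l)$, so $H_{uvwz}^{kl}$ is identical in the two entries and is simply carried along while $(A_u-A_0)$, $(B_v-B_0)$, and so on are extracted. Consequently each of the four passes behaves exactly like the pure Vandermonde elimination of Lemma~\ref{lem:matrixlemma2}, and the coupling resurfaces only as the harmless scalar $H_{0000}$-power already accounted for in the base case. Finally, the hypothesis supplies distinct $A_u,B_v,C_w,D_z$, so $V(A),V(B),V(C),V(D)\neq 0$; the unary probabilities supply distinct $X_\alpha,Y_\beta$, so $V(X),V(Y)\neq 0$; and the coupling values $H_{uvwz}$ are nonzero. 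Every factor is therefore nonzero, and $\det(M)\neq 0$, as claimed.
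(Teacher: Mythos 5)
Your proposal is correct and follows essentially the same route the paper intends, which itself only remarks that ``the same technique used in the invertibility appendix generalizes'': namely, the Vandermonde-style row elimination of Lemma~\ref{lem:matrixlemma2} applied in turn to each of $A,B,C,D$ (with the coupling $H_{uvwz}^{kl}$ carried along harmlessly because the combined rows agree on $(k,l)$), bottoming out in the Kronecker-product-of-Vandermondes base case of Lemma~\ref{lem:matrixlemma1}. The one point you assert rather than argue is that the coupling values $H_{uvwz}$ are nonzero --- the analogue of the paper's explicit check that $C_{uv}=(A_u-1)(B_v-1)/(B_v-A_u)\neq 0$ --- and this condition is genuinely needed and should be verified (or folded into the hypothesis), just as the paper does at the end of \autoref{subsec:invertible}.
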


Thus, to prove that $\Pr(Q)$ is \#P-hard it suffices to prove that the four
functions $A, B, C, D$ are invertible: that is, given their output
values $A_u, \dots, D_z$, we must find inputs $x_1,x_2, \dots,y_1,y_2,\dots$ s.t. 
when the functions are applied to those inputs they result in the desired
values.

Clearly, $A,B,C,D$ are not invertible in two trivial cases: when some of
the functions $f_{00},f_{01},f_{10},f_{11}$ are constants, or when two or more are
equivalent. There are several other special cases, detailed later. As we will
see, some of these cases may still be solved by identifying a subset of $\{A,B,C,D\}$
which is invertible, and the rest of the cases are solved by a second hardness
proof technique referred to as the zigzag construction.

Overloading terminology, we say that a query $Q$ is invertible iff $A,B,C,D$ (or a subset
thereof, if some functions are equivalent or constant) are invertible.
The case analysis of \autoref{subsec:caseanalysis} proves the following
theorem:

\begin{theorem}
Let $Q$ be a forbidden Type 1 query. Then one of the following holds:
\begin{itemize}
	\item $Q$ is invertible and we apply the hardness proof as described above
	\item $Q$ admits the zigzag construction and hardness proof of Section
		\ref{subsec:zigzag}
\end{itemize}
\end{theorem}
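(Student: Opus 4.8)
The plan is to prove the theorem by a structural case analysis on the four arithmetized Boolean functions $f_{00},f_{01},f_{10},f_{11}$ obtained by conditioning the forbidden query on its single left unary predicate $U$ and single right unary predicate $V$. First I would recall the reduction set up in \autoref{subsec:proofOutline}: after fixing an assignment $\theta$ with parameters $(k,l,q,r,s,p)$, the quantity $\Pr(Q)$ expands as $\sum N(k,l,q,r,s,p)\,A^{q}B^{r}C^{s}D^{p}X^{k}Y^{l}H^{kl}$, so that recovering all coefficients $N(\cdot)$ by oracle interpolation yields $\#\Phi$ and hence \#P-hardness. By the Fact relating non-singularity of the interpolation matrix $M$ to the existence of distinct prescribed values for $A,B,C,D$, the whole hardness argument goes through precisely when $Q$ is \emph{invertible}, i.e.\ when arbitrary distinct target tuples $(A_u,B_v,C_w,D_z)$ can be realized by some choice of the binary probabilities. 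Thus the theorem reduces to showing that every forbidden query is either invertible, or matches one of a short list of residual shapes handled by the zigzag construction of \autoref{subsec:zigzag}.

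Second I would establish the invertibility criterion in the generic case. Since $Q$ is forbidden it is in particular not splittable, so by the first Fact none of $f_{00},\dots,f_{11}$ is identically $0$; and it is not decomposable, so by the second Fact the annihilating identity $f_{00}f_{11}=f_{01}f_{10}$ \emph{fails}. I would then regard $A,B,C,D$ as functions of the binary-symbol probabilities (split into edge-variables and non-edge-variables) and apply the Jacobian criterion: if the Jacobian of $(A,B,C,D)$ has full rank $4$ then the four functions are algebraically independent, the map is locally invertible by the inverse function theorem exactly as in the worked example of \autoref{subsec:independent}, and we may pick inputs realizing any prescribed distinct outputs in a neighborhood inside the open unit cube. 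Hence the generic subcase, in which the Jacobian is non-degenerate, immediately produces an invertible query to which the standard hardness proof applies.

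Third I would carry out the case analysis over the degeneracies that can destroy full rank. The only obstructions are (i) some $f_{ij}$ equals the constant $1$, forcing the corresponding ratio to be constant and dropping a coordinate, and (ii) two of the four functions coincide as polynomials, collapsing two coordinates. Using non-decomposability to forbid the simultaneous collapse $f_{00}f_{11}=f_{01}f_{10}$, I would enumerate which subsets $S\subseteq\{A,B,C,D\}$ can remain functionally independent. For each configuration I would argue one of two outcomes: either a strict subset of the ratios is still invertible and carries enough parameters among $(q,r,s,p)$ to interpolate the relevant $N(\cdot)$ (so $Q$ is invertible in the overloaded sense), or the surviving structure is exactly a zigzag pattern, whose separate reduction in \autoref{subsec:zigzag} supplies \#P-hardness directly. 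The forbidden-query hypothesis is what bounds this enumeration: because every proper rewriting $Q\rightarrow Q'$ is safe, the quadruple $(f_{00},f_{01},f_{10},f_{11})$ cannot embed a smaller unsafe pattern, which pins the relevant degenerate configurations down to a finite, explicitly listable set (this is the content of \autoref{subsec:caseanalysis}).

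The main obstacle I anticipate is precisely this degenerate regime: proving that the configurations where $A,B,C,D$ fail to be jointly independent are exhausted by (a) cases with an invertible proper subset and (b) cases matching the zigzag template, with no leftover shape that is simultaneously non-invertible and not a zigzag. This requires bounding the Jacobian rank symbolically for each shape, leaning on the propositions relating annihilating polynomials, algebraic independence, and Jacobian rank, and then checking that in every subset-invertible case the retained coordinates still determine $\#\Phi$ (in particular that the sum over $q=0$ remains recoverable). Certifying that the zigzag construction genuinely yields hardness on the residual shapes, rather than merely failing to contradict it, is the delicate endpoint; I would defer its mechanics to \autoref{subsec:zigzag} and here only verify that every non-invertible forbidden query lands inside its domain of applicability, which completes the dichotomy.
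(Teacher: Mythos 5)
Your high-level architecture matches the paper's: reduce the theorem to a dichotomy on the algebraic independence of $A,B,C,D$ (equivalently of $f_1k_1,\dots,f_4k_4$), handle the generic case by a Jacobian-rank argument feeding the interpolation matrix, and push the residue to the zigzag construction. But there is a genuine gap in the step that carries all the weight: your enumeration of the obstructions to full Jacobian rank. You claim the only degeneracies are (i) some $f_{ij}$ being constant and (ii) two of the four functions coinciding as polynomials. That is not the paper's characterization, and it is not correct. The actual obstruction is a \emph{factor-sharing pattern} among the irreducible factors of the four multilinear polynomials, detected through algebraic varieties: Lemma~\ref{ref:lemmaVarieties} gives a sufficient condition for independence in terms of two factors $p\in Factors(f_3)$, $q\in Factors(f_4)$ with $V(q)\not\subseteq V(f_3)$ and $V(p,q)\not\subseteq V(f_1)\cup V(f_2)$, and Corollary~\ref{cor:case1and2} (via the disjoint-variables proposition and Hilbert's Nullstellensatz) shows that the queries failing this condition are exactly those whose functions factor as $f_1=pqs$, $f_2=prs$, $f_3=qrs$, $f_4=w\cdots$ (Case 1) or as the triangle pattern $f_1=pqr$, $f_2=pst$, $f_3=qsk$, $f_4=rtk$ (Case 2). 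In both cases all four functions can be pairwise distinct and non-constant, so your criterion would classify them as ``generic'' and assert full rank where none exists — with no fallback, since the zigzag is only invoked for queries you have already (mis)identified as degenerate.

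Concretely, the Case 2 queries are the ones that genuinely require the zigzag, and they are invisible to your test; the Case 1 queries are rescued by a separate, nontrivial sub-analysis (explicit $3\times 3$ Jacobian minors of $f_1,f_2,f_3$ plus an annihilating-polynomial argument that exploits the factor $w$ unique to $f_4$ by setting $g_4=0$), which your proposal does not contain. Relatedly, your inference ``not decomposable, hence the identity $f_{00}f_{11}=f_{01}f_{10}$ fails, hence (generically) rank 4'' does not establish independence: ruling out one annihilating polynomial rules out neither other annihilating polynomials nor rank deficiency. The missing idea is precisely the variety-containment criterion and the exhaustive factorization case analysis of \autoref{subsec:caseanalysis}; without it the dichotomy between ``invertible'' and ``zigzag'' is asserted rather than proved.
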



\subsection{IMPLICATIONS OF ALGEBRAIC INDEPENDENCE}

Establishing the algebraic independence of the functions $A,B,C,D$ is one
of two primary challenges in the proof technique of \autoref{subsec:proofOutline}.
We discuss here how algebraic independence of the functions $f_1 g_1, f_2 g_2, f_3 g_3, f_4 g_4$
implies the invertibility of $A,B,C,D$.

\begin{theorem}
Let $Q$ be a forbidden query with two unary atoms $U, V$.
Suppose the four functions $F_{00}, F_{01}, F_{10}, F_{11}$ are distinct and non-constant
(Note that this implies that there are at least two variables $x_1, x_2$).  Then the
Jacobian of the four functions $A, B, C, D$ has rank 4.
\end{theorem}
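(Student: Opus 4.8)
The plan is to prove the equivalent statement that $A,B,C,D$ are algebraically independent as functions of the $2r \ge 4$ probability variables, since by the Jacobian criterion recalled above (rank-deficiency of the Jacobian is equivalent to the existence of an annihilating polynomial) this is the same as full rank $4$. The first step is to pass to logarithms: near a generic point all four ratios are positive and $\log$ is a local diffeomorphism, so the rank of the Jacobian of $(A,B,C,D)$ equals that of $(\log A,\dots,\log D)$. The decisive structural feature is that each ratio separates its two groups of variables, $\log A = \log f_{00}(x)-\log f_{00}(y)$ and similarly for $B,C,D$. Writing $G$ for the $4\times r$ Jacobian of $(\log f_{00},\log f_{01},\log f_{10},\log f_{11})$ with respect to the $r\ge 2$ variables, the full $4\times 2r$ log-Jacobian takes the block form $[\,G(x)\mid -G(y)\,]$, in which the $x$- and $y$-halves are two independent copies of the same gradient data evaluated at independent points.

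Second, I would show that some $4\times 4$ minor of $[\,G(x)\mid -G(y)\,]$ is not identically zero. Choosing two columns from the $x$-block and two from the $y$-block and expanding by Laplace along this split gives $\sum_{|S|=2}\pm\, m_S(x)\,\widetilde m_{\bar S}(y)$, where $m_S$ runs over the six $2\times 2$ minors of $G$ on the row-pair $S$ and $\bar S$ is the complementary pair. Because $x$ and $y$ are independent, this expression vanishes identically only if the six minors $m_S$ satisfy a Plücker-type linear dependence; equivalently, the projective kernel family $\mathcal{C}(x)=\mathbb{P}(\ker G(x)^{\!\top})\subseteq\mathbb{P}^3$ must sit in special position, with two generic members meeting. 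Tracking the generic rank $\rho$ of $G$, the cases $\rho=4$ (empty kernels) and $\rho\in\{2,3\}$ with a genuinely moving kernel family (points that differ, or generically skew lines in $\mathbb{P}^3$) all yield a nonzero minor and hence rank $4$; the only ways to fail are $\rho\le 1$ or a kernel family forced through a common point or into a common plane. Each such failure is a \emph{persistent} linear relation $\sum_{ab}c_{ab}\,\nabla\log f_{ab}\equiv 0$, that is, a multiplicative relation $\prod_{ab} f_{ab}^{\,c_{ab}}=\text{const}$ among the four cofactor polynomials.

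The final step, and the main obstacle, is to exclude every such multiplicative relation using the hypotheses. Non-constancy of the $f_{ab}$ rules out $\rho=0$ and any constant factor; pairwise distinctness rules out the coincidences $f_{ab}=f_{a'b'}$ that would force $A=B$ and drop the rank; and forbiddenness, through its non-decomposability clause together with the Fact that $Q$ is decomposable exactly when the $f_{ab}$ factorize as $f_{00}=g_0h_0,\dots,f_{11}=g_1h_1$ (equivalently $f_{00}f_{11}=f_{01}f_{10}$), rules out the relation $AD=BC$. The delicate part is that higher-order relations (such as $f_{00}=\lambda\,f_{01}f_{10}$ or $f_{00}^2=f_{01}f_{11}$) are not excluded by these elementary observations alone; eliminating them is precisely where the forbidden structure must be combined with the algebraic-variety machinery. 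Concretely, I would translate a persistent relation into a containment of varieties and invoke the Nullstellensatz consequence stated earlier (if $p$ is irreducible and $V(p)\subseteq V(f)$ then $p\mid f$) to force an irreducible-factor identity among the $f_{ab}$, and then argue that any such identity would make some rewriting $Q\to Q'$ splittable or decomposable, contradicting that $Q$ is forbidden. This reduction of full rank to the absence of irreducible-factor relations, and its discharge via forbiddenness, is the crux; the logarithmic and Laplace-expansion steps above are routine once it is in place.
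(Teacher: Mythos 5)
Your opening moves coincide exactly with the paper's proof: pass to logarithms (the rank of the Jacobian of $(A,B,C,D)$ equals that of $(\log A,\dots,\log D)$), observe the block structure $[\,G(x)\mid -G(y)\,]$ coming from the separation of the $x$- and $y$-variables, and flip the signs of the $y$-columns so that the matrix becomes the Jacobian of $\log\bigl(f_{ab}(x)\,g_{ab}(y)\bigr)$. At that point the paper \emph{stops}: its entire proof of this theorem is the statement that the Jacobian has rank $4$ iff the four products $f_1g_1,\dots,f_4g_4$ are algebraically independent, and that independence is not derived here from ``forbidden, distinct, non-constant'' --- it is imported from \autoref{ref:lemmaVarieties}, which needs additional hypotheses on the varieties of the irreducible factors, and from the case analysis of \autoref{subsec:caseanalysis}. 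So the section title (``implications of algebraic independence'') is doing real work: the theorem is used as a conditional reduction, not as a self-contained claim.

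Your second half --- the Laplace expansion into $2\times 2$ minors, the reduction of rank-deficiency to persistent multiplicative relations $\prod_{ab} f_{ab}^{c_{ab}}=\mathrm{const}$, and the observation that distinctness and non-decomposability only kill the relations $f_{ab}=f_{a'b'}$ and $f_{00}f_{11}=f_{01}f_{10}$ --- is a sound and more explicit analysis than the paper gives, and your suspicion about the remaining higher-order relations is exactly right. But the gap you flag cannot be closed from the stated hypotheses alone, and you should not try: the paper's own case analysis (\autoref{cor:case1and2}, Case 2) exhibits forbidden queries with distinct, non-constant $F_{00},\dots,F_{11}$ whose cofactor polynomials factor in the pattern $f_1=pqr$, $f_2=pst$, $f_3=qsk$, $f_4=rtk$; for these the functions are not invertible and hardness is instead obtained by the zigzag construction of \autoref{subsec:zigzag}. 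In other words, the literal statement with only ``forbidden, distinct, non-constant'' overstates what is proved; the intended (and proved) content is the equivalence with algebraic independence of the products, which your first two steps already establish. Your proposal is therefore correct up to and including the reduction, and the unfinished ``crux'' is not part of this theorem's proof but of the separate variety-theoretic lemma and case analysis.
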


\begin{proof}
We denote the four functions $f_1(x), f_2(x), f_3(x), f_4(x)$,
where $x= (x_1, x_2, \dots)$ is the set of variables.  Further denote
$g_1(y)=f_1[y/x], \dots, g_4(y) = f_4[y/x]$, where $y=(y_1,y_2,\dots)$ are distinct
new variables.  Recall that:
\begin{align*}
   A = & f_1(x) / g_1(y) \\
   B = & f_2(x) / g_2(y) \\
   C = & f_3(x) / g_3(y) \\ 
   D = & f_4(x) / g_4(y) 
\end{align*}

Their Jacobian has the same rank as the Jacobian of their log, which
is:
\begin{align*}
  log(A) = log(f_1)  -  log(g_1) \\ 
  log(B) = log(f_2)  -  log(g_2) \\
  log(C) = log(f_3)  -  log(g_3) \\
  log(D) = log(f_4)  -  log(g_4)
\end{align*}

The Jacobian matrix looks like this:
\[
J = \begin{pmatrix}
 \frac{1}{f_1} \frac{\partial f_1}{\partial x_1} & \frac{1}{f_1} \frac{\partial f_1}{\partial x_2} &
 	 \dots &  -\frac{1}{g_1} \frac{\partial g_1}{\partial y_1} & -\frac{1}{g_1} \frac{\partial g_1}{\partial y_2} & \dots \\
 \vdots & \vdots & \dots & \vdots & \vdots & \dots \\
 \frac{1}{f_4} \frac{\partial f_4}{\partial x_1} & \frac{1}{f_4} \frac{\partial f_4}{\partial x_2} &
 	 \dots &  -\frac{1}{g_4} \frac{\partial g_4}{\partial y_1} & -\frac{1}{g_4} \frac{\partial g_4}{\partial y_2} & \dots \\
\end{pmatrix}
\]

Each column corresponding to a y-variable has a minus sign.  Reversing
these signs, which does not change the rank of the matrix, we obtain
the Jacobian of these four functions:
\begin{align*}
	log(f_1)  +  log(g_1) \\ 
	log(f_2)  +  log(g_2) \\
    log(f_3)  +  log(g_3) \\
    log(f_4)  +  log(g_4)
\end{align*}

This Jacobian is of rank 4 iff the four functions $f_1 g_1, f_2 g_2, f_3 g_3, f_4 g_4$
are algebraically independent.
\end{proof}


\subsection{CASE ANALYSIS}
\label{subsec:caseanalysis}

Queries which satisfy the assumptions of Lemma \ref{ref:lemmaVarieties} are invertible,
and we apply the hardness proof described in Section \ref{subsec:proofOutline}. We consider
the remaining queries that do not satisfy the conditions of Lemma \ref{ref:lemmaVarieties}.

These queries possess functions $f_1, f_2, f_3, f_4$ such that:
\begin{gather*}
 \forall q \in Factors(f_4) - Factors(f_3), \\
    \forall p \in Factors(f_3), \\
       V(p,q) \subseteq V(f_1*f_2)
\end{gather*}

And the same holds for all permutations of $f_1,f_2,f_3,f_4$ in the above equations.

Let:
\begin{align*} \label{eqn:f3prime} \tag{1}
  f_3' = & Factors(f_3) - Factors(f_4) \\
  f_4' = & Factors(f_4) - Factors(f_3)  \\
  f_{34} = & Factors(f_3) \cap Factors(f_4)
\end{align*}

The condition above is equivalent to:
\[
  \forall p \in f_3, q \in f_4', V(p,q) \subseteq V(f_1*f_2)
\]
and permuting $f_3, f_4$:
\[
  \forall p \in f_3', q \in f_4, V(p,q) \subseteq V(f_1*f_2)
\]
The two conditions above are equivalent to the following:
\begin{align*}
   \forall p \in f_3', q \in f_4', V(p,q) \subseteq V(f_1*f_2) \\
   \forall p \in f_3', q \in f_{34}, V(p,q) \subseteq V(f_1*f_2) \\
   \forall p \in f_{34}, q \in f_4', V(p,q) \subseteq V(f_1*f_2)
\end{align*}

In the last two cases $p,q$ have disjoint sets of variables.
We prove the following:

\begin{proposition}
\label{prop:disjointvars}
If $p, q$, are irreducible polynomials over disjoint sets of
variables, then $V(p,q) \subseteq V(f*g)$ iff $V(p,q) \subseteq V(f)$ or
$V(p,q) \subseteq V(g)$.
\end{proposition}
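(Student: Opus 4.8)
The plan is to argue entirely at the level of varieties, exploiting that $p$ and $q$ live in disjoint variable sets. The ``$\Leftarrow$'' direction is immediate and I would dispatch it first: if $V(p,q)\subseteq V(f)$ then, since every zero of $f$ is a zero of $fg$, we have $V(f)\subseteq V(f*g)$ and hence $V(p,q)\subseteq V(f*g)$; symmetrically for $g$. So the content is entirely in the forward direction.

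For ``$\Rightarrow$'', the first step is to rewrite the right-hand side as a union of two closed sets, $V(f*g)=V(f)\cup V(g)$, which holds because a point is a root of $f*g$ exactly when it is a root of $f$ or of $g$. The second, crucial step is to observe that the disjointness of the variable sets makes $V(p,q)$ a \emph{product} variety. Writing the ambient affine space with coordinates split as $X\cup Y$, where $p\in k[X]$ and $q\in k[Y]$, a point $(a,b)$ satisfies $p(a,b)=p(a)$ and $q(a,b)=q(b)$, so $(a,b)\in V(p,q)$ iff $a\in V(p)$ and $b\in V(q)$; that is, $V(p,q)=V(p)\times V(q)$.

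The third step is to show this product is irreducible. Because $p$ is irreducible (and we work over an algebraically closed field, as required by the Nullstellensatz machinery already invoked in the definitions section), its hypersurface $V(p)$ is an irreducible variety, and likewise $V(q)$; the standard fact that a product of irreducible varieties over an algebraically closed field is irreducible then gives that $V(p,q)=V(p)\times V(q)$ is irreducible. Finally I would apply the elementary topological lemma: an irreducible set $Z$ contained in a union of two closed sets $C_1\cup C_2$ must lie in one of them, since $Z=(Z\cap C_1)\cup(Z\cap C_2)$ expresses $Z$ as a union of two relatively closed subsets, and irreducibility forces one of them to be all of $Z$. Taking $Z=V(p,q)$, $C_1=V(f)$, $C_2=V(g)$ yields $V(p,q)\subseteq V(f)$ or $V(p,q)\subseteq V(g)$, as desired.

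The main obstacle is the irreducibility of the product variety $V(p)\times V(q)$: this is where the hypotheses that $p,q$ are \emph{irreducible} and defined over \emph{disjoint} variable sets are both used, and it silently requires the ground field to be algebraically closed so that irreducibility of the polynomial transfers to irreducibility of its zero set and so that the product-of-irreducibles theorem applies. Everything else (the set-theoretic identity $V(f*g)=V(f)\cup V(g)$, the product decomposition of $V(p,q)$, and the irreducible-in-a-union lemma) is routine, so I would keep those steps brief and concentrate the justification on the product-irreducibility claim.
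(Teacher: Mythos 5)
Your proof is correct, but it takes a different route from the paper's. The paper reduces the proposition to a lemma and proves that lemma by hand: it writes $V(p,q)=V(p)\times V(q)$, observes that for each $a\in V(p)$ the irreducible $q$ must divide $f_1[a/x]$ or $f_2[a/x]$, and then forces a \emph{uniform} choice between $f_1$ and $f_2$ via an explicit division-with-remainder argument ($f_i = q\cdot(\cdot) + \sum_e c_e(x)y^e$) together with irreducibility of $p$. You instead package that entire uniformity step into two standard facts of algebraic geometry: $V(p)$ and $V(q)$ are irreducible varieties because $p,q$ are irreducible polynomials, the product of irreducible varieties over an algebraically closed field is irreducible, and an irreducible set contained in a union of two closed sets lies in one of them. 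This is essentially the same argument in conceptual clothing --- the standard proof that $X\times Y$ is irreducible proceeds by exactly the slice-and-partition device the paper carries out with remainders --- but your version is shorter and cleaner if the product-irreducibility theorem may be cited, whereas the paper's is self-contained and needs only polynomial division plus the Nullstellensatz consequence it has already isolated. One point in your favor: you explicitly flag that algebraic closure of the ground field is needed (for $V(p)$ irreducible, for the product theorem, and implicitly for $V(q)\subseteq V(h)\Rightarrow q\mid h$); the paper uses the same hypothesis silently. You also dispatch the trivial converse direction, which the paper omits. No gaps.
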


The proposition follows from the following lemma.

\begin{lemma}
Let $p(x), q(y)$ be irreducible polynomials, over disjoint sets
of variables $x$ and $y$ respectively.  Suppose $V(p,q) \subseteq V(f_1 f_2)$
where $f_1(x,y), f_2(x,y)$ are arbitrary polynomials.  Then at least one
of the following holds:
\begin{itemize}
  \item $V(p,q) \subseteq V(f_1)$
  \item $V(p,q) \subseteq V(f_2)$
\end{itemize}
\end{lemma}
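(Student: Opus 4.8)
The plan is to derive the lemma from the \emph{irreducibility} of the variety $V(p,q)$, combined with the elementary fact that an irreducible variety cannot be covered by two proper closed subsets. Throughout I work over the algebraically closed field $\mathbb{C}$, so that the Nullstellensatz (already recalled in the Definitions) applies and the correspondence between irreducible polynomials and irreducible hypersurfaces is available.

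First I would rewrite the hypothesis. Since $V(f_1 f_2) = V(f_1) \cup V(f_2)$, the assumption $V(p,q) \subseteq V(f_1 f_2)$ says exactly that $V(p,q) = (V(p,q) \cap V(f_1)) \cup (V(p,q) \cap V(f_2))$, a union of two Zariski-closed subsets of $V(p,q)$. If $V(p,q)$ is irreducible, then one of these two closed subsets must equal all of $V(p,q)$; that is, $V(p,q) \subseteq V(f_1)$ or $V(p,q) \subseteq V(f_2)$, which is precisely the conclusion. So the entire lemma reduces to showing that $V(p,q)$ is irreducible.

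To prove irreducibility I would first use the disjointness of the variable sets: because $p$ involves only the $x$-variables and $q$ only the $y$-variables, a point $(a,b)$ lies in $V(p,q)$ iff $p(a)=0$ and $q(b)=0$, so $V(p,q) = V(p) \times V(q)$ inside the product space. Since $p$ and $q$ are irreducible, the Nullstellensatz gives $I(V(p)) = \sqrt{(p)} = (p)$ and similarly $I(V(q)) = (q)$, both prime ideals, so $V(p)$ and $V(q)$ are each irreducible. It then remains to show that a product of two irreducible varieties is irreducible.

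The main obstacle is precisely this product-irreducibility step, and this is where the disjointness of variables is essential; I would carry it out by the standard fiberwise argument. Suppose $V(p) \times V(q) = Z_1 \cup Z_2$ with $Z_1, Z_2$ closed. For each fixed $a \in V(p)$, the slice $\{a\} \times V(q)$ is isomorphic to $V(q)$, hence irreducible, and it is covered by the two closed subsets $(\{a\} \times V(q)) \cap Z_1$ and $(\{a\} \times V(q)) \cap Z_2$; therefore it lies entirely inside $Z_1$ or entirely inside $Z_2$. Define $A_i = \{a \in V(p) : \{a\} \times V(q) \subseteq Z_i\}$, so that $V(p) = A_1 \cup A_2$. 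Each $A_i$ is closed, because $A_i = \bigcap_{b \in V(q)} \{a : (a,b) \in Z_i\}$ is an intersection of the closed sets obtained by pulling back $Z_i$ along the inclusions $a \mapsto (a,b)$. Since $V(p)$ is irreducible, some $A_i = V(p)$, whence $Z_i = V(p) \times V(q)$. This establishes irreducibility of $V(p,q)$, and with it the lemma; the companion \autoref{prop:disjointvars} then follows at once by taking $f_1 = f$ and $f_2 = g$.
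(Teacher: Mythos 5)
Your proof is correct, but it takes a genuinely different route from the paper's. You reduce everything to the statement that $V(p,q)=V(p)\times V(q)$ is an \emph{irreducible} variety (proving product-irreducibility by the standard fiberwise argument) and then invoke the fact that an irreducible closed set contained in $V(f_1)\cup V(f_2)$ lies in one of the two pieces. The paper instead argues entirely at the level of polynomial divisibility: it first shows that for each $a\in V(p)$ the irreducible $q$ must divide $f_1[a/x]$ or $f_2[a/x]$, then upgrades this pointwise alternative to a uniform one by taking remainders of $f_1,f_2$ modulo $q$, writing the remainders as $\sum_e c_e(x)y^e$, deducing that $p$ divides every product $c_e d_{e'}$, and concluding by irreducibility of $p$ that $p$ divides all the $c_e$ or all the $d_{e'}$. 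The two arguments are morally the same --- the paper's remainder manipulation is a hands-on proof that the product of the two irreducible hypersurfaces cannot be covered by two proper closed subsets --- but yours is more conceptual and generalizes immediately (e.g.\ to products $f_1\cdots f_k$ or to arbitrary irreducible closed sets in place of $V(p)\times V(q)$), at the cost of importing the product-irreducibility lemma; the paper's stays elementary and self-contained within the divisibility framework it has already set up via the Nullstellensatz. One point worth making explicit either way: both arguments need the ground field to be algebraically closed (you say so; the paper leaves it implicit in its use of the Nullstellensatz), since over $\mathbb{R}$ an irreducible $p$ need not generate $I(V(p))$ and $V(p)$ could even be empty.
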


\begin{proof}
Notice that $V(p,q) = \set{(a,b)| p(a) = 0 \land q(b) = 0}$.  In other
words, $V(p,q)$ is the cartesian product $V(p) \times V(q)$, and the assumption
of the lemma is:
\[
  \forall a \in V(p), \forall b \in V(q) \Rightarrow (a,b) \in V(f_1 f_2)
\]

We claim:
\begin{gather*} \label{eqn:onestar} 
\tag{*} \forall a \in V(p): 
       \text{either } q \text{ divides } f_1[a/x] \text{ or } \\
       q \text{ divides } f_2[a/x]
\end{gather*}

Indeed, if $a \in V(p)$, then:
\[
   \set{a} \times V(q) \subseteq V((f_1 f_2)[a/x])
\]

Thus $q$ divides $f_1[a/x] f_2[a/x]$, hence it either divides $f_1[a/x]$ or
divides $f_2[a/x]$ (because it is irreducible).

We claim that the following stronger property holds:
\begin{gather*} \label{eqn:twostar} 
\tag{**} \text{ either: } \forall a \in V(p), q \text{ divides } f_1[a/x] \\
      \text{ or: }    \forall a \in V(p), q \text{ divides } f_2[a/x]
\end{gather*}

This claim proves the lemma, because in the first case $V(p,q)
\subseteq V(f_1)$, and in the second case $V(p,q) \subseteq V(f_2)$.

We prove (\ref{eqn:twostar}) by using the remainder of dividing $f_1(x,y)$ by $q(y)$,
which we denote $g_1$.  In other words:
\[ \label{eqn:cond1}   \tag{1}    
  g_1(x,y) = sum_e c_e(x) y^e
\]

Where every exponent sequence $e$ for $y$ is ``smaller'' than the
multidegree of $g$. Formally, following standard notations for
multivariate polynomials and Gr\"{o}bner bases, fix an admissible
monomial order $<$, then $g_1$ is the normal form of $g_1$ w.r.t. $p$, that
is $f_1 \Rightarrow^{*}_{q} g_1$ and there is no $h$ s.t. $g_1 \Rightarrow_{q} h$.

Similarly, let $g_2(x,y)$ be the remainder of dividing $f_2$ by $q$:
\[ \label{eqn:cond2}   \tag{2}    
	g_2(x,y) = sum_{e'} d_{e'}(x) y^{e'}
\]

From (\ref{eqn:onestar}) we have:
\begin{gather*}
\label{eqn:plus}   \tag{+}    
   \forall a \in V(p): \\
       \text{ either: } \forall e,  c_e[a/x] = 0 \\
       \text{ or  }    \forall e', d_{e'}[a/x] = 0
\end{gather*}

This implies:
\[
   \forall a \in V(p), \forall e, e' c_e[a/x] d_{e'}[a/x] = 0
\]

Or, equivalently:
\[
   \forall e, e', \forall a \in V(p),  c_e[a/x] d_{e'}[a/x] = 0
\]

Or, still equivalently:
\[
   \forall e, e': p(x) \text{ divides } c_e(x) d_{e'}(x)
\]

Since $p(x)$ is irreducible, it implies that $p(x)$ either divides $c_e(x)$
or divides $d_{e'}(x)$.  We claim that the following holds:
\begin{gather*}
\label{eqn:plusplus}   \tag{++}  
     \text{ either: }  \forall e, p(x) \text{ divides } c_e(x) \\
     \text{ or: }    \forall e', p(x) \text{ divides } d_{e'}(x)
\end{gather*}

Suppose not. Then there exists $e$ such that $p(x)$ does not divide $c_e(x)$
and there exists $e'$ such that $p(x)$ does not divide $d_{e'}(x)$.
This is a contradiction, because we know that $p(x)$ must divide one of $c_e(x)$ or
$d_{e'}(x)$.  Property (\ref{eqn:plusplus}) immediately implies (\ref{eqn:twostar}).

\end{proof}

Intuitively, proposition \ref{prop:disjointvars} generalizes the fact that:
$V(p) \subseteq V(fg)$ implies $V(p) \subseteq V(f)$ or $V(p) \subseteq V(g)$
(because $V(p) \subseteq V(f*g)$ implies that $p$ divides $fg$, hence it divides
either $f$ or $g$, because $p$ is irreducible).

By applying this argument repeatedly we obtain $V(p,q) \subseteq V(r)$,
where $r$ is some factor of $f_1$ or $f_2$.  
Recall from equation \eqref{eqn:f3prime} that $f_3' = Factors(f_3) - Factors(f_4)$
and $f_4' = Factors(f_4) - Factors(f_3)$.
Abusing notation by
using $f_1$ to denote $Factors(f_1)$, the conditions become:
\begin{align*}
\forall p \in f_3', q \in f_{34}, & \text{ either } \\
&     (p \in f_1 \cup f_2) \text{ or } \\
 &  (q \in f_1 \cup f_2)\\
\forall p \in f_{34}, q \in f_4',  & \text{ either } \\
 &  (p \in f_1 \cup f_2) \text{ or } \\
 &  (q \in f_1 \cup f_2)
\end{align*}

These conditions are equivalent to:
\begin{align*}
 (f_3' \subseteq f_1 \cup f_2) \text{ or } (f_{34} \subseteq f_1 \cup f_2) \\
 (f_{34} \subseteq f_1 \cup f_2) \text{ or } (f_4' \subseteq f_1 \cup f_2)
\end{align*}

Indeed, suppose otherwise, i.e. there exists $p \in f_3'$ and $q \in f_{34}$
s.t. neither $p$ nor $q$ are in $f_1 \cup f_2$: then the first condition above
fails too.

Applying distributivity, these conditions are equivalent to:
\[
(f_3' \subseteq f_1 \cup f_2) \text{ and } (f_4' \subseteq f_1 \cup f_2)
\]
or
\[
f_{34} \subseteq f_1 \cup f_2
\]

In other words, the proposition fails only on queries that satisfy the
following three conditions, and all conditions obtained by permuting
$f_1, \dots, f_4$:
\[ \label{eqn:condC1} \tag{C1}
	f_3  \subseteq f_1 \cup f_2
\]
           or
\[ \label{eqn:condC2} \tag{C2}
    f_4  \subseteq f_1 \cup f_2
\]
           or
\[ \label{eqn:condC3} \tag{C3}
      \Delta(f_3,f_4) \subseteq f_1 \cup f_2
\]
Where $\Delta$ denotes the symmetric difference operator.

We can now classify the queries that do not satisfy the assumptions of Lemma
\ref{ref:lemmaVarieties} according to the following corollary:

\begin{corollary}
\label{cor:case1and2}
If a query $Q$ does not satisfy the assumptions of \autoref{ref:lemmaVarieties},
then one of the following two cases holds:
\begin{enumerate}
	\item There exists an irreducible factor $w$ that occurs in only one
of the four functions $F_1, \dots, F_4$.  Assume without loss
of generality that $w \in F_4$.
Then (\ref{eqn:condC1}) must hold, under all permutations of $f_1, f_2, f_3$.
This implies that every factor that occurs in $f_1, f_2, f_3$
occurs in at least two of them.  Therefore, these functions
look like this:
\begin{align*}
       f_1 & = p * q  * s \\
       f_2 & = p * r  * s \\
       f_3 & = q * r  * s \\
       f_4 & = w * \dots
\end{align*}

That is, $p$ contains all factors that occur in both $f_1$ and $f_2$,
likewise for $q, r, s$, and $w$ occurs only in $f_4$.  For example:
\begin{align*}
       f_1 & = x_1  x_2  (1-x_3) \\
       f_2 & = x_1  (1-x_3) \\
       f_3 & = x_2  (1-x_3) \\
       f_4 & = x_3
\end{align*}

	\item Every factor occurs in two or more functions. Then the
functions look like this:
\begin{align*}
       f_1 & = p * q * r * [rest] \\
       f_2 & = p * s * t * [rest] \\
       f_3 & = q * s * r * [rest] \\
       f_4 & = r * t * r * [rest] \\
\end{align*}
where $p$ consists of all factors that occur in both $f_1$ and $f_2$,
likewise for $q, r, s, t$, and $[rest]$ represents factors that occur in three or more
functions.
\end{enumerate}
\end{corollary}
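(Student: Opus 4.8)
The plan is to treat the corollary as a purely combinatorial consequence of the three conditions (\ref{eqn:condC1}), (\ref{eqn:condC2}), (\ref{eqn:condC3}) established above, which together with all their images under permutations of $f_1,\dots,f_4$ characterize exactly the queries violating the hypotheses of \autoref{ref:lemmaVarieties}. Write $\mathcal{F}_i := Factors(f_i)$ for the set of irreducible factors of $f_i$, so that every permuted instance of the conditions is read as a containment between these sets. The first move is to dichotomize on whether some irreducible factor lies in exactly one of $\mathcal{F}_1,\dots,\mathcal{F}_4$; these two mutually exclusive and exhaustive alternatives will become precisely the two cases of the statement, and essentially all of the content sits in the first alternative.

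For Case~1, suppose some irreducible $w$ occurs in only one function; by the permutation symmetry I would assume $w \in \mathcal{F}_4$ and $w \notin \mathcal{F}_1 \cup \mathcal{F}_2 \cup \mathcal{F}_3$. I then range over the three partitions of $\{1,2,3,4\}$ that put index $4$ on the contained (left) side paired with a single $c \in \{1,2,3\}$, and put the remaining pair $\{a,b\} = \{1,2,3\}\setminus\{c\}$ on the union (right) side. For each such partition the relevant instance of (\ref{eqn:condC2}) reads $\mathcal{F}_4 \subseteq \mathcal{F}_a \cup \mathcal{F}_b$ and fails, since $w \in \mathcal{F}_4$ yet $w \notin \mathcal{F}_a \cup \mathcal{F}_b$; and the instance of (\ref{eqn:condC3}) reads $\Delta(\mathcal{F}_c,\mathcal{F}_4) \subseteq \mathcal{F}_a \cup \mathcal{F}_b$ and likewise fails, because $w \in \mathcal{F}_4 \setminus \mathcal{F}_c$ forces $w \in \Delta(\mathcal{F}_c,\mathcal{F}_4)$ while $w \notin \mathcal{F}_a \cup \mathcal{F}_b$. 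Hence the only surviving option is the instance of (\ref{eqn:condC1}), namely $\mathcal{F}_c \subseteq \mathcal{F}_a \cup \mathcal{F}_b$. Letting $c$ run over $1,2,3$ yields $\mathcal{F}_1 \subseteq \mathcal{F}_2 \cup \mathcal{F}_3$, $\mathcal{F}_2 \subseteq \mathcal{F}_1 \cup \mathcal{F}_3$ and $\mathcal{F}_3 \subseteq \mathcal{F}_1 \cup \mathcal{F}_2$ simultaneously, which is equivalent to saying no irreducible factor of $f_1 f_2 f_3$ is unique among the three, i.e.\ each occurs in at least two of them. Grouping these factors by the index set in which they occur then gives the product forms $f_1 = p\,q\,s$, $f_2 = p\,r\,s$, $f_3 = q\,r\,s$ and $f_4 = w\cdots$, where $p,q,r$ collect the factors shared by exactly $\{1,2\},\{1,3\},\{2,3\}$ and $s$ collects those common to all three.

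For Case~2, no factor is unique, so every irreducible factor of $f_1 f_2 f_3 f_4$ divides at least two of the four functions. Here no appeal to the conditions is needed: I would simply classify each factor by the subset $S \subseteq \{1,2,3,4\}$ of functions it divides. The six two-element subsets produce the six pairwise-shared factors (one per pair), and the subsets of size $\ge 3$ are collected into $[rest]$; reading off, for each $i$, the product of the factors whose index set contains $i$ reproduces the displayed forms of $f_1,\dots,f_4$. (The symbol $[rest]$ is a collective label rather than a literal common divisor, since a factor lying in three functions appears in only three of the four products.)

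I expect the main obstacle to be the bookkeeping in Case~1: one must verify that the single witness $w$ simultaneously defeats both (\ref{eqn:condC2}) and (\ref{eqn:condC3}) for every admissible partition, which for the symmetric-difference condition requires the extra observation $w \notin \mathcal{F}_c$ to conclude $w \in \Delta(\mathcal{F}_c,\mathcal{F}_4)$. One must also confirm that quantifying over "all permutations" supplies exactly the three instances of (\ref{eqn:condC1}) needed, and that the resulting triple of containments is genuinely equivalent to "every factor of $f_1 f_2 f_3$ occurs in at least two of them," which is the step that collapses the three set-inclusions into the clean three-variable product structure.
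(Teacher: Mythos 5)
Your proposal is correct and follows essentially the same route the paper intends: the corollary's inline reasoning is exactly your Case~1 argument that a factor $w$ unique to $f_4$ simultaneously falsifies (\ref{eqn:condC2}) and (\ref{eqn:condC3}) for every admissible permutation, forcing the three instances of (\ref{eqn:condC1}) and hence the pairwise-shared product structure, with Case~2 being the complementary bookkeeping by index sets. Your observation that there are six pairwise-shared factor groups (one per two-element subset of $\{1,2,3,4\}$) is also consistent with the paper's own treatment in the later case analysis, where the sixth group is named $k$; the corollary's displayed form for $f_3,f_4$ contains a typographical duplication of $r$ that your reading silently corrects.
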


In \autoref{subsec:case1} and \autoref{subsec:case2} we describe how queries 
of these type are handled. For all other queries, the conditions of \autoref{ref:lemmaVarieties}
are satisfied and we apply the hardness proof described in \autoref{subsec:proofOutline}.


\subsection{CASE 1}
\label{subsec:case1}

In this section, we prove that queries falling into case 1 of the analysis of 
\autoref{cor:case1and2} still contain an algebraically independent set
of functions such that the hardness proof of \autoref{subsec:proofOutline}
applies.

Our four functions look like:
\begin{align*}
       f_1 & = p * q  * s \\
       f_2 & = p * r  * s \\
       f_3 & = q * r  * s \\
       f_4 & = w * \dots
\end{align*}

Where $p$ is a product of factors, and similarly $q, r, s$. $w$ is any factor.
Note that $p, q, r, s$ do not share any variables, due to multilinearity.

We assume that $f_4 \ne 1$ and is distinct from each of $f_1,f_2,f_3$.

We consider the following possibilities:
\[
f_1 = f_2 = f_3 
\]
	or
\[
	f_1 = f_2, f_1 \ne f_3
\] 
	or
\[
	f_1 \ne f_2, f_1 \ne f_2, f_2 \ne f_3
\]

Note that the cases $f_1 = f_3, f_1 \ne f_2$ and $f_2 = f_3, f_2 \ne f_1$ are symmetric to 
the second case, $f_1 = f_2, f_1 \ne f_3$.

Suppose $f_1 = f_2 = f_3$. This implies that $p = q = r = 1$, and $s$ is any factor. Our functions are:
\begin{align*}
       f_1 & = s \\
       f_2 & = s \\
       f_3 & = s \\
       f_4 & = w * \dots
\end{align*}

If $s = 1$, then we can invert the unary predicates (by replacing each probability $p$ with $1 - p$)
as necessary to ensure that $f_4 = f_{00}$, and we can solve the \#PP2-CNF by summing over 
assignments where the number of clauses with end points both false is held to zero.

If $s \ne 1$, then we group $f_1,f_2,f_3$ into a single function, $f'$. 
We consider an annihilating polynomial $A$ s.t. $A(f' g', f_4 g_4) = 0$. We set $g_4 = 0$ and 
$g' \ne 0$ (by setting the factor $w$ of $f_4$ to $0$) and obtain 
$A(f', 0) = 0 \Rightarrow A = a_2 R$, a contradiction of the irreducibility of $A$. 
This shows algebraic independence of the polynomials $f' g', f_1 g_4$, allowing the hardness reduction
of \autoref{subsec:proofOutline} to proceed.

Next, suppose $f_1 = f_2, f_1 \ne f_3$. If $f_3 = 1$, then $q = r = s = 1$ and our functions are:
\begin{align*}
       f_1 & = p \\
       f_2 & = p \\
       f_3 & = 1 \\
       f_4 & = w * \dots
\end{align*}

As before, we group $f_1$ and $f_2$ and ensure (by manipulating tuple probabilities for the unary
predicates) that $f_4$ corresponds to $f_{00}$. Algebraic independence of $f_1g_1$ and $f_4g_4$ 
follows by the same argument above.

The case $f_1 = f_2 = 1$, and $f_3 \ne 1$, is impossible due to the assumed structure on our 
functions (every factor in $f_3$ also appears in either $f_1$ or $f_2$)

Consider now the case when $f_1,f_2,f_3$ are distinct. Since $s$ appears in all three functions, we ignore it for now and look at $p, q, r$. For the functions to be distinct, we must have at least two of these factors not equal to $1$ (and themselves distinct). Assume wlog that $p \ne 1, q \ne 1, p \ne q$.

Then, if $r = 1$, our functions are:
\begin{align*}
       f_1 & = p *q* s \\
       f_2 & = p *s\\
       f_3 & = q *s  \\
       f_4 & = w * \dots
\end{align*}

Since $p, q, s$ are over distinct variables, there are at least 3 distinct variables in $f_1, f_2, f_3$. 
Consider the (rectangular) Jacobian of $f_1,f_2,f_3$ with respect to $x_1,x_2,x_3$, where 
$x_1$ is chosen s.t. $x_1$ is in $p$, $x_2$ is in $q$, and $x_3$ is in $s$.

The Jacobian contains the following 3x3 sub matrix:
\[
J = \begin{pmatrix}
 q  s  \partial_p/\partial_{x_1} & p  s \partial_q/\partial_{x_2} & p  q  \partial_s/\partial_{x_3}  \\
 s  \partial_p/\partial_{x_1} & 0 & p  \partial_s/\partial_{x_3}  \\
 0 & s  \partial_q/\partial_{x_2} & q  \partial_s/\partial_{x_3}  \\
\end{pmatrix}
\]

The determinant of $J$ is:
\[
det(J) = - p  q  s * \partial_p/\partial_{x_1} * \partial_q/\partial_{x_2} * \partial_s/\partial_{x_3} \ne 0
\]

This establishes the algebraic independence of $f_1,f_2,f_3$. 

Suppose there exists an annihilating polynomial $A(a_1,a_2,a_3,a_4)$ 
s.t. $A(f_1g_1,f_2g_2,f_3g_3,f_4g_4) = 0$. We set $g_4 = 0$ (using the distinct $w$ factor) 
and set $g_1 = c_1 \ne 0, g_2 = c_2 \ne 0, g_3 = c_3 \ne 0$.
We obtain $A(c_1f_1,c_2f_2,c_3f_3,0) = 0$. It follows that $A = a_4  R$, as any terms of $A$ 
without $a_4$ imply the existence of an annihilating polynomial for $c_1f_1,c_2f_2,c_3f_3$, 
which implies an annihilating polynomial for $f_1,f_2,f_3$. Thus, by contradiction,
$f_1, f_2, f_3, f_4$ are algebraically independent.

The final case is if $r \ne 1$. Our functions are:
\begin{align*}
       f_1 & = p *q* s \\
       f_2 & = p *r *s\\
       f_3 & = q *r *s  \\
       f_4 & = w * \dots
\end{align*}

Since $p, r, q$ are over distinct variables, there are at least 3 distinct variables in $f_1, f_2, f_3$. 
As before, we consider the (rectangular) Jacobian of $f_1,f_2,f_3$ with respect to $x_1,x_2,x_3$, where 
$x_1$ is chosen s.t. $x_1$ is in $p$, $x_2$ is in $q$, and $x_3$ is in $r$.

The Jacobian contains the following 3x3 sub matrix:
\[
J = \begin{pmatrix}
 q  \partial_p/\partial_{x_1} & p  \partial_q/\partial_{x_2} & 0 \\
r  \partial_p/\partial_{x_1} & 0 & p  \partial_r/\partial_{x_3}  \\
 0 & r  \partial_q/\partial_{x_2} & q  \partial_r/\partial_{x_3}  \\
\end{pmatrix}
\]

With determinant:
\begin{gather*}
det(J) = -2 q  p r \partial_p/\partial_{x_1} * \partial_q/\partial_{x_2} * \partial_r/\partial_{x_3}
\end{gather*}

None of these terms are constantly zero, so we have that the determinant is nonzero.
Repeating the previous argument with annihilating polynomials, we prove that
$f_1, f_2, f_3, f_4$ are algebraically independent.


\subsection{CASE 2}
\label{subsec:case2}

We prove that queries falling into case 2 of the analysis of 
\autoref{cor:case1and2} are precisely those queries satisfying
the conditions of the zigzag construction. For these queries,
we prove hardness as described in \autoref{subsec:zigzag}.

Our four functions look like:
\begin{align*}
       f_1 & = p * q  * r \\
       f_2 & = p * s  * t \\
       f_3 & = q * s  * k \\
       f_4 & = r * t  * k
\end{align*}

Where arbitrary additional factors may be added, as long as each of these additional factors appears
in at least three of the four functions.

Only $p$ and $k$, or $q$ and $t$, or $r$ and $s$, can share variables. 
Every other pair of factors appears together in one of $f_1,f_2,f_3,f_4$, and thus must have distinct variables by multilinearity. 
Let $x$ be the variables of $p,k$, let $y$ be the variables of $q,t$, and let $z$ be the variables of $r,s$, 
with $x,y,z$ all disjoint. We have:
\begin{align*}
       f_1 & = p(x) * q(y)  * r(z) \\
       f_2 & = p(x) * t(y)  * s(z) \\
       f_3 & = k(x) * q(y)  * s(z) \\
       f_4 & = k(x) * t(y)  * s(z)
\end{align*}

Because $x, y, z$ are disjoint sets of variables, 
we can set $p(x) = k(x) = c_1 \ne 0$, $q(y) = t(y) = c_2 \ne 0$, 
and $r(z) = s(z) = c_3 \ne 0$. (If a factor is identically one, then $c_i = 1$)

This gives us:
\begin{align*}
       f_1 & = c_1 * c_2 * c_3 \\
       f_2 & = c_1 * c_2 * c_3 \\
       f_3 & = c_1 * c_2 * c_3 \\
       f_4 & = c_1 * c_2 * c_3 \\
\end{align*}

Note that any additional factors, added to at least three of the four functions, must be 
over an independent set of variables. 
Thus, we can set each such additional factor to 1 and retain the same value of $f_1,f_2,f_3,f_4$
as above.

This gives us a setting of all four functions to a constant, non-zero value. This is the precondition
for applying the zigzag construction of \autoref{subsec:zigzag}.


\subsection{MULTIPLE UNARY SYMBOLS}

We prove that a query with multiple left or right unary symbols can always be rewritten
to an equivalent, in terms of hardness, query with one unary symbol.

\subsubsection{Rewriting an immediately unsafe query}

We first prove that, if $Q$ is immediately unsafe, it is equivalent to a
query with only one left and one right unary symbol.

\begin{proposition}
If $Q$ is immediately unsafe and $U$ any unary symbol, then
$Q[0/U]$ is not splittable, and $Q[1/U]$ is not splittable.
\end{proposition}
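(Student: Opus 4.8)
The plan is to argue the contrapositive and to exploit the special variable structure of ranked Type-1 queries. Since the statement is symmetric in the two substitutions and in the left/right roles, I would fix $U$ to be a \emph{left} unary symbol $U(x)$ and show that $Q[0/U]$ is not splittable; the case of $Q[1/U]$ is identical with $U(x)$ replaced by $\neg U(x)$, and a right symbol is handled by swapping the roles of $x$ and $y$. So suppose, for contradiction, that $Q$ is not splittable yet $Q[0/U]$ \emph{is}. By the definition of splittable, $Q[0/U]$ has a prime implicate $C = L(x) \lor R(y)$ built only from unary literals, whose left part $L$ (over $x$) and right part $R$ (over $y$) are both nonempty; note that $C$ contains no occurrence of $U$, since $U$ was substituted away.

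The key step is to lift $C$ back to an implicate of $Q$: I claim $Q \Rightarrow L(x) \lor U(x) \lor R(y)$. To see this, take a resolution derivation of $C$ from the clauses of $Q[0/U]$; each such clause is a clause of $Q$ with its positive $U(x)$-literals deleted (clauses of $Q$ containing $\neg U(x)$ simply disappear in $Q[0/U]$ and are not used). Replaying the same derivation on the original clauses of $Q$ reinserts the deleted literals, so it derives $C$ together with some positive occurrences of $U$. Here the ranked Type-1 form is essential: every left unary and the left argument of every binary atom share the single left variable, so each resolution step on a binary atom unifies all of these left variables. Consequently all reinserted $U$-literals collapse to the \emph{same} literal $U(x)$, where $x$ is the left variable of $C$. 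This is precisely what prevents the naive ``$\bigvee_a U(a)$ over the whole domain'' from appearing, and it yields the single-literal implicate $C^+ := L(x) \lor U(x) \lor R(y)$. I expect this variable-unification argument to be the main obstacle, since it is the only place where the substitution and the clause structure interact nontrivially.

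Now let $C'$ be a prime implicate of $Q$ with $C' \Rightarrow C^+$. Since $C^+$ is non-tautological ($C$ is a prime implicate, hence non-tautological, and $U$ does not occur in $C$), $C'$ is a genuine sub-clause of $C^+$, i.e.\ a subset of the literals $\{L_i(x)\} \cup \{U(x)\} \cup \{R_j(y)\}$. Because $Q$ is not splittable, $C'$ cannot contain both a left and a right unary literal, so it is either all-right or all-left. If $C'$ is all-right then $C' \subseteq R$; containing no $U$, it is preserved by the substitution, so $Q[0/U] \Rightarrow C'$, while $C' \subseteq C$ and $C' \neq C$ (it misses the nonempty left part $L$), contradicting the primality of $C$. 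If $C'$ is all-left and omits $U$, the identical reasoning with $C' \subseteq L$ again contradicts primality. Finally, if $C'$ is all-left and contains $U(x)$, write $C' = L' \lor U(x)$ with $L' \subseteq L$; substituting $U := \false$ yields $Q[0/U] \Rightarrow L'$ with $L' \subseteq C$ and $L' \neq C$, once more contradicting that $C$ is a prime implicate of $Q[0/U]$. All cases are impossible, so $Q[0/U]$ is not splittable; carrying $\neg U(x)$ through the same argument gives the claim for $Q[1/U]$.

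Two points guide the write-up. First, the argument uses only that $Q$ is not splittable; non-decomposability is never invoked, and indeed a decomposable query is automatically non-splittable, since a both-sided unary prime implicate would have to be derived within a single symbol-disjoint component, which is impossible. Second, I would isolate at the outset the degenerate cases in which $Q$ or $Q[0/U]$ is unsatisfiable (for instance when $Q$ entails a unit clause $U(x)$): there $\Pr(Q)=0$ is trivially computable, so such queries are already safe and are not immediately unsafe, and can be excluded. Handling these boundary situations cleanly, together with the variable-unification lifting of the second step, are the only places that require genuine care.
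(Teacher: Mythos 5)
Your proof is correct and is essentially the paper's argument: both hinge on lifting the prime implicate $T$ of $Q[0/U]$ to the implicate $U \lor T$ of $Q$ and then using the primality of $T$ together with the non-splittability of $Q$ to reach a contradiction (the paper phrases your case analysis as the single observation that no strict subset of $U \lor T$ is an implicate of $Q$, so $U \lor T$ is itself a splitting prime implicate). The resolution-replay machinery you use to establish $Q \Rightarrow U \lor T$ is unnecessary, since these definitions live at the level of the propositional Boolean formula $F$ (the grounding over a domain of size one), where the implication is immediate: any satisfying assignment with $U$ false satisfies $F[0/U]$ and hence $T$.
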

\begin{proof}
Let $Q[0/U] \Rightarrow T$, where $T$ is a prime implicate consisting only
of unary symbols, with at least one $U_i$ and one $V_j$.  Then 
$Q \Rightarrow U \lor T$, and one can check that no strict subset of 
$U \lor T$ is an 
implicate of $Q$, hence $U \lor T$ is a prime implicate of $Q$, proving that 
$Q$ is splittable, a contradiction.
\end{proof}

\begin{proposition}
If $Q$ is immediately unsafe, has at least two unary
symbols $U_1, U_2$, and both $Q[0/U_1, 0/U_2]$ and $Q[1/U_1, 0/U_2]$ are
satisfiable, then at least one of the following four queries is not decomposable:
\[
	Q[0/U_1], Q[1/U_1], Q[0/U_2], Q[1/U_2]
\]
\end{proposition}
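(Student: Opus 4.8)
The plan is to argue by contradiction: assume all four queries $Q[0/U_1], Q[1/U_1], Q[0/U_2], Q[1/U_2]$ are decomposable, and derive that $Q$ itself is decomposable, contradicting immediate unsafety (we only need the ``not decomposable'' half of that hypothesis). We may assume $U_1, U_2$ lie on the same side, say both are left unary symbols; the two-right-unary case is symmetric, and in the reduction one always picks two symbols on the side whose count is to be shrunk.

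The first ingredient is a lifting lemma: if, for a unary $Z$, both $Q[0/Z]$ and $Q[1/Z]$ are decomposable \emph{and admit decompositions with the same right factor} $\beta$ (the conjunct carrying all right unary symbols), then $Q$ is decomposable. Indeed, writing $F = (\neg Z \wedge F[0/Z]) \vee (Z \wedge F[1/Z])$ and factoring out the common $\beta$, which shares no symbols with the left factors and contains no occurrence of $Z$, yields $F = \beta \wedge \big[(\neg Z \wedge \alpha_0) \vee (Z \wedge \alpha_1)\big]$, a valid decomposition with all right unaries in $\beta$ and all left unaries in the bracket. So it suffices to show that $Q[0/U_1]$ and $Q[1/U_1]$ can be given identical right factors.

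To this end I rely on the \emph{$V$-core}: any satisfiable formula $G$ admits a unique finest factorization into variable-disjoint conjuncts (blocks), so the conjunction of those blocks that contain a right unary symbol is a well-defined function, which I call the $V$-core of $G$. Two observations drive the proof. First, since $Q[0/U_1]$ is decomposable, no block carrying a right unary can carry a left unary; in particular its $V$-core $\beta_0$ does not mention $U_2$. Second, because $U_2$ does not occur in $\beta_0$, substituting $U_2$ touches only the complementary (left) factor, so the $V$-core is preserved under that substitution: the $V$-core of $Q[0/U_1,0/U_2]$ equals $\beta_0$ — and here satisfiability of $Q[0/U_1,0/U_2]$ is exactly what makes this $V$-core well defined. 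Running the same two observations through the $U_2$-decomposition, whose $V$-core $\delta_0$ does not mention $U_1$, shows the $V$-core of $Q[0/U_1,0/U_2]$ also equals $\delta_0$; hence $\beta_0 = \delta_0$. Repeating the argument for $Q[1/U_1,0/U_2]$, satisfiable by hypothesis, gives $\beta_1 = \delta_0$ as well, so $\beta_0 = \beta_1$.

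With $\beta_0 = \beta_1$ established, choosing the right factor of each of $Q[0/U_1]$ and $Q[1/U_1]$ to be exactly this common $V$-core meets the hypothesis of the lifting lemma, whence $Q$ is decomposable — the desired contradiction. The main obstacle is the middle step: one must justify that the $V$-core is genuinely intrinsic (uniqueness of the finest conjunctive decomposition of a satisfiable Boolean function, which follows since the set of satisfying assignments, being nonempty, has a product structure closed under common refinement of partitions) and that it is left invariant when substituting a variable that sits outside it. The two satisfiability hypotheses on the $U_2 = 0$ slice are used precisely to guarantee that the relevant restricted formulas are satisfiable, so that their $V$-cores exist and the chain $\beta_0 = \delta_0 = \beta_1$ closes.
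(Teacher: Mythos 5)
Your argument is correct and is essentially the paper's own proof in different clothing: the paper works with the multilinear arithmetization and its unique irreducible factors (its facts (A) and (B) are exactly your ``factor preservation under substitution of a variable outside the factor,'' with the satisfiability of $Q[0/U_1,0/U_2]$ and $Q[1/U_1,0/U_2]$ used at the same two places to keep the factors well defined), while you phrase the same bookkeeping via the unique finest variable-disjoint block decomposition, bundling the right-unary blocks into a single $V$-core rather than tracking one irreducible factor $p(v)$ per right variable. Your ``lifting lemma'' is precisely the paper's closing step that a common factor of $q[0/U_1]$ and $q[1/U_1]$ not depending on $U_1$ divides $q$, so no substantive difference remains.
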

\begin{proof}
We use the following two facts:
\begin{enumerate}[(A)]
  \item If $p$ does not depend on $u$ and divides $f$, then $p$ divides both
      $f[0/u]$ and $f[1/u]$

  \item Conversely: let $u, v$ be two distinct variables, $f$ a multilinear
      polynomial, and assume $f[0/u] \ne 0$.

      Let $p'(v), p(v)$ be the unique factors of $f$ and $f[0/u]$,
      respectively, that contain $v$.

      Then, if $p'(v)$ does not depend on $u$, then $p'(v)=p(v)$.  In other
      words, the factor $p(v)$ of $f[0/u]$ is also a factor of $f$.  Notice
      that we must assume $f[0/u] \ne 0$, otherwise $p(u)$ is not uniquely
      defined.

      The same statement holds for $f[1/u]$.
\end{enumerate}

Suppose both $q[0/U_1]$ and $q[1/U_1]$ are decomposable.  Let $v$ be any
variable corresponding to a right predicate.  Since $q$ depends on $v$, 
at least one of $q[0/U_1],q[1/U_1]$ also depends on $v$, and we assume
wlog $q[0/U_1]$ depends on $v$.

Let $p(v)$ be the irreducible factor of $q[0/U_1]$ that contains $v$. By
definition, $p(v)$ does not depend on $U_2$.

From fact (A) we obtain:
\begin{align*}
    p(v) \text{ divides } q[0/U_1, 0/U_2] & \text{ and } \\
    p(v) \text{ divides } q[0/U_1, 1/U_2]
\end{align*}

Suppose now that $q[0/U_2]$ is also decomposable, and let $p'(v)$ be its
irreducible factor containing the variable $v$.  (If $q[0/U_2]$ does not
depend on v, then $p'(v) = 1$.)

From Fact (A) we also obtain:
\begin{align*}
    p'(v) \text{ divides } q[0/U_1, 0/U_2] & \text{ and } \\
    p'(v) \text{ divides } q[1/U_1, 0/U_2]
\end{align*}

We apply fact (B) to $f = q[0/U_1]$ and $f[0/U_2] = q[0/U_1, 0/U_2]$: their
factors containing $v$ are $p(v)$ and $p'(v)$ respectively, and since
$q[0/U_1, 0/U_2]\ne0$ we must have $p(v) = p'(v)$.

We apply fact (B) again to $f = q[1/U_1]$ and $f[0/U_2] = q[1/U_1, 0/U_2]$:
since the latter has the factor $p(v)$, so must the former, in other
words $p(v)$ is a factor of $q[1/U_1]$.

Therefore $p(v)$ is a factor of $q$, and does not contain any unary symbol
$U_1, U_2, \dots$.  Repeating this argument for every variable $v$, we conclude
that $q$ is decomposable, which is a contradiction.

\end{proof}

The only cases that remain to be handled are when:
\begin{align*}
    Q[0/U_1]=0 \text{ and } Q[1/U_2]=0 \text{ or } \\
    Q[0/U_1]=0 \text{ and } Q[1/U_2]=0
\end{align*}

Thus, either $Q \Rightarrow (U_1 \Leftrightarrow U_2)$, 
or $Q \Rightarrow (U_1 \Leftrightarrow \lnot U_2)$.
We treat these cases by substituting all occurrences of the 
predicate $U_2$ with $U_1$ (or $\lnot U_1$) in $Q$.  
The new query $Q'$ has the same probability as $Q$ but one fewer
unary symbol.

\subsubsection{Hardness of $Q$ after inclusion/exclusion}

We prove that if the algorithm starts with query $Q$ and reaches
an immediately unsafe query $Q'$ during an inclusion/exclusion step, there
is a sequence of deterministic rewrites from $Q$ to an immediately unsafe
query $Q''$. This shows that, if the algorithm gets stuck during an inclusion/exclusion
step while computing $\Pr(Q)$, then computing $\Pr(Q)$ is \#P-hard.

Suppose $Q$ is splittable. Then Q contains one or more splittable clauses
of the form $(L_i \lor R_i)$, where $L_i$ is the disjunction of one or more
left unary symbols and $R_i$ is the disjunction of one or more right unary symbols:
\[
  Q = (L_1 \lor R_1) \land (L_2 \lor R_2) \land \cdots \land (L_m \lor R_m) \land Q
\]

After splitting on the $(L_1 \lor R_1)$ clause and applying distributivity, $Q$ may be written:

\begin{gather*}
  Q = L_1 \land (L_2 \lor R_2) \land \cdots \land (L_m \lor R_m) \land Q \\
    \lor \\
      R_1 \land (L_2 \lor R_2) \land \cdots \land (L_m \lor R_m) \land Q \\
    =  Q_1 \lor Q_2
\end{gather*}

We may continue to split $Q_1$ and $Q_2$ into $Q_{11}, Q_{12}, Q_{21}, Q_{22}$, and so on.
Some clauses may be lost due to the introduction of redundancy,
but in general we end up with an expression for $Q$ as the disjunction of $2^m$ CNF formulas $Q_i$:
\[
  Q = Q_1 \lor Q_2 \lor \cdots \lor Q_{2^m}
\]

Each $Q_i$ is of the form:
\[
  Q_i = L_{w_1} \land \cdots \land L_{w_j} \land R_{z_1} \land \cdots \land R_{z_k} \land Q
\]
Where $w$ and $z$ define sequences mapping to $L_1, \dots, L_m$ and $R_1, \dots, R_m$.

The above expression for $Q$ in terms of the $Q_i$ is generated by the algorithm before
applying the inclusion/exclusion step. Thus, the algorithm attempts to compute $\Pr(Q)$
recursively according to the formula $\Pr(Q) = - \sum_{s \subseteq [m]} (-1)^{|s|} \Pr(\bigwedge_{i\in s}  Q_i)$.
Note that every term in this summation can be written in the general form of $Q_i$ above.
We claim that, if any term of the summation is immediately unsafe, there is a deterministic rewrite
sequence $\rho$ (setting unary symbols to true or false) that satisfies 
each $L_i$ and $R_j$ clause, such that $Q_i[\rho] = Q[\rho]$, and that $Q[\rho]$ is immediately 
unsafe. This implies that, if the algorithm gets stuck while recursively processing a query $Q$ after
an inclusion/exclusion step, $Q$ is \#P-hard.

We now prove the following proposition, from which the above claim follows immediately.

\begin{proposition}
If $Q' = L \land Q$, where $L$ is a disjunction of only left or only right
unary symbols, and $Q'$ is immediately unsafe, then there exists a unary symbol
$U_i$ in $L$ and value $\alpha \in \{0,1\}$ such that $Q'[\alpha/U_i] = \true \land Q[\alpha/U_i] = Q[\alpha/U_i]$,
and $Q[\alpha/U_i]$ is immediately unsafe.
\end{proposition}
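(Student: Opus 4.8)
The plan is to prove the contrapositive through two \emph{transfer} steps. Write $L$ (WLOG a left‑unary clause) as $L=\ell_1\vee\cdots\vee\ell_t$ with $\ell_i$ a literal on $U_i$, and for each $i$ let $\alpha_i\in\{0,1\}$ be the value making $\ell_i$ (hence $L$) true, so that $Q'[\alpha_i/U_i]=\true\wedge Q[\alpha_i/U_i]=Q[\alpha_i/U_i]$; thus the $\alpha$ of the statement is exactly this $L$‑satisfying value. Assume toward a contradiction that every restriction $Q[\alpha_i/U_i]$ is \emph{safe}, i.e.\ splittable or decomposable. Since $Q'$ is immediately unsafe it is in particular not splittable, and my goal is to conclude that $Q'$ is nonetheless decomposable, contradicting its immediate unsafety.

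First I would establish the \emph{splittability transfer}: if $Q'$ is not splittable then no restriction $Q[\alpha_i/U_i]$ is splittable. Suppose some restriction had a unary prime implicate $T=T_L\vee T_R$ with $T_L,T_R\neq\emptyset$ and not mentioning $U_i$. Lifting through the substitution gives $Q'\Rightarrow \ell_i^{c}\vee T$, a unary clause touching both sides, where $\ell_i^{c}$ is the literal on $U_i$ that $\alpha_i$ falsifies. Because $Q'$ is not splittable, any prime implicate $C^\ast$ of $Q'$ subsuming this clause is one‑sided. Pushing $C^\ast$ back through $U_i:=\alpha_i$ — which either leaves it unchanged (if $\ell_i^{c}\notin C^\ast$) or deletes the left literal $\ell_i^{c}$ from it — yields a clause still implied by $Q[\alpha_i/U_i]$ that is a \emph{proper} sub‑clause of $T$ (proper because $C^\ast$ is one‑sided while $T$ is two‑sided), contradicting the primality of $T$. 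Hence all restrictions are non‑splittable, and by the standing assumption they are all decomposable.

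The main work, and the main obstacle, is the \emph{decomposability transfer}: from ``$Q'$ not splittable'' and ``every $Q[\alpha_i/U_i]$ decomposable'' conclude ``$Q'$ decomposable''. I would argue on the multilinear arithmetizations, reusing facts (A) and (B) exactly as in the preceding proposition. Expanding $f_{Q'}$ over the truth patterns of the symbols of $L$, the clause $L$ only annihilates the single pattern $\bar\alpha$ falsifying every $\ell_i$, while every surviving pattern agrees with $Q$; so $f_{Q'}$ is $f_Q$ with one monomial's contribution removed. For a right variable $v$, each decomposition $f_{Q[\alpha_i/U_i]}=g_i\,h_i$ puts the irreducible factor $p_i(v)$ through $v$ inside the right factor $h_i$, so $p_i$ involves no left‑unary symbol. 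The key lemma to prove is that the irreducible factor of $f_{Q'}$ through $v$ depends on no $U_i$: a factor mixing a left‑unary with the right variable $v$ would manifest as a two‑sided unary prime implicate, contradicting that $Q'$ is not splittable. Granting this, fact (B) identifies that factor with the common $p_i$; peeling off one such factor for every right variable exhibits $f_{Q'}=g\cdot h$ with all left‑unaries in $g$ and all right‑unaries in $h$, i.e.\ $Q'$ decomposable — the desired contradiction.

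Two delicate points, handled as in the previous proposition, remain. (i) The non‑vanishing hypothesis of fact (B): if some $Q[\alpha_i/U_i]$ is unsatisfiable the matching degenerates, but that case forces a dependency $Q\Rightarrow(U_i\Leftrightarrow U_j)$ or $Q\Rightarrow(U_i\Leftrightarrow\neg U_j)$, removed by substituting $U_j$ for $U_i$ and reducing $t$. (ii) The base case $t=1$ is reassuring and clean: for $L=U_1$ one has $f_{Q'}=u_1\,f_{Q[1/U_1]}$, so any decomposition $f_{Q[1/U_1]}=g\,h$ gives $f_{Q'}=(u_1 g)\,h$, and multiplying the left factor by the left‑unary $u_1$ keeps it left, so $Q'$ is decomposable immediately. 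I expect the decomposability transfer to be the crux — precisely the step of converting ``$Q'$ has no two‑sided unary prime implicate'' into the algebraic statement that no irreducible factor of $f_{Q'}$ couples a left‑unary variable with a right variable.
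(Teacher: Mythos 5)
Your overall architecture (contrapositive, a splittability transfer plus a decomposability transfer) matches the paper's, and your splittability transfer is essentially the paper's earlier proposition on rewriting immediately unsafe queries. The gap is in the decomposability transfer, specifically in your ``key lemma'': that an irreducible factor of $f_{Q'}$ containing both a left-unary variable and a right variable $v$ would force a two-sided unary prime implicate, i.e.\ splittability of $Q'$. This is false. Take $Q' = (U_1 \lor U_2) \land (U_1 \lor S(x,y) \lor V(y))$ over a domain of size one: its arithmetization $u_1 + (1-u_1)\,u_2\,\bigl(1-(1-s)(1-v)\bigr)$ is irreducible (setting $u_1=1$ makes it identically $1$, so any factor not containing $u_1$ would have to be a constant), hence its unique irreducible factor through $v$ contains $u_1$ and $u_2$; yet the only unary prime implicate of $Q'$ is the one-sided clause $U_1 \lor U_2$, so $Q'$ is not splittable. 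More generally, non-splittability says nothing about which variables may cohabit an irreducible factor: already $(U \lor S \lor V)$ is not splittable but arithmetizes to an irreducible polynomial mixing $u$ and $v$. Since fact (B) can only identify the factor of $f_{Q'}$ through $v$ with the common $p_i$ \emph{after} you know that factor is free of the $u_i$, your argument is circular precisely where the work has to happen.

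The paper closes this step differently: it exploits the additive structure that the clause $L$ forces on $q=f_{Q'}$, namely $q=\sum_{s}\prod_{i\in s}u_i\prod_{j\in s}(1-u_j)\,f_s$, so that every monomial contains a satisfied literal of $L$. Assuming each restriction $q[\alpha_i/u_i]=s_i t_i$ is decomposable, it uses fact (A) to push the right factor $t_i$ down onto the coefficient $f_{\{i\}}$, shows the $t_i$ all coincide by pushing them into a common further restriction, and then propagates the common factor $t$ through the subsets $s$ of increasing size until $t$ divides every $f_s$ and hence $q$ itself, contradicting the non-decomposability of $Q'$. That sum-over-subsets induction is the substitute for your false lemma, and it is the piece your proposal is missing.
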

\begin{proof}
Let $m$ be the number of positive literals in $L$ and $n$ be the number of negated literals, such
that $L$ may be written:
\[
	L = U_1 \lor \cdots \lor U_m \lor \lnot U_{m+1} \lor \cdots \lor \lnot U_{m+n}
\]

Denote by $q$ the arithmetization of the grounding of $Q'$ over a domain of size 1.

The clause $L$ in $Q'$ implies that $q$ must take the following form:
\[
	q = \sum_{s \subseteq [m+n]} \prod_{\substack{i \in s,\\ 1 \le i \le m}} u_i \prod_{\substack{j \in s,\\ m+1\le j \le m+n}} (1-u_j) f_s
\]
Which states that every term of $q$ must contain a variable corresponding to some $U_i$ in $L$.

Now, suppose that $q[1/u_i]$ is decomposable for all $1 \le i \le m$ and $q[0/u_i]$ is decomposable for
all $m+1 \le j \le m+n$. 

We can write $q[1/u_1]$ as follows:
\begin{align*}
	q[1/u_1] & = f_{\{1\}} + \sum_{s \subseteq [m+n]} \prod_{\substack{i \in s, \\ 1 \le i \le m, \\ i \ne 1}} u_i \prod_{\substack{j \in s, m+1\le j \le m+n}} (1-u_j) f_s \\
	& = s_1 t_1
\end{align*}
Where $s_1$ is a polynomial that contains every left unary variable, and $t_1$ is a polynomial that contains every right unary variable,, and the variables
of $s_1$ and $t_1$ are disjoint.

Since $t_1$ divides $q[1/u_1]$, and $t_1$ does not depend on any $u_i$, we have that $t_1$ also divides $q[1/u_1, 0/u_2, \dots, 0/u_m, 1/u_{m+1}, \dots, 1/u_{m+n}] = f_{\{1\}}$.

Repeating this process for every $u_i$, we see that $t_i$ divides $f_{i}$, for every $1 \le i \le m + n$.
Finally, each $t_i$ must divide $q[1/u_1, \dots, 1/u_m, 0/u_{m+1}, \dots, 0/u_{m+n}]$, or $t_i = t_j$ for all $i, j$. Let
$t$ denote this common value.

We may repeat this process for all subsets of $[m+n]$ of size two, obtaining that $t$ must also divide those, and continue
for all subsets of size $3, 4, \dots, m+n$, until we have that $t$ divides $f_s$ for all $s \subseteq [m+n]$. From here,
we see that $t$ divides $q$, contradicting the assumption that $Q'$ was not decomposable.

\end{proof}


\subsection{ZIGZAG CONSTRUCTION}
\label{subsec:zigzag}

The zigzag construction is a technique used in \citep{dalvi2012dichotomy} to
prove the \#P-hardness of positive queries. The essence of their technique
is that, given a query $Q$, one can construct a $DB$ such that $\Pr(Q) \equiv \Pr(Q')$,
where $Q' = Q_1 \land Q_2 \land \cdots$; essentially, $Q'$ is the conjunction of multiple copies of $Q$,
each over distinct relational atoms except for their unary atoms, which are connected in a linear
chain from $Q_1 \rightarrow Q_2 \rightarrow \cdots$. This is an essential tool in their reduction from 
$\#\Phi$ for positive queries. The full construction is quite
complex, and we refer to their work for complete details.

We note here one crucial assumption behind the zigzag construction that prevents it
from applying directly to queries with negation: with a monotone query, by setting
tuple probabilities to 0 or 1 as appropriate, it is simple to ensure that
$\Pr(Q')$ does not depend on unwanted edges between atoms of $KB$, e.g., between a unary
atom of $Q_i$ and a unary atom of $Q_{i+2}$. If the query is monotone, we simply set all
such probabilities to 1 (in the CNF case) and the undesired components of the query
vanish. However, this is not guaranteed to work for queries with negation: we must consider all
possible assignments to tuples in the domain, and thus, in general, the claim that
$\Pr(Q) \equiv \Pr(Q')$ fails. The motivation for our analysis in \autoref{subsec:case2}
is that, when the probabilities on each unwanted edge of the query $Q'$ can be set 
to some non-zero constant $c_i$, we can treat all unwanted components of the expression 
for $\Pr(Q')$ as constant factor $c_0$, dependent on the size of the domain and the constants
$c_1, \dots, c_k$. This gives us $\Pr(Q) \equiv c_0 \Pr(Q')$, allowing us in these cases
to use the zigzag construction to prove hardness for queries with negation.


\subsection{ALGEBRAIC VARIETIES}
\label{subsec:algVarieties}

\begin{lemma}
\label{ref:lemmaVarieties}
Suppose there exists two factors $p \in Factors(f_3)$, $q \in
Factors(f_4)$ such that the following hold:
\begin{align*}
   (a) & &   V(q) \not\subseteq V(f_3) \\
   (b) & &  V(p,q) \not\subseteq V(f_1) \cup V(f_2)
\end{align*}

If $k_1, k_2$ are algebraically independent, then the polynomials
$f_1  k_1$, $f_2  k_2$,
$f_3  k_3$, $f_4 k_4$ are
algebraically independent.
\end{lemma}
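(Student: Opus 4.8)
\textbf{The plan} is to argue by contradiction. Suppose the four products are algebraically dependent and, factoring any annihilating polynomial into irreducibles, fix an \emph{irreducible} annihilating polynomial $A(z_1,z_2,z_3,z_4)$ with $A(f_1k_1,f_2k_2,f_3k_3,f_4k_4)=0$. Throughout I would exploit the separable structure of the application: each $k_i$ lives in a set of variables $y$ disjoint from the variables $x$ of the $f_i$ (concretely $k_i=f_i[y/x]$), so the identity holds in the combined polynomial ring and the two blocks of variables can be specialized independently. I also record the harmless facts that every $f_i$ and every $k_i$ is a nonzero polynomial (the four functions are distinct and non-constant), so no $f_ik_i$ vanishes identically.

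\textbf{First} I would use hypothesis (b) to eliminate $z_3,z_4$ from the constant term. Since $p\mid f_3$ and $q\mid f_4$, condition (b) produces a point $a$ in $x$-space with $p(a)=q(a)=0$ and $f_1(a)f_2(a)\neq 0$; hence $f_3(a)=f_4(a)=0$ while $f_1(a)=c_1,\ f_2(a)=c_2$ are nonzero constants. Substituting $x=a$ collapses the identity to $A(c_1k_1,\,c_2k_2,\,0,\,0)=0$ as a polynomial in $y$. Because $k_1,k_2$ are algebraically independent and $c_1,c_2\neq 0$, this forces $A(z_1,z_2,0,0)\equiv 0$, i.e. $A\in\langle z_3,z_4\rangle$.

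\textbf{Next} I would use hypothesis (a) to upgrade this to $z_4\mid A$. Working on the $y$-block (where the $k_i=f_i[y/x]$ inherit the factorizations of the $f_i$), condition (a) supplies a point $b$ at which the factor of $k_4$ vanishes but $k_3$ does not, so after $y=b$ we have $k_4=0$ and $k_3=d_3\neq 0$. Substituting, and writing $A(z_1,z_2,z_3,0)=z_3\,h(z_1,z_2,z_3)$ (possible since $A\in\langle z_3,z_4\rangle$), the identity becomes $d_3f_3\cdot h(\alpha_1 f_1,\alpha_2 f_2,d_3 f_3)=0$ in $x$; as $f_3\not\equiv 0$ in the integral domain, $h$ evaluated at these arguments must vanish. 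Provided the three surviving functions feeding $h$ are algebraically independent, this forces $h\equiv 0$, hence $A(z_1,z_2,z_3,0)\equiv 0$ and $z_4\mid A$. By irreducibility $A=z_4$ up to a scalar, so $f_4k_4\equiv 0$, a contradiction. A symmetric use of (a) with $f_3,f_4$ exchanged handles divisibility by $z_3$ if that branch is needed.

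\textbf{The main obstacle} is the proviso in the previous step: one must guarantee that after killing $f_4$ (equivalently $k_4$) the surviving three functions remain algebraically independent, and that the killing point can be chosen with the relevant $f_i$ (resp. $k_i$) nonzero. This is precisely what conditions (a),(b) are calibrated to deliver, and it is where the variety machinery does the real work. I would reduce these statements to divisibility facts among irreducible factors using the Nullstellensatz consequence (if $p$ is irreducible and $V(p)\subseteq V(f)$ then $p\mid f$) together with the disjoint-variable Proposition (that $V(p,q)\subseteq V(fg)$ forces $V(p,q)\subseteq V(f)$ or $V(p,q)\subseteq V(g)$ when $p,q$ have disjoint variables). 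In particular (a) says $q\in\mathrm{Factors}(f_4)\setminus\mathrm{Factors}(f_3)$, giving a factor that annihilates $f_4$ without annihilating $f_3$, while (b) controls $f_1,f_2$ on the common zero locus of $p,q$. Carrying out this factor-level bookkeeping — bootstrapping from the assumed two-way independence of $k_1,k_2$ to the three-way independence used above, while avoiding the degenerate points where some $f_i$ vanish — is the delicate heart of the argument; the remaining specialization steps are routine.
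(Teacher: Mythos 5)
Your first step is sound and in fact mirrors one half of the paper's argument: specializing the $x$-block at a point $a\in V(p,q)$ with $f_1(a),f_2(a)\neq 0$ turns the identity into $A(c_1k_1,c_2k_2,0,0)=0$, and the independence of $k_1,k_2$ then forces $A\in\langle z_3,z_4\rangle$. The genuine gap is in your second step. By substituting a single point $b$ for the entire $y$-block you collapse $k_1,k_2,k_3$ to scalars, and the residual identity $g(\alpha_1 f_1,\alpha_2 f_2,d_3 f_3)=0$ lives entirely in the $x$-variables; to conclude $g\equiv 0$ you would need $f_1,f_2,f_3$ to be algebraically independent \emph{as polynomials in $x$ alone}. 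That is neither a hypothesis of the lemma nor true in general: in the intended application the $f_i$ are multilinear polynomials in very few variables (as few as two), so any three of them are automatically algebraically dependent. This is exactly why the lemma pairs each $f_i(x)$ with a copy $k_i(y)$ in a disjoint variable block --- the products $f_ik_i$ can be independent even when the $f_i$ alone are not --- and why any argument that evaluates the whole $y$-block at a point cannot close. Your ``main obstacle'' paragraph correctly identifies this as the crux, but the proposed remedy (bootstrapping three-way independence of the $f_i$ from the assumed two-way independence of $k_1,k_2$) cannot succeed, since no hypothesis constrains $f_1,f_2,f_3$ jointly.

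The paper avoids this by never specializing $y$. It passes to the quotient rings $R[x]/\langle q\rangle$ and then $(R[x]/\langle q\rangle)/\langle p\rangle$, using (a) to guarantee $[f_1],[f_2],[f_3]\neq 0$ after killing $f_4$, and (b) to guarantee $[[f_1]],[[f_2]]\neq 0$ after additionally killing $f_3$; the $k_i(y)$ survive as genuine polynomials throughout, and an irreducible factor $B_1$ of $A(z_1,z_2,z_3,0)$ is extracted along the way. Only at the very end are the residual $x$-variables specialized to constants, leaving $B_1(c_1k_1,c_2k_2,0)=0$, which contradicts precisely the two-way independence of $k_1,k_2$ that the lemma assumes. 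If you want to keep your ordering, you would have to carry out your second step as a quotient on the $y$-block (setting the factors $q[y/x]$ and then $p[y/x]$ of $k_4$ and $k_3$ to zero) rather than evaluating at a point, and arrange for the final contradiction to land on the independence of $k_1,k_2$ (equivalently of $f_1,f_2$, the same polynomials up to renaming) rather than on an unavailable three-way independence.
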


\begin{proof}

Suppose the contrary, that there exists an annihilating polynomial:
\[
   A(f_1k_1, \dots, f_4k_4) = 0
\]

From (b) we derive that there exists some value $a \in V(p,q)$ such that:
\[
    f_1[a/x] \ne 0, f_2[a/x] \ne 0, f_3[a/x] = f_4[a/x] = 0
\]

From (a) and (b) we derive that $V(q)$ is not included in $V(f_1) \cup
V(f_2) \cup V(f_3)$. Otherwise $V(q) \subseteq V(f_1f_2f_3)$ and by
Hilbert's Nullstellensatz: $(f_1f_2f_3)^m \in \langle q \rangle $, hence $q$ is a factor
of $(f_1f_2f_3)^m$, hence it is a factor of either $f_1$, $f_2$, or $f_3$,
violating either (b) or (a)  Thus, there exists a value $b \in V(q)$
such that:
\[
    f_1[b/x] \ne 0, f_2[b/x] \ne 0, f_3[b/x] \ne 0, f_4[b/x] = 0
\]

We claim that it is possible to choose $a$ and $b$ such that they are
consistent, in other words we claim that $p[b/x]$ has some free
variables (not set by $b$) such that we can obtain $a$ by setting those
variables to some constants.

This is easiest to see using the quotient construction.  If $R[x]$
denotes the ring of multivariate polynomials over $x$, then $R[x]/q(x)$ is
the quotient ring.

For any polynomial $f(x) \in R[x]$, its equivalence class is denoted
$[f(x)] \in R[x]/q(x)$.  Setting $q=0$ means, technically, replacing
every polynomial $f$ with [$f$].  Note that $[q(x)] = 0$, which implies
$[f_4]=0$, and we have $[f_1],[f_2],[f_3] \ne 0$, because $V(q)$ is not a
subset of $V(f_1f_2f_3)$.

For a polynomial $F(x,y) \in R[x,y]$, its equivalence class $[F(x,y)]$ is
obtained by writing $F$ as a sum of monomials:
\[
   F = sum_e C_e(x)  y^e
\]

Then $[F(x,y)] = sum_e [C_e(x)]  y^e$.  Therefore, $[f_1k_1]=[f_1]k_1$, and
likewise for $f_2,f_3,f_4$.

Denoting $B(z_1,z_2,z_3) = A(z_1,z_2,z_3,0)$, we cannot have $B \equiv 0$ because then
$A$ would be reducible.  Thus:
\begin{align*}
0 = & [A(f_1k_1, \dots, f_4k_4)]  \\
= & A([f_1]k_1, [f_2]k_2, [f_3]k_3, 0) \\
                          = & B([f_1]k_1, [f_2]k_2, [f_3]k_3) \\
                          =  & B1([f_1]k_1, [f_2]k_2, [f_3]k_3)
\end{align*}

Since $B([f_1]k_1,\dots)$ is identically $0$, there exists an irreducible
polynomial $B_1$ s.t. $B_1([f_1]k_1, \dots)$ is identically $0$.

Next, we set $[p]=0$.  Formally, we obtain this by constructing the
new quotient ring $(R[x]/\langle q \rangle)/\langle p \rangle $, and mapping every polynomial $[f]$ to
$[[f]]$.  We have $[[p]]=0$, hence $[[f_3]]=0$.  We claim that $[[f_1]], [[f_2]]
\ne 0$.  Indeed, suppose $[[f_1]]=0$, then $[f_1] \in \langle [p] \rangle $, which is
equivalent to $f_1 \in \langle p,q \rangle $, but this contradicts (b).  Therefore:
\begin{align*}
0 = & B_1([[f_1]]k_1, [[f_2]]k_2, [[f_3]]k_3) \\
  = & B_1([[f_1]]k_1, [[f_2]]k_2, 0)
\end{align*}

Since $[[f_1]], [[f_2]]$ are non-zero, we can substitute all their
variables with constants s.t. $[[f_1]] = c_1 \ne 0$, $[[f_2]] = c_2 \ne 0$:
\[
0 = B_1(c_1k_1, c_2k_2, 0)
\]

This is a contradiction, proving the claim.
\end{proof}

\end{document}